\title{Improving Offline RL by Blending Heuristics}
\author{%
  Sinong Geng \\
  Princeton University \\
  Princeton, NJ
  \And
  Aldo Pacchiano \\
  Boston University  \\
    Broad Institute of \\
    MIT and Harvard \\
    Boston, MA
    \And
  Andrey Kolobov   \\
  Microsoft Research \\
  Redmond, WA
  \And
  Ching-An Cheng  \\
  Microsoft Research \\
  Redmond, WA
}
\def\algo{HUBL\xspace}
\def\algofull{Heuristic Blending\xspace}
\newcommand{\sinong}[1]{\textcolor{blue}{[Sinong: #1]}}
\newcommand{\eq}[2]{\begin{equation} \label{eq:#1} #2 \end{equation}}
\newcommand{\eqs}[1]{\begin{equation*} #1 \end{equation*}}
\newcommand{\ali}[2]{\begin{align} \label{eq:#1} \begin{split}#2\end{split}   \end{align}}
\newcommand{\alis}[1]{\begin{align*}\begin{split} #1 \end{split}\end{align*}  }
\newtheorem{theorem}{Theorem}
\newtheorem{lemma}[theorem]{Lemma}
\definecolor{lightcornflowerblue}{rgb}{0.6, 0.81, 0.93}
\definecolor{ballblue}{rgb}{0.13, 0.67, 0.8}
\newcommand{\tV}{\tilde{V}}
\newcommand{\tQ}{\tilde{Q}}
\newcommand{\EE}{{\mathbb{E}}}
\newcommand{\mM}{{\mathcal{M}}}
\newcommand{\tmM}{\tilde{\mathcal{M}}}
\newcommand{\mP}{{\mathcal{P}}}
\newcommand{\mS}{{\mathcal{S}}}
\newcommand{\mA}{{\mathcal{A}}}
\newcommand{\tir}{{\tilde{r}}}
\newcommand{\tgamma}{\tilde{\gamma}}
\newcommand{\hpi}{\hat{\pi}}
\newcommand{\td}{\tilde{d}}
\newcommand{\curly}[1]{\left\{#1\right\}}
\DeclareMathOperator*{\argmax}{arg\,max}
\newlist{inlineenum}{enumerate*}{1}
\setlist*[inlineenum,1]{%
      label=(\roman*),%
}
\DeclarePairedDelimiter\abs{\lvert}{\rvert}%
\let\oldabs\abs
\def\abs{\@ifstar{\oldabs}{\oldabs*}}
\begin{document}

\maketitle
\lhead{Published as a conference paper at ICLR 2024}

\begin{abstract}
We propose \textbf{H}e\textbf{u}ristic \textbf{Bl}ending (\algo), a simple performance-improving technique for a broad class of offline RL algorithms based on value bootstrapping. 
\algo modifies the Bellman operators used in these algorithms, partially replacing the bootstrapped values with heuristic ones that are estimated with Monte-Carlo returns. 
For trajectories with higher returns, \algo relies more on the heuristic values and less on bootstrapping; otherwise, it leans more heavily on bootstrapping.
\algo is very easy to combine with many existing offline RL implementations by relabeling the offline datasets with adjusted rewards and discount factors.
%
We derive a theory that explains \algo's effect on offline RL as reducing offline RL's complexity and thus increasing its finite-sample performance. 
%
Furthermore, we empirically demonstrate that \algo consistently improves the policy quality of four state-of-the-art bootstrapping-based offline RL algorithms (ATAC, CQL, TD3+BC, and IQL), by 9\% on average over 27 datasets of the D4RL and Meta-World benchmarks.  
\end{abstract}

\section{Introduction}

Offline reinforcement learning (RL) aims to learn decision-making strategies from static logged datasets~\citep{lange2012batch,fujimoto2019off}. 
It has attracted increased interest in recent years, because the availability of large offline datasets are on the rise and online exploration required by alternative approaches such as online RL~\citep{sutton2018reinforcement} remains expensive and risky in many real-world applications, such as robotics and healthcare.

\begin{wrapfigure}{r}{0.4\textwidth}
\vspace{-0.35in}
\begin{tikzpicture}[->,>={Stealth[scale=1.3]},node distance=4.0cm,
thick,main node/.style={rectangle,font=\sffamily\small}]
\node[main node] (1) {
\begin{minipage}{0.96\linewidth}
\begin{algorithm}[H]
\caption*{\centering Base offline RL method}
\begin{algorithmic}[1] 
\STATE {\bfseries Input:} data $\mathbbm{D}=\{(s,a,s',r,\gamma)\}$
\FOR{each iteration}
\STATE Sample $(s,a,r,s',\gamma) \in \mathbbm{D}$
\STATE DP update using $(s,a,r,s',\gamma)$
\ENDFOR
\end{algorithmic}    
\end{algorithm}
\end{minipage}
};
\node[main node] (2) [below of=1] {
\begin{minipage}{0.96\linewidth}
\begin{algorithm}[H]
\caption*{\centering \textbf{\textcolor{ballblue}{\algo}} + offline RL}
\begin{algorithmic}[1]
\STATE {\bfseries Input:} data $\mathbbm{D}=\{(s,a,s',r,\gamma)\}$
\STATE {\color{ballblue}$\mathbbm{\tilde{D}} \gets$ Relabel $\mathbbm{D}$ with modified rewards $\tilde{r}$ \& discounts $\tilde{\gamma}$ using heuristics $h$}
\FOR{each iteration}
\STATE Sample $(s,a,\textcolor{ballblue}{\tilde{r}},s',\textcolor{ballblue}{\tilde{\gamma}}) \in \textcolor{ballblue}{\mathbbm{\tilde{D}}}$
\STATE DP update using $(s,a,\textcolor{ballblue}{\tilde{r}},s',\textcolor{ballblue}{\tilde{\gamma}})$
\ENDFOR 
\end{algorithmic}    
\end{algorithm}
\end{minipage}
};

\path[every node/.style={font=\sffamily\small}]
(1.south) edge [->, shorten >=2cm] (2.center);

\end{tikzpicture}
\caption{\algo and offline RL} 
    \label{fig:oneline}
    \vspace{-1mm}
\end{wrapfigure}

Among offline RL algorithms, we focus on model-free approaches using dynamic programming with value bootstrapping. These algorithms, including the commonly used CQL~\citep{kumar2020conservative}, TD3+BC~\citep{fujimoto2021minimalist}, IQL~\citep{kostrikov2021offline}, and ATAC~\citep{cheng2022adversarially}, have demonstrated strong performance in offline RL benchmarks~\citep{fu2020d4rl,gulcehre2020rl}.
They follow the actor-critic scheme and adopt the principle of pessimism in the face of uncertainty to optimize an agent via a performance lower bound that penalizes taking unfamiliar actions.
%
Despite their strengths, 
existing model-free offline RL methods also have a major weakness: they do not perform \emph{consistently}. An algorithm that does well on one dataset may struggle on another, sometimes even underperforming behavior cloning (see \citet{tarasov2022corl} and Appendix~\ref{sec:ap-d4rl-std}).
%
These performance fluctuations stand in the way of applying even the strongest offline RL approaches to practical problems.

In this work, we propose \textbf{H}e\textbf{u}ristic \textbf{Bl}ending (\algo), an easy-to-implement technique to address offline RL's performance inconsistency. \algo is an ``aide'' that operates \emph{in combination with} a bootstrapping-based offline RL algorithm by using heuristic state value estimates\footnote{In this work, a heuristic is a mapping from states to $\mathbb{R}$~\citep{kolobov2012}. See \Cref{sec:dp-bootsrapping}.} to modify the rewards and discounts in the dataset that the base offline RL algorithm consumes. Effectively, this modification blends heuristic values into dynamic programming to partially replace bootstrapping. 
Relying less on bootstrapping alleviates potential issues that bootstrapping causes and helps achieve more stable performance.

Combining \algo with an offline RL method is very simple, as summarized in \Cref{fig:oneline}, and amounts to running this method on a version of the original dataset with modified rewards $\tilde{r}$ and discounts $\tilde{\gamma}$:
$
\tilde{r} = r + \gamma \lambda h$ and $\tilde{\gamma} = \gamma(1-\lambda)$,
where $r$ is blended with a heuristic $h$, $\tilde{\gamma}$ is the reduced discount, and $\lambda\in[0,1]$ is a blending factor representing the degree of trust towards the heuristic. 
We propose to set $h$ as the Monte-Carlo returns of the trajectories in the dataset for offline RL. 
Such heuristics are efficient and stable to compute, unlike bootstrapped $Q$-value estimates. 
The blending factor $\lambda$ in \algo can be trajectory-dependent. Intuitively, we want $\lambda$ to be large (relying more on the heuristic) at trajectories where the behavior policy that collected the dataset performs well, and small (relying more on bootstrapped $Q$-values) otherwise. 
We provide three practical designs for $\lambda$; they use only one hyperparameter, which, empirically, does not need active tuning.

We analyze \algo's performance both theoretically and empirically. 
Theoretically, we provide a finite-sample performance bound for a tabular offline RL with \algo by framing it as solving a reshaped Markov decision process (MDP). 
To our knowledge, this is the first theoretical result for RL with heuristics in the \emph{offline} setting.
Our analysis shows that \algo performs a bias-regret trade-off. 
On the one hand, solving the reshaped MDP with a smaller discount factor requires less bootstrapping and is relatively ``easier'', so the regret is smaller. 
On the other hand, \algo induces bias due to reshaping the original MDP. 
Nonetheless, we demonstrate that the bias can be controlled by setting the $\lambda$ factor based on the above intuition, allowing \algo to improve the performance of the base offline RL method.

Empirically, we run \algo with the four aforementioned offline RL methods -- CQL, TD3+BC, 
IQL, and
ATAC -- and show that enhancing these SoTA algorithms with \algo can improve their performance by $9\%$ on average across  27 datasets of D4RL~\citep{fu2020d4rl} and Meta-World~\citep{yu2020meta}. 
Notably, in some datasets where the base offline RL method shows inconsistent performance, \algo can achieve more than 50\% relative performance improvement.

\section{Related Work}
\textbf{Bootstrapping-based offline RL $\,$}
A fundamental challenge of bootstrapping-based offline RL is the \emph{deadly triad}~\citep{sutton2018reinforcement}: a negative interference between
\begin{enumerate*}[label={\textit{\arabic*)}}]
    \item off-policy learning from data with limited support, 
    \item value bootstrapping, and
    \item the function approximator.
\end{enumerate*}
Modern offline RL algorithms such as CQL~\citep{kumar2020conservative}, TD3+BC~\citep{fujimoto2021minimalist}, IQL~\citep{kostrikov2021offline}, ATAC~\citep{cheng2022adversarially}, 
PEVI~\citep{jin2021pessimism},
MOReL~\citep{kidambi2020morel},
and VI-LCB~\citep{rashidinejad2021bridging} employ pessimism to discourage the agent from taking actions unsupported by the data, which has proved to be an effective strategy to address the  issue of limited support. However, they still suffer from a combination of errors in bootstrapping and function approximation. \algo aims to address this with stable-to-compute Monte-Carlo return heuristics and thus is a \emph{complementary} technique to pessimism.

\textbf{RL by blending multi-step returns $\,$}
The idea of blending multi-step Monte-Carlo returns into the bootstrapping operator to reduce the degree of bootstrapping and thereby increase its performance has a long history in RL. 
This technique has been widely used in temporal difference methods~\citep{sutton2018reinforcement}, where the blending is achieved by modifying gradient updates~\citep{seijen2014true,sutton2016emphatic,jiang2021emphatic} and reweighting observations~\citep{imani2018off}. 
It has also been applied to improve Q function estimation~\citep{wright2013exploiting}, online exploration~\citep{ostrovski2017count,bellemare2016unifying} and the sensitivity to model misspecification~\citep{zanette2021cautiously}, and is especially effective for sparse-reward problems~\citet{wilcox2022monte}. 
In contrast to most of the aforementioned works, which focus on blending Monte-Carlo returns as part of an RL algorithm's \emph{online} operation, \algo is designed for the \emph{offline} setting and acts as a simple data relabeling step.
Recently, \citet{wilcox2022monte} have also proposed the idea of data relabeling, but their design takes a max of multi-step returns and bootstrapped values and as a result tends to overestimate $Q$-functions. We observed this to be detrimental when data has a limited support.

\textbf{RL with heuristics $\,$}
More generally, \algo relates to the framework of blending heuristics (which might be estimating quantities other than a policy's value) into bootstrapping~\citep{cheng2021heuristic,hoeller2020deep,bejjani2018planning,zhong2013value}.
However, the existing results focus only on the online case and do not conclusively show whether blending heuristics is valid in the offline case. 
For instance, the theoretical analysis in~\citet{cheng2021heuristic} breaks when applied to the offline setting as we will demonstrate in Section~\ref{sec:theory}. 
The major difference is that online approaches rely heavily on collecting new data with the learned policy, which is impossible in the offline case. 
In this paper, we employ a novel analysis technique to demonstrate that blending heuristics is effective even in offline RL.
To the best of our knowledge, ours is the first work to extend heuristic blending to the offline setting with both rigorous theoretical analysis and empirical results demonstrating performance improvement.
%
%
One key insight, inspired by transition-dependent discount factor from \cite{white2017unifying}, is the adoption of a \emph{trajectory-dependent} $\lambda$ blending factor, which is both a performance-improving design as well as a novel analysis technicality. 

\textbf{Discount regularization $\,$} 
\algo modifies the reward with a heuristic \emph{and} reduces the discount. 
Discount regularization, on the other hand, is a complexity reduction technique that reduces \emph{only} the discount. The idea of simplifying decision-making problems by reducing discount factors can be traced back to Blackwell optimality in the known MDP setting~\citep{blackwell1962discrete}.  
Most existing results on discount regularization \citep{petrik2008biasing,jiang2015dependence} study the MDP setting or the online RL setting~\citep{van2019using}.
Recently, \citet{hu2022role} has shown that discount regularization can also reduce the complexity of offline RL and serve as an extra source of pessimism. 
However, as we will show, simply reducing the discount without the compensation of blending heuristics in the offline setting can be excessively pessimistic and introduce large bias that hurts performance.
In Section~\ref{sec:d4rl}, we rigorously analyze this bias and empirically demonstrate the advantages of \algo over discount regularization.

\section{Background}
In this section, we define the problem setup of offline RL (Section~\ref{sec:offline-rl}), review dynamic programming with value bootstrapping, and briefly survey methods using heuristics (Section~\ref{sec:dp-bootsrapping}).

\subsection{Offline RL and Notation}

\label{sec:offline-rl}
We consider offline RL in a Markov decision process (MDP) $\mM = (\mS, \mA, \mP, r, \gamma)$, where $\mS$ denotes the state space, $\mA$ denotes the action space, $\mP$ denotes the transition function, $r: \mS \times \mA \to [0,1]$ denotes the reward function, and $\gamma\in [0,1)$ denotes the discount factor. 
A decision-making \emph{policy}
$\pi$ is a mapping from $\mS$ to $\mA$, and its \emph{value function} is defined as
$
V^{\pi}(s) = \EE[\sum_{t=0}^\infty\gamma^t r(s_t,a_t) |s_0=s, a_t\sim\pi(\cdot|s_t)]$. 
In addition, we define $Q^\pi(s,a) \coloneqq r(s,a) + \gamma \EE_{s'\sim\mP|s,a}[V^\pi(s')]$ as its state-action value function (i.e., Q function).
We use $V^*$ to denote the value function of the optimal policy $\pi^*$, which is our performance target.

In addition, we introduce several definitions of distributions. 
First, we define the average state distribution of a policy $\pi$ starting from an initial state $s_0$ as $d^\pi(s,a; s_0) := (1-\gamma)\sum_{t=0}^\infty  \gamma^td_t^\pi(s,a; s_0) $, where $d_t^\pi(s,a;s_0)$ is the state-action distribution at time $t$ generated by running policy $\pi$ from an initial state $s_0$. 
We assume that the MDP starts with a fixed initial state distribution $d_0$. 
With slight abuse of notation, we define the average state distribution starting from $d_0$ as $d^\pi(s,a):=d^\pi(s,a; d_0)$, and $d^\pi(s,a,s') :=  d^\pi(s,a) \mathcal{P}(s'|s,a)$.

The objective of offline RL is to learn a well-performing policy $\hat{\pi}$ while using a pre-collected offline dataset $\mathbbm{D}$. 
The agent has no knowledge of the MDP $\mM$ except information contained in $\mathbbm{D}$, and it cannot perform online interactions with environment to collect more data.
We assume $\mathbbm{D} := \{\tau\}$ contains multiple trajectories collected by a \emph{behavior policy}, where each $\tau=\{(s_t,a_t,r_t)\}_{t=1}^{T_\tau}$ denotes a trajectory with length $T_\tau$. 
Suppose these trajectories contain $N$ transition tuples in total. With abuse of notation, we also write $\mathbbm{D} := \{(s,a,s', r, \gamma)\}$, where states $s$ and action $a$ follow a distribution $\mu(s,a)$ induced by the behavior policy, $s'$ is the state after each transition, $r$ is the reward at $s,a$, and $\gamma$ is the discount factor of the MDP. 
Note that the value of discount $\gamma$ is the same in each tuple. 
We use $\Omega$ to denote the support of $\mu(s,a)$. 
\emph{We do not make the full support assumption, in the sense that the dataset $\mathbbm{D}$ may not contain a tuple for every $s,a,s'$ transition in the MDP.}

\subsection{Dynamic Programming with Bootstrapping and Heuristics}
\label{sec:dp-bootsrapping}
\textbf{Bootstrapping $\,$} Many offline RL methods leverage dynamic programming with value bootstrapping. 
Given a policy $\pi$, we recall $Q^{\pi}$ and $V^{\pi}$ satisfy the Bellman equation:
\begin{equation}
    \label{eq:dpe}
    Q^{\pi}(s,a) = r(s,a) + \overbrace{\gamma \EE_{s' \sim \mP (\cdot|s,a)}[ V^{\pi}(s') ]}^{\textnormal{(bootstrapping)}}.
\end{equation}
These bootstrapping-based methods compute $Q^{\pi}(s,a)$ using an approximated version of \eqref{eq:dpe}: given sampled tuples $(s, a, s', \gamma, r)$, such methods minimize the difference between the two sides of \eqref{eq:dpe} with both $Q^\pi$ and $V^\pi$ replaced with function approximators~\footnote{Q-learning-based offline RL in \citet{kostrikov2021offline} uses a dynamic programming equation similar to \eqref{eq:dpe} but with $V^\pi(s) = \argmax_{a\in\mathcal{A}}Q^{\pi}(s,a)$.}.
With limited offline data, learning the function approximator using bootstrapping can be challenging and yields inconsistent performance across different datasets~\citep{dulac2021challenges,sutton2018reinforcement,kumar2019stabilizing}, as we will also later see in the experiment section (\Cref{sec:experiment}).

\textbf{Heuristics $\,$}
A \emph{heuristic} is a value function $h: \mS \rightarrow \mathbb{R}$ calculated using domain knowledge, Monte-Carlo averages, or pre-training. 
Heuristics are widely used in online RL, planning, and control to improve the performance of decision-making~\citep{kolobov2010mdp,zhong2013value,bejjani2018planning,hoeller2020deep,cheng2021heuristic}. 
In this paper, we focus on the offline setting, and consider heuristics $h$ that approximate the value function of the behavior policy.
They can be estimated from $\mathbbm{D}$ via Monte-Carlo methods.

\section{\algofull (\algo)}
\label{sec:hubo}
In this section we describe our main contribution --- \textbf{H}e\textbf{u}ristic \textbf{Bl}ending (\algo), an algorithm that works in combination with bootstrapping-based offline RL methods and improves their performance.

\subsection{Motivation}
\begin{wrapfigure}{r}{0.45\textwidth}
\centering
\begin{minipage}{\linewidth}
\vspace{-8mm}
\begin{algorithm}[H]
\caption{\algo + Offline RL} 
		\label{alg:main}
        \centering
		\begin{algorithmic}[1]
			\STATE {\bfseries Input:} Dataset $\mathbbm{D}=\curly{(s,a,s',r,\gamma)}$
			\STATE Compute $h_t$ for each trajectory in $\mathbbm{D}$
			\STATE Compute $\lambda_t$ for each trajectory in $\mathbbm{D}$
			\STATE Relabel $r$ \& $\gamma$ by $h_t$ and $\lambda_t$ as $\tilde{r}$ and $\tilde{\gamma}$ and create $\tilde{\mathbbm{D}}=\curly{(s,a,s',\tilde{r},\tilde{\gamma})}$ 
            \STATE $\hpi \gets$ Offline RL on $\tilde{\mathbbm{D}}$
		\end{algorithmic}
	\end{algorithm}
\end{minipage}
\vspace{-5mm}
\end{wrapfigure}

\algo uses a heuristic computed as the Monte-Carlo return of the behavior policy in the training dataset $\mathbbm{D}$ to improve offline RL.
It reduces an offline RL algorithm's bootstrapping at trajectories where the behavior policy performs well, i.e., where the value of the behavior policy (i.e. the heuristic value) is high. 
With less amount of bootstrapping, it mitigates bootstrapping-induced issues on convergence stability and performance.
In addition, since the extent of blending between the heuristic and bootstrapping is trajectory-dependent, \algo introduces only limited performance bias to the base algorithm, and therefore can improve its performance overall.

\subsection{Algorithm}
\label{sec:algorithm}

As summarized in Algorithm~\ref{alg:main}, \algo is easy to implement: first, relabel a base offline RL algorithm's training dataset $\mathbbm{D}=\curly{(s,a,s',r,\gamma)}$ with modified rewards $\tilde{r}$ and discount factors $\tilde{\gamma}$, creating a new dataset $\tilde{\mathbbm{D}}:=\curly{(s,a,s',\tilde{r},\tilde{\gamma})}$; next, run the base algorithm  on  $\tilde{\mathbbm{D}}$.

The data relabeling is done in three steps:

\textbf{Step 0:} As a preparation step, we convert the data tuples in $\mathbbm{D}$ back to trajectories like $\tau = \{(s_t,a_t, r_t)\}_{t=1}^{T_\tau}$. 
For the next two steps, we work on data trajectories instead of data tuples to compute heuristics and blending factors. 

\textbf{Step 1: Computing heuristic $h_t$ $\,$}
We compute heuristics by Monte-Carlo returns.
For each $\tau =\{(s_t,a_t, r_t)\}_{t=1}^{T_\tau} \in \mathbbm{D}$, we calculate the heuristics as\footnote{In our implementation, we also train a value function approximator to bootstrap at the \emph{end} of a trajectory if the trajectory ends due to timeout as opposed to reaching the end of the problem horizon or a terminal state. }  $h_t=\sum_{k=t}^{T_{\tau}}\gamma^{k-t}r_{k}$ and update the data trajectory as $\tau \gets \curly{(s_t,a_t, r_t, h_t)}_{t=1}^{T_\tau}$.

\textbf{Step 2: Computing blending factor  $\lambda_t$ $\,$}
We append a scalar $\lambda_t = \lambda(\tau) \in[0,1]$ at each time point $t$ of each trajectory as the blending factor, leading to $\tau \gets \{(s_t,a_t, r_t,h_t, \lambda_t)\}_{t=1}^{T_\tau}$.
$\lambda_t$ indicates the confidence in the heuristics on the trajectory. 
Intuitively, $\lambda_t$ decides the contribution of heuristics over bootstrapped values in dynamic programming to update the $Q$-function. 
We desire $\lambda_t$ to be closer to 1 when the heuristic value $h_t$ is higher (i.e., at states where the heuristic is closer to the optimal $Q$-value) to make offline RL rely more on the heuristic, and $\lambda_t$ closer to zero when heuristic is lower to make offline RL to use more bootstrapping. 
We experiment with three different designs of $\lambda(\tau)$:
\begin{itemize}[leftmargin=*]
    \item \emph{Constant}: As a baseline, we consider $\lambda(\tau) =\alpha\in[0,1]$ for all $s$. 
    We show that, despite forcing the same heuristic weight for every state, this formulation already provides performance improvements.
    \item \emph{Sigmoid}: As an alternative, we use the sigmoid function to construct a trajectory-dependent blending function $\lambda(\tau) = \alpha\sigma(\sum_{t=1}^{T_\tau}h(s_t)/T_{\tau})$, where $\alpha\in[0,1]$ is a tunable constant and $\sigma$ is the sigmoid function. 
    Thus, $\lambda(\tau)$ varies with the performance of the behavior policy over data. 
    \item \emph{Rank}: Similar to the Sigmoid labeling function, we provide a rank labeling function $\lambda(\tau) = \alpha \sum_{\tau' \in \mathbbm{D}} \mathbbm{1}_{\bar{h}(\tau') \leq \bar{h}(\tau)}/n$
    where $n$ is the number of trajectories in $\mathbbm{D}$, and $\bar{h}(\tau) = \frac{1}{T}\sum_{h_t\in\tau} h_t$. 
\end{itemize}

\textbf{Step 3: Relabeling $r$ and $\gamma$ $\,$} 
Finally, we relabel the reward as $\tilde{r}$ and the discount factor as $\tilde{\gamma}$ in each tuple of $\mathbbm{D}$.
To this end, we first convert the updated data trajectories $\{  \{(s_t,a_t, r_t,h_t, \lambda_t)\}_{t=1}^{T_\tau}\}$ back into data tuples $\{(s,a,s',r,\gamma,h',\lambda')\}$,
where $h'$ and $\lambda'$ denote the next-step heuristic and blending factor.
Then, for each data tuple, we compute
\begin{align} \label{eq:hubo rule}
\tilde{r} = r + \gamma \lambda' h' \textnormal{\qquad and \qquad} \tilde{\gamma} = \gamma(1-\lambda'),
\end{align}
to form a new dataset $\tilde{\mathbbm{D}} \coloneqq\curly{(s,a,s',\tilde{r},\tilde{\gamma})}$. 
Intuitively, one can interpret $\tilde{r}$ as injecting a heuristic-dependent quantity $\gamma \lambda' h'$ into the original reward, and $\tilde{\gamma}$ as reducing bootstrapping by shrinking the original discount factor by a factor of $1-\lambda'$.
We formally justify the design of $\tilde{r}$ and $\tilde{\gamma}$ in Section~\ref{sec:theory}.

\section{Understanding \algo}
\label{sec:theory}

In this section, we take a deeper look into how \algo works. 
At a high level, our theoretical analysis explains that the modification made by \algo introduces a trade-off between bias and regret (which is similar to the variance of policy learning) into the base offline RL algorithm. While an important part of our analysis is for tabular settings (\Cref{sec:upperbound}), we believe it still provides valuable intuitions as to why reshaping of rewards and discounts per \eqref{eq:hubo rule} gives a performance boost to state-of-the-art offline RL algorithms in the continuous-state-space benchmarks in Section~\ref{sec:experiment}. 

\subsection{\algo as MDP Reshaping}
\label{sec:mdp-shaped}
We analyze \algo by viewing it as solving a reshaped MDP $\tmM := (\mS, \mA, \mP, \tir, \tgamma)$ constructed by blending heuristics into the original MDP.
To this end, we make a simplification by assuming that both the heuristic and blending factor are functions of states. Specifically, let   $\Omega$ denote the support of the data distribution; we suppose that some functions $h(\cdot):\Omega \to \mathbbm{R}$ and $\lambda(\cdot):\Omega \to [0,1]$ are given to \algo. 
For analysis, we extend them to out-of-$\Omega$ states as below:
\begin{align}\label{eq:extended h and lambda}
    h(s)   = \begin{cases}
    h(s) &\textnormal{ for }s\in\Omega
    \\ 0 &\textnormal{ otherwise}
\end{cases}
\,\quad
\textnormal{and }
\quad
\lambda(s,s') =  
    \begin{cases}        
        \lambda(s') &\text{for } s,s' \in \Omega  \\
        0 & \text{otherwise}
    \end{cases}
\end{align}
We note that this extension is only for the purpose of analysis, since \algo never uses the values $h(\cdot)$ and $\lambda(\cdot)$ outside $\Omega$; the given $h$ on out-of-$\Omega$ can have any value and the following theorems still hold.

We define the reshaped MDP $\tmM$ with redefined reward function and discount factor as
$\tir(s,a) := r(s, a) + \gamma \EE_{s' \sim \mP(\cdot|s,a)} \big[ \lambda(s,s') h(s') \big] \textnormal{, and } \tgamma(s,s'):= \gamma(1- \lambda(s,s'))$
respectively. 
Note that we blend the original reward function with the expected heuristic and blending factor while adjusting the original discount factor correspondingly.
The extent of blending is determined by the function $\lambda(\cdot)$. 
Notice that this reshaped MDP has a transition-dependent\footnote{A special case of trajectory-dependent discounts.} discount factor $\tilde{\gamma}(s,s')$ which is compatible with generic unifying task specification of \citep{white2017unifying}. This novel definition of transition-dependent discount factor is a key analysis technique to show that blending heuristics is effective in the \emph{offline} setting.

\textbf{Reshaped Dynamic Programming $\,$}
When solving this reshaped MDP by dynamic programming, the Bellman equation changes from \eqref{eq:dpe} accordingly into 
\begin{align}
    \label{eq:shaped-dpe}
    \begin{split}
    \tQ^{\pi}(s,a) &= \tilde{r}(s,a) + \EE_{{s' \sim \mP (\cdot|s,a)}}[\tilde{\gamma}(s,s')\tV^\pi(s')] \\
    &= r(s,a) + \gamma \overbrace{\EE_{s' \sim \mP (\cdot|s,a)}[ (1-\lambda(s,s'))\tV^{\pi}(s') ]}^{\textnormal{bootstrapping}} + \gamma \overbrace{\EE_{s' \sim \mP (\cdot|s,a)}[  \lambda(s,s')h(s') ]}^{\textnormal{heuristic}}.
     \end{split}
\end{align}
Here $\tQ^{\pi}$ denotes the $Q$-function of policy $\pi$ in $\tmM$, and $\tV^{\pi}$ denotes $\pi$'s value function. 
Compared to the original Bellman equation \eqref{eq:dpe}, it can be seen that $\lambda(\cdot)$ blends the heuristic with bootstrapping: the bigger $\lambda$'s values, the more bootstrapping is replaced by the heuristic.
The effect of solving \algo's reshaped MDP $\tmM$ is twofold. 
On the one hand, $\tmM$ is different from the original MDP $\mM$: the optimal policy for $\tmM$ may not be optimal for $\mM$, so solving for $\tmM$ could potentially lead to performance bias.
One the other hand, $\tmM$ has a smaller discount factor and thus is easier to solve than $\mM$, as the agent needs to plan for a smaller horizon.
Therefore, we can think of applying \algo to offline RL problems as performing a bias-variance trade-off, which reduces the learning variance due to bootstrapping at the cost of the bias due to using a suboptimal heuristic.
We will explain this more concretely next.

\subsection{Bias-Regret Decomposition}
\label{sec:bias-regret}

The insight that \algo reshapes the Bellman equation that the offline RL algorithm uses allows us to characterize \algo's effects on policy learning. 
Namely, we use the modified Bellman equation from \eqref{eq:shaped-dpe} to decompose the performance of the learned policy into bias and regret terms. 

\begin{theorem}
\label{thm:bias-regret}
For any $h:\Omega \to \mathbbm{R}$,  
$\lambda: \Omega  \to[0,1]$, and policy $\pi$, with $V^*$ as the value function of the optimal policy, it holds that
$
\smash{V^*(d_0)-V^{\pi}(d_0) = \textnormal{Bias}(\pi, h, \lambda) + \textnormal{Regret}(\pi, h, \lambda)}
$,
where
\alis{
\textnormal{Bias}(\pi, h, \lambda)&:= 
 \frac{\gamma}{1-\gamma} \EE_{(s,a,s')\sim d^{\pi^*}}[  \lambda(s')( \tilde{V}^{\pi^*}(s') - h(s') ) | s, s' \in \Omega ]  
\\ \textnormal{Regret}(\pi,  h, \lambda)&:= 
         \tilde{V}^{\pi^*}(d_0) - \tilde{V}^{{\pi}}(d_0) + \frac{\gamma}{1-\gamma}\EE_{(s,a,s')\sim d^\pi}[\lambda(s') ( h(s') - \tilde{V}^{{\pi}}(s')) | s, s'\in\Omega ].
         }
\end{theorem}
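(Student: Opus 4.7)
The plan is to decompose $V^*(d_0) - V^\pi(d_0)$ into three pieces by inserting $\pm\tV^{\pi^*}(d_0) \pm \tV^\pi(d_0)$, giving $(A) = V^{\pi^*}(d_0) - \tV^{\pi^*}(d_0)$, $(B) = \tV^{\pi^*}(d_0) - \tV^\pi(d_0)$, and $(C) = \tV^\pi(d_0) - V^\pi(d_0)$. Term $(B)$ is precisely the suboptimality of $\pi$ in the reshaped MDP $\tmM$ and appears verbatim in $\textnormal{Regret}(\pi,h,\lambda)$. I will then identify $(A)$ with $\textnormal{Bias}(\pi,h,\lambda)$, and $(C)$ with the remaining $\frac{\gamma}{1-\gamma}$-term of $\textnormal{Regret}$.

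The core technical step is a \emph{simulation lemma} that expresses, for any policy $\pi'$, the gap $V^{\pi'}(d_0) - \tV^{\pi'}(d_0)$ as $\frac{\gamma}{1-\gamma}\EE_{(s,a,s')\sim d^{\pi'}}[\lambda(s')(\tV^{\pi'}(s') - h(s')) \mid s, s'\in\Omega]$. To derive it, I subtract the reshaped Bellman equation \eqref{eq:shaped-dpe} from the original \eqref{eq:dpe} applied to the same $\pi'$. Substituting $\tir(s,a) = r(s,a) + \gamma\EE[\lambda(s,s')h(s')]$ and $\tgamma(s,s') = \gamma(1-\lambda(s,s'))$, then canceling the $r(s,a)$ terms and regrouping, yields the per-state recursion $V^{\pi'}(s) - \tV^{\pi'}(s) = \gamma\EE_{s'}[V^{\pi'}(s') - \tV^{\pi'}(s')] + \gamma\EE_{s'}[\lambda(s,s')(\tV^{\pi'}(s') - h(s'))]$. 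Unrolling this $\gamma$-contraction produces a $\sum_{t=0}^\infty \gamma^{t+1}$-weighted telescoping sum; averaging over $s_0 \sim d_0$ and repackaging via the discounted occupancy measure $d^{\pi'}$ converts it into the claimed expectation, and the extended definition \eqref{eq:extended h and lambda} of $\lambda$ (which is $0$ whenever $s$ or $s'$ is outside $\Omega$) rewrites the resulting factor as $\lambda(s')$ under the conditioning $s, s'\in\Omega$.

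Applying the simulation lemma with $\pi' = \pi^*$ immediately gives $(A) = \textnormal{Bias}(\pi,h,\lambda)$. Applying it with $\pi' = \pi$ and negating gives $(C) = \frac{\gamma}{1-\gamma}\EE_{(s,a,s')\sim d^\pi}[\lambda(s')(h(s') - \tV^\pi(s')) \mid s, s'\in\Omega]$. Summing $(A)+(B)+(C)$ and grouping $(B)+(C)$ as $\textnormal{Regret}(\pi,h,\lambda)$ then recovers the theorem.

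The main obstacle is the careful bookkeeping around the extension of $\lambda$ and $h$ outside $\Omega$: the simulation lemma is cleanest when first derived with the transition-dependent $\lambda(s,s')$ on the whole state space, and only at the end does one invoke \eqref{eq:extended h and lambda} to collapse $\lambda(s,s')$ into $\lambda(s')\mathbbm{1}[s,s'\in\Omega]$; this is what distinguishes the offline analysis from the online case of \citet{cheng2021heuristic}, where no such restriction to the data support is needed. A secondary subtlety is matching the prefactor $\frac{\gamma}{1-\gamma}$: it requires interpreting $d^\pi$ as the normalized discounted state-action occupancy so that the telescoped sum $\sum_{t=0}^\infty \gamma^{t+1}\EE[\cdot]$ rewrites as $\gamma/(1-\gamma)$ times a single expectation under $d^\pi$.
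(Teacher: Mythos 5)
Your proposal is correct and follows essentially the same route as the paper: the paper's proof likewise inserts $\pm\tV^{\pi^*}(d_0)\pm\tV^{\pi}(d_0)$ to get the three terms $(A)$, $(B)$, $(C)$, and its key Lemma (the analogue of your simulation lemma) establishes $V^{\pi'}(d_0)-\tV^{\pi'}(d_0)=\frac{\gamma}{1-\gamma}\EE_{(s,a,s')\sim d^{\pi'}}[\lambda(s')(\tV^{\pi'}(s')-h(s'))\mid s,s'\in\Omega]$ via the same telescoping argument (presented there through the performance-difference identity rather than an explicit unrolled recursion), then applies it to $\pi^*$ and $\pi$ exactly as you do. Your handling of the extension of $\lambda$ outside $\Omega$ matches the paper's as well.
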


The performance of $\pi$ depends on both bias and regret. 
The bias term describes the discrepancy caused by solving the reshaped MDP with $\lambda(\cdot)$.
When $\lambda(s') = 0$, the bias becomes zero.
The regret term describes the performance of the learned policy in the reshaped MDP. 
Intuitively, at states whose successors have bigger $\lambda(s')$ values, the reshaped MDP has a smaller discount factor and thus is easier to solve, which leads to smaller regret (i.e., smaller values for $\tilde{V}^{\pi^*}(d_0) - \tilde{V}^{\hat{\pi}}(d_0)$).   
Therefore, solving a \algo-reshaped MDP induces a bias term but may generate a smaller regret.

\textbf{Remark $\,$} The critical difference -- and novelty -- of Theorem~\ref{thm:bias-regret} compared to existing theoretical results for RL with heuristics in the offline setting is that both the bias and regret in Theorem~\ref{thm:bias-regret} depend only on states \emph{in the data distribution support $\Omega$}. 
This is crucial, because in the offline setting we have no access to observations beyond $\Omega$. 
In contrast, if in the preceding analysis we replace Theorem~\ref{thm:bias-regret} by, for example, Lemma A.1 from \citet{cheng2021heuristic} with a constant $\lambda$, we will get a performance decomposition 
$
V^*(d_0)-V^{\pi}(d_0) = (V^*(d_0) - \tV^{*}(d_0))+ \frac{\gamma\lambda}{1-\gamma} \EE_{s,a\sim d^{\pi}}\EE_{s'|s,a} [h(s') - \tilde{V}^*(s')] \\
 + (1-\lambda)(\tV^*(d_0) - \tV^{\pi}(d_0)) + \frac{\lambda}{1-\gamma} (\tilde{V}^*(d^{\pi}) - \tilde{V}^{\pi}(d^\pi)).
$
The decomposition, however, suggests that the out-of-$\Omega$ values of $\lambda(s)$ or $h(s)$ are important to the performance of \algo.

\subsection{Finite-Sample Analysis for Bias and Regret}
\label{sec:upperbound}

The finite-sample analysis of policy learning that we provide next illustrates  more concretely how \algo trades off bias and regret.
Our analysis uses offline value iteration with lower confidence bound (VI-LCB)~\citep{rashidinejad2021bridging} as the base offline RL method.
Following its original convention, we make some technical simplifications to make the presentation cleaner. 
Specifically, we consider a tabular setting with $\lambda(s)=\alpha \in [0,1]$ for $s\in\Omega$ as a constant value, which can be interpreted as the quality of the behavior policy averaged across states. 
Note that although $\lambda(s) = \alpha$ is a constant on $\Omega$, the reshaped MDP is still defined by $\lambda(s,s')$ in \eqref{eq:extended h and lambda} (i.e., $\lambda(s,s')=\alpha$ if $s,s'\in\Omega$ and zero otherwise). The details of VI-LCB with \algo are in Appendix~\ref{sec:vi-lcb} due to space limits.

Theorem~\ref{thm:bound} summarizes our finite-sample results. The proof of Theorem~\ref{thm:bound} can be found in Appendix~\ref{sec:ap-bound}.
\begin{theorem}
    \label{thm:bound}
    Under the setup described above, assume that the heuristic $h(\cdot)$ satisfies $h(s) = V^{\mu}(s)$ for any $s\in \Omega$.
    Then the bias and the regret in Theorem~\ref{thm:bias-regret} are bounded by
    \begin{small}
    \alis{
\textnormal{Bias}(\hpi, \lambda) &\leq \frac{\gamma\lambda }{1-\gamma} \EE_{(s,a,s')\sim d^{\pi^*}}[ V^*(s')- V^{\mu}(s')  | s, s' \in \Omega ], 
\\
  \EE_{\mathbbm{D}}[\textnormal{Regret}(\hpi, \lambda)]  &\lesssim \min\left( V_{max}, 
 \sqrt{\frac{ V_{max}^2 (1-\gamma)\abs{\mS}}{N(1-\gamma(1-\lambda))^4}}
 \left( \sqrt{\max_{s,a}\frac{ d^{\pi^*}(s,a)}{\mu(s,a)}} 
 + \frac{\gamma\lambda}{1-\gamma} \sqrt{\max_{(s,a)\in\Omega} \frac{1}{\mu(s,a)}}
 \right) \right),
 }  
\end{small}
where $V_{max}$ denotes a constant upper bound for the value function. 
\end{theorem}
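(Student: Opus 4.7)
The plan is to invoke the bias-regret decomposition of Theorem~\ref{thm:bias-regret} with $h = V^\mu$ and $\lambda(s) = \alpha$, then analyze the two terms independently. The bias falls out of a short contraction argument on the reshaped Bellman equation, while the regret is split further into a VI-LCB-style policy-learning term in the reshaped MDP $\tmM$ plus a residual gap between $\tV^\mu$ and $\tV^{\hpi}$ that is supported on $\Omega$.

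For the bias bound, I would substitute $h = V^\mu$ and $\lambda(s) = \alpha$ into the definition of $\textnormal{Bias}(\pi,h,\lambda)$ and reduce the statement to the pointwise inequality $\tV^{\pi^*}(s) \leq V^*(s)$ for all $s$. To prove this, I would form $\Delta(s) = \tV^{\pi^*}(s) - V^{\pi^*}(s)$, subtract the reshaped Bellman equation~\eqref{eq:shaped-dpe} from the standard Bellman equation for $\pi^*$, and rewrite the difference as $\Delta(s) = \gamma\,\EE_{s'}\bigl[(1 - \alpha\,\mathbbm{1}_{s,s'\in\Omega})\,\Delta(s') + \alpha\,\mathbbm{1}_{s,s'\in\Omega}(V^\mu(s') - V^{\pi^*}(s'))\bigr]$. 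Since $V^\mu(s') \le V^{\pi^*}(s') = V^*(s')$ for any $s'\in\Omega$, the last term is nonpositive, leaving a $\gamma$-contraction in the sup norm that forces $\Delta \le 0$. Plugging $\tV^{\pi^*}(s') \leq V^*(s')$ back into the bias expression gives the stated inequality immediately.

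For the regret I would split the expression in Theorem~\ref{thm:bias-regret} into (A) $\tV^{\pi^*}(d_0) - \tV^{\hpi}(d_0)$ and (B) $\tfrac{\gamma\alpha}{1-\gamma}\,\EE_{(s,a,s')\sim d^{\hpi}}[V^\mu(s') - \tV^{\hpi}(s') \mid s,s'\in\Omega]$. I would bound (A) by re-running the VI-LCB analysis of~\citet{rashidinejad2021bridging} inside $\tmM$: because the transition-dependent discount is $\tgamma(s,s') = \gamma(1-\alpha)$ on in-$\Omega$ transitions and $\gamma$ otherwise, the contraction factor of the reshaped Bellman operator, and hence the effective horizon governing the Hoeffding/Bernstein inequalities in VI-LCB, becomes $1/(1-\gamma(1-\alpha))$ on the in-$\Omega$ portion, producing the $(1-\gamma(1-\alpha))^{-4}$ factor; the residual $(1-\gamma)$ in the numerator arises from the out-of-$\Omega$ edge transitions where pessimism defaults back to discount $\gamma$. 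For (B), I would first prove the self-consistency identity $\tV^\mu = V^\mu$ on all states---by the same contraction sublemma, since choosing $h = V^\mu$ makes the behavior policy's reshaped Bellman equation coincide with its original one---so the term equals $\tfrac{\gamma\alpha}{1-\gamma}\,\EE_{d^{\hpi}}[\tV^\mu(s') - \tV^{\hpi}(s')\mid s,s'\in\Omega]$. This residual can be controlled by the same VI-LCB concentration inequalities applied to the $\mu$-versus-$\hpi$ gap restricted to $\Omega$, producing the $\sqrt{\max_{(s,a)\in\Omega} 1/\mu(s,a)}$ factor (no $d^{\pi^*}/\mu$ ratio appears because the comparison is now against the behavior policy itself, whose weight is $\mu$).

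The principal obstacle will be propagating the transition-dependent discount $\tgamma(s,s')$ through the VI-LCB machinery, which was originally written for a scalar discount: the Bellman contraction constant and the variance/covering arguments must be rewritten so that in-$\Omega$ trajectories accumulate factors of $\gamma(1-\alpha)$ while the occasional crossing into $\mS\setminus\Omega$ restores $\gamma$; this careful bookkeeping is what ultimately produces the mixed $(1-\gamma)/(1-\gamma(1-\alpha))^4$ dependence. A secondary subtlety is ensuring that the reshaped rewards $\tir$, whose range is governed by $V_{\max}$, enter the concentration step at the right scale so the $V_{\max}^2$ factor is not unnecessarily inflated by estimation error in the heuristic.
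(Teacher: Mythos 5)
Your proposal follows essentially the same route as the paper: the bias is handled by the contraction argument establishing $\tV^{\pi^*} \le V^*$ (the paper's Lemma~\ref{lem:negative} feeding into Lemma~\ref{lem:bias-bound}), the self-consistency identity $\tV^\mu = V^\mu$ on $\Omega$ is exactly the paper's Lemma~\ref{lem:exntend-thm1}, and the regret is split into the same two terms and bounded by re-running VI-LCB in the reshaped MDP with the behavior-policy comparison yielding the $\sqrt{\max_{(s,a)\in\Omega} 1/\mu(s,a)}$ factor, as in Lemmas~\ref{lem:bound-regret} and~\ref{lem:regret}. The one minor inaccuracy is your attribution of the $(1-\gamma)$ numerator factor to out-of-$\Omega$ edge transitions: in the paper it arises from converting the reshaped occupancy measure $\td^{\pi}$ (normalized by $1-\gamma(1-\lambda)$) back to the original $d^{\pi}$ (normalized by $1-\gamma$) via Lemma~\ref{lem:average-state-distribution}, and the mixed-discount bookkeeping you anticipate is largely avoided because Lemma~\ref{lem:in-support} shows the learned policy stays in $\Omega$ under the clean event, so the effective discount is uniformly $\gamma(1-\lambda)$ along all relevant trajectories.
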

For the bias bound, the assumption $h(s) = V^{\mu}(s) $ for any $s\in \Omega$ is made for the ease of presentation. 
If it does not hold, an additive error term can be introduced in the bias bound to capture that.
For the regret bound, $\max_{s,a}\frac{ d^{\pi^*}(s,a;d_{0})}{\mu(s,a)}$ can be infinite, whereby the best regret bound is just $V_{max}$. 
But when it is bounded as assumed by existing works~\citep{rashidinejad2021bridging}, our results demonstrate how $N$, $\gamma$ and $\lambda$ affect the regret bound.

\begin{figure}[t]
    \centering
    \begin{subfigure}{0.4\linewidth}
        \centering
        \includegraphics[width=\linewidth]{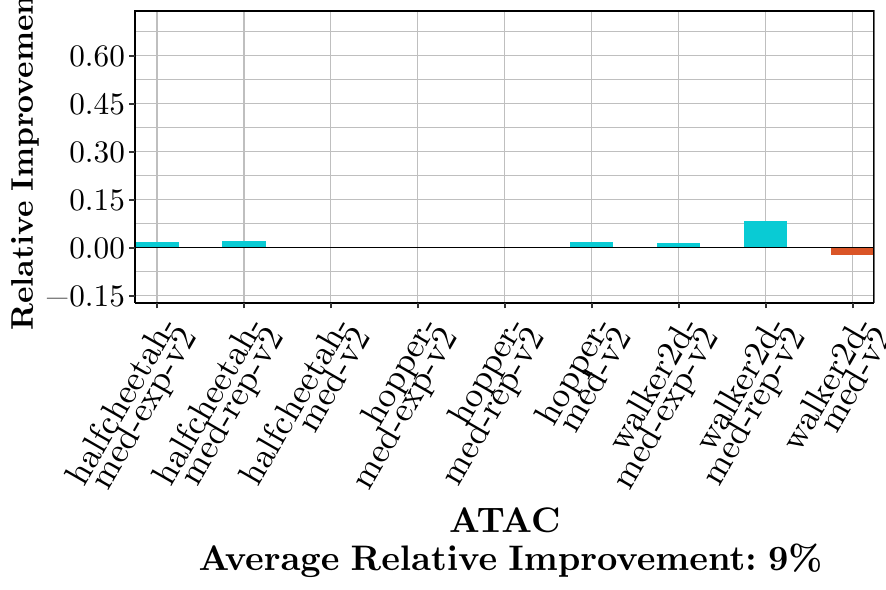}
    \end{subfigure}
    \hfill
    \begin{subfigure}{0.4\linewidth}
        \centering
        \includegraphics[width=\linewidth]{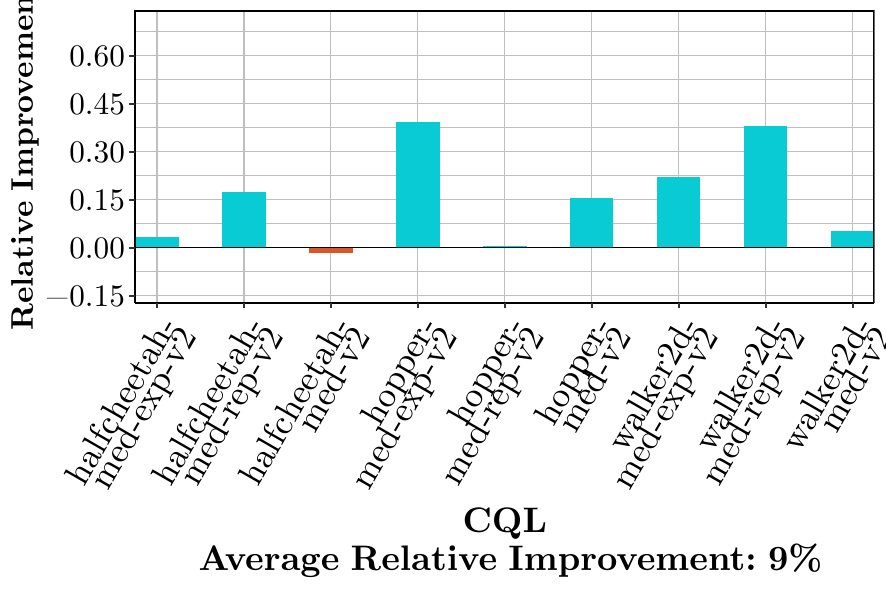}
    \end{subfigure}
    \begin{subfigure}{0.4\linewidth}
        \centering
        \includegraphics[width=\linewidth]{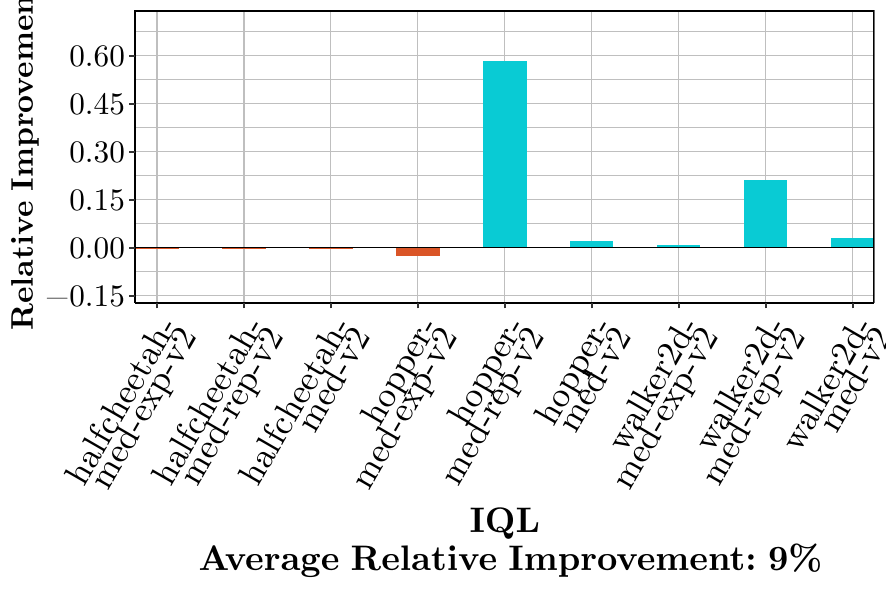}
    \end{subfigure}
    \hfill
    \begin{subfigure}{0.4\linewidth}
        \centering
        \includegraphics[width=\linewidth]{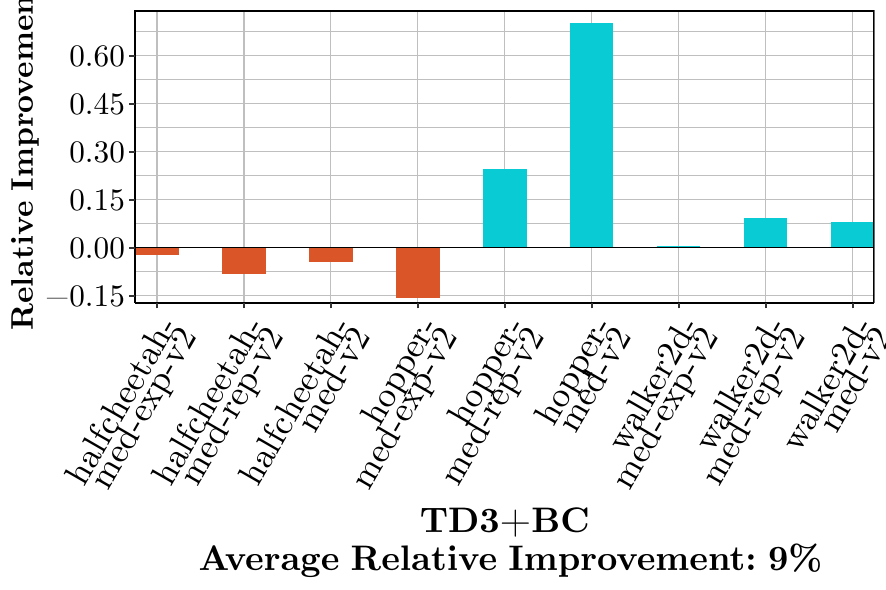}
    \end{subfigure}
    \caption{Relative improvement of \algo with rank blending on 9 D4RL datasets.} 
        \label{fig:d4rl_all}
    \vspace{-3mm}
\end{figure}

The implications of Theorem~\ref{thm:bound} are threefold. 
First of all, it provides a finite-sample performance guarantee for \algo with VI-LCB under the tabular setting.
Compared with the performance bound of the original VI-LCB,
$\min\left( V_{max},\sqrt{\frac{V_{max}^2 \abs{\mathcal{S}}}{(1-\gamma)^3N }\max_{s,a} \frac{ d^{\pi^*}(s,a)}{\mu(s,a)}
} \right)$,
\algo shrinks the discount factor by $1-\lambda$ and thus potentially improves the performance while inducing bias.
Second, Theorem~\ref{thm:bound} hints at the source of \algo's bias and regret of \algo. 
The bias is related to the the performance of the behavior (data-collection) policy, characterized by $V^*(s) - V^{\mu}(s)$. 
In the extreme case of data being collected by an expert policy, the bias induced by \algo is 0.
The regret is affected by $\frac{ d^{\pi^*}(s,a)}{\mu(s,a)}$, which describes the deviation of the optimal policy from the data distribution.
Finally, Theorem~\ref{thm:bound} also provides guidance on how to construct a blending factor function $\lambda(\cdot)$. 
To reduce bias, $\lambda(s)$ should be small at states where $V^*(s)-V^{\mu}(s)$ is small.
To reduce regret, $\lambda(\cdot)$ should be generally large but small at states where the learned policy is likely to deviate from the behavior policy. 
Therefore, an ideal $\lambda(\cdot)$ should be large when the behavior policy is close to optimal but small when it deviates from the optimal policy. 
This is consistent with our design principle in Section~\ref{sec:algorithm}. 

\textbf{Extension to Model-based Offline RL} Conceptually, we can implement \algo with model-based offline RL methods by directly relabeling the data, learning the heuristic-modified reward model, and then doing model-based planning with a smaller discount, following \eqref{eq:hubo rule}. 
Theorem~\ref{thm:bias-regret} still applies to model-based algorithms. 
For Theorem~\ref{thm:bound}, we expect that a similar analysis can be applied to model-based algorithms like MOReL. We defer this direction to future work and focus on improving the stability of a wide range of model-free offline RL methods.

\section{Experiments}
\label{sec:experiment}
We study 27 benchmark datasets in D4RL and Meta-World.
We show that \algo improves the performance of existing offline RL methods by $9\%$ with easy modifications and simple hyperparameter tuning. 
Remarkably, in some datasets where the base offline RL performs inconsistently or poorly, \algo can achieve more than 50\% performance improvement.  
When the dataset is simple and the base offline RL is already near-optimal, HUBL does not significantly harm the performance.

\textbf{\algo variants and base offline RL methods $\,$}
We implement \algo with four state-of-the-art offline RL algorithms as base methods: CQL~\citep{kumar2020conservative}, TD3+BC~\citep{fujimoto2021minimalist}, IQL~\citep{kostrikov2021offline}, and ATAC~\citep{cheng2022adversarially}.
For each base method, we compare the performance of its original version to its performance with \algo running three different blending strategies discussed in Section~\ref{sec:algorithm}: constant, sigmoid and rank.  
Thus, we experiment with 16 different methods in total.
The implementation details of the base methods are in Appendix~\ref{sec:ap-implementation}.

\begin{figure}[t]
    \centering
    \begin{subfigure}{0.4\linewidth}
        \centering
        \includegraphics[width=\linewidth]{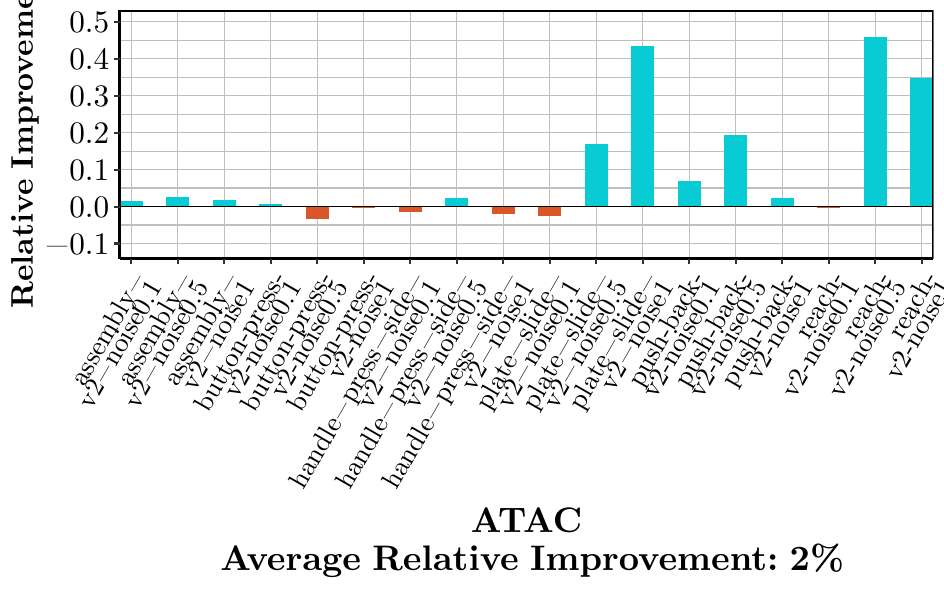}
    \end{subfigure}
    \hfill
    \begin{subfigure}{0.4\linewidth}
        \centering
        \includegraphics[width=\linewidth]{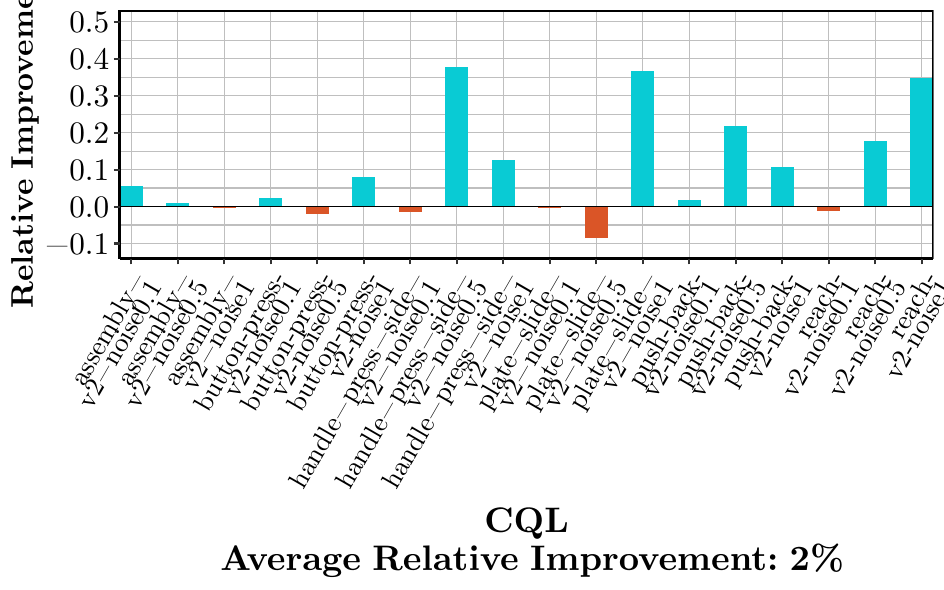}
    \end{subfigure}
    \begin{subfigure}{0.4\linewidth}
        \centering
        \includegraphics[width=\linewidth]{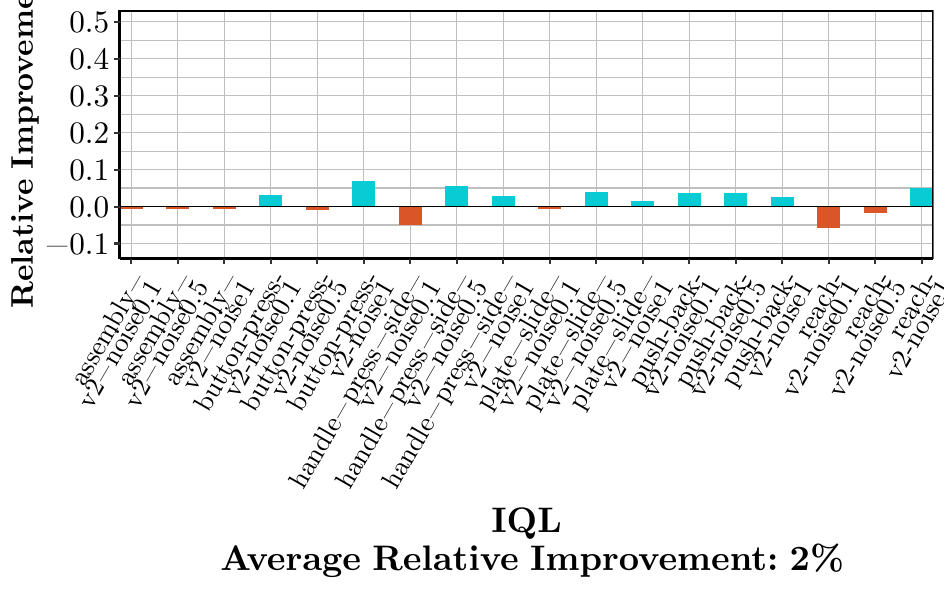}
    \end{subfigure}
    \hfill
    \begin{subfigure}{0.4\linewidth}
        \centering
        \includegraphics[width=\linewidth]{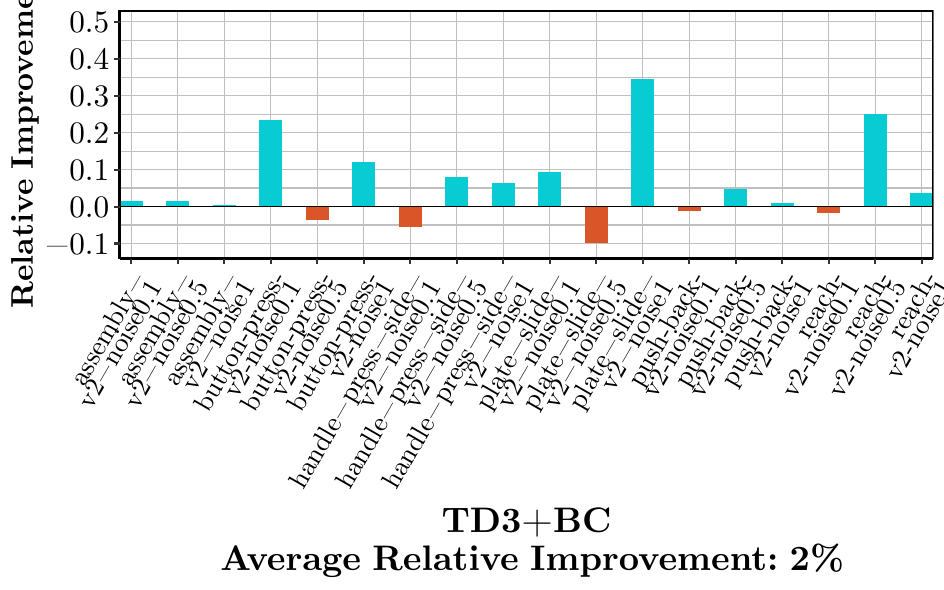}
    \end{subfigure}

    \caption{Relative improvement of HUBL with rank labeling on MW datasets.}
    \label{fig:mw}
    \vspace{-3mm}
\end{figure}

\textbf{Metrics $\,$} We use relative normalized score improvement, abbreviated as \emph{relative improvement}, as a measure of \algo's performance improvement. 
Specifically, for a given task and base method, we first compute the normalized score $r_{base}$ achieved by the base method, and then the normalized score $r_{\algo}$ of \algo.
The relative normalized score improvement is defined as $\frac{r_{\algo}-r_{base}}{\abs{r_{base}}}$.
We report the relative improvement of \algo averaged over three seeds $\{0, 1, 10\}$ in this section, with \emph{standard deviations, base method performance, behavior cloning performance and absolute normalized scores} provided in Appendix~\ref{sec:ap-d4rl-std}, \ref{sec:ap-mw-std}, and \ref{sec:ap-absolute-results}.

\textbf{Hyperparameter Tuning $\,$}
For each dataset, the hyperparameters of the base methods are tuned over six different configurations suggested by the original papers.  
\algo has one extra hyperparameter, $\alpha$, which is \emph{fixed} for all the datasets but different for each base method.
Specifically, $\alpha$ is selected from $\curly{0.001, 0.01, 0.1}$ according to the relative improvement averaged over all the datasets. 
In practice, we notice that a single choice of $\alpha$ around $0.1$ is sufficient for good performance across most base offline RL methods and datasets as demonstrated by the sensitivity analysis in Appendix~\ref{ap:robustness}.

\subsection{Performance Improvement of \algo}
\label{sec:d4rl}
We study 9 benchmark datasets in D4RL and 18 datasets for tasks in Meta-World, collected following the procedure detailed in Appendix~\ref{sec:ap-mw-collet}.
The relative improvement of \algo with the rank labeling method is reported in \Cref{fig:d4rl_all} and \Cref{fig:mw}.
First, despite being simple and needing little hyperparameter tuning, \algo improves the performance of base methods in most settings and only slightly hurts in some expert datasets where base offline RL methods are already performing very well with little space for improvement. 
Second, there are cases where \algo achieve very significant relative improvement---more than 50\%.
Such big improvement happens in the datasets where the base method shows inconsistent performance and underperforms other offline RL algorithms. 
\algo solves this inconsistent performance issue by simply relabeling the data.
Further, \algo improves performance even on data with few expert trajectories like hopper-med-rep-v2, walker2d-med-rep-v2, and hopper-med-v2, because \algo conducts more bootstrapping and relies less on the heuristic on suboptimal trajectories (see Section~\ref{sec:algorithm}).

\begin{wraptable}{r}{0.4\linewidth}	
\vspace{-12mm}
\centering
\resizebox{0.4\textwidth}{!}{\begin{tabular}{|c|c|c|c|}	
\hline	
&Sigmoid&Rank&Constant	
\\	
\hline	
Wins	
&30&46&32	
\\ \hline	
\end{tabular}}
\vspace{-2mm}
\caption{Blending strategy comparison}	
\label{tab:summary}

\vspace{4mm}

\centering	
\begin{subtable}{\linewidth}\centering	
\resizebox{\textwidth}{!}{\begin{tabular}{c|ccc}	
\hline	
\textbf{ATAC}&Sigmoid&Rank&Constant	
\\	
\hline	
\makecell{Ablations \\ without $\tilde{r}$}	
&-0.01&-0.02&-0.02	
\\ \hline	
\algo&0.01&0.02&0.01	
\\ \hline	
\end{tabular}}	
\end{subtable}

\begin{subtable}{\linewidth}\centering	
\resizebox{\textwidth}{!}{\begin{tabular}{c|ccc}	
\hline	
\textbf{CQL}&Sigmoid&Rank&Constant	
\\	
\hline	
\makecell{Ablations \\ without $\tilde{r}$}
&0.04&0.04&0.04	
\\ \hline	
\algo&0.11&0.16&0.13	
\\ \hline	
\end{tabular}}	
\end{subtable}	
\begin{subtable}{\linewidth}\centering	
\resizebox{\textwidth}{!}{\begin{tabular}{c|ccc}	
\hline	
\textbf{IQL}&Sigmoid&Rank&Constant	
\\	
\hline	
\makecell{Ablations \\ without $\tilde{r}$}	
&-0.04&-0.04&-0.07	
\\ \hline	
\algo&0.09&0.09&0.06	
\\ \hline	
\end{tabular}}	
\end{subtable}	
\begin{subtable}{\linewidth}\centering	
\resizebox{\textwidth}{!}{\begin{tabular}{c|ccc}	
\hline	
\textbf{TD3+BC}&Sigmoid&Rank&Constant	
\\	
\hline	
\makecell{Ablations \\ without $\tilde{r}$}	
&-0.01&-0.03&-0.04	
\\ \hline	
\algo	
&0.06&0.09&0.1	
\\ \hline	
\end{tabular}}	
\end{subtable}	
\caption{Average relative improvement of ablations without $\tilde{r}$.}
\label{tab:ablation}
\vspace{-5mm}
\end{wraptable}

\subsection{Ablation Studies}

\textbf{Discount Relabeling $\,$}	
\algo relabels \emph{both} the reward \emph{and} the discount factor, per \eqref{eq:hubo rule}. 	
In contrast, existing methods like \citet{hu2022role} suggest that a lower discount factor \emph{alone}, without blending heuristics into rewards, can in general improve offline RL. 	
To assess the need for modifying both the discount factor and rewards like \eqref{eq:hubo rule}, we consider ablation methods which shrink \emph{only} the discount factor as $\tilde{\gamma}$ \emph{without} $\tilde{r}$. 
The achieved average relative improvement of these ablations is reported and compared with that of \algo in Table~\ref{tab:ablation}. 	
\algo consistently outperforms these ablations, which justifies \algo's coordinated modifications in both the reward and the discount factor. 	
The advantage of \algo is also consistent with what our theoretical analysis predicts. 	
The considered ablations do not modify the rewards and thus are equivalent to solving 	
$\tQ^{\pi}(s,a) = r(s,a) + \gamma \EE_{s' \sim \mP (\cdot|s,a)}[ (1-\lambda(s'))\tV^{\pi}(s') ]$.	
Comparing it with \eqref{eq:dpe}, the solution will be consistently smaller than the true $Q$-function, inducing a pessimistic bias. 
Crucially, this bias is much more challenging to tackle than the bias induced by \algo, because the former is inevitable even when the data is from an expert policy.

\textbf{Comparison of Blending Strategies $\,$}	
We present results for \algo with rank blending, while the results of other blending strategies (sigmoid and constant) are provided in Appendix~ \ref{sec:ap-d4rl-std} and \ref{sec:ap-mw-std}.
We notice that the rank blending outperforms the other two.
With 27 datasets and 4 base methods, we have 108 cases.
In Table~\ref{tab:summary} we report the number of cases where a given blending strategy provides the best performance among the three. We can see that rank is favored on average. 

We also experiment \algo in other different settings, including on datasets collected by expert policies (Appendix~\ref{sec:expert}), with online RL methods (Appendix~\ref{sec:ddpg}), and in stochastic environments (Appendix~\ref{sec:stochastic}).
\algo shows consistent performance improvements in these settings. 

\section{Conclusion}
In this work, we propose \algo, a method for improving the performance of offline RL methods by blending heuristics with bootstrapping. 
\algo is easy to implement and is generally applicable to various of offline RL methods. 
Empirically, we demonstrate the performance improvement of \algo on 27 datasets in D4RL and Meta-World.  
We also provide a theoretic finite-sample performance bound for \algo, which  sheds lights on \algo's bias-regret trade-off and blending factor designs. 
%
%

\bibliography{mm}
\bibliographystyle{plainnat}

\newpage
\onecolumn
\newpage\clearpage\onecolumn
	\appendix
	\addcontentsline{toc}{section}{Appendices}
 
\section{Extended Results of Theorem~\ref{thm:bias-regret}}
\label{ap:bias-regret}

Below we prove the statement which is a restatement of \cref{thm:bias-regret}.
\begin{theorem} \label{th:new regret decomposition}    
    For any $\lambda: \Omega \to[0,1]$ and any $h:\Omega\to \mathbb{R}$, it holds
    \begin{align}
        V^*(s_0) - V^{\hat{\pi}}(s_0) &= 
         \left( \tilde{V}^{\pi^*}(s_0) - \tilde{V}^{\hat{\pi}}(s_0) \right)  \\
         &\quad +  \frac{\gamma}{1-\gamma} \EE_{(s,a,s')\sim d^{\pi^*}}[  \lambda(s')( \tilde{V}^{\pi^*}(s') - h(s') ) | s, s' \in \Omega ]  \nonumber  \\
         &\quad +  \frac{\gamma}{1-\gamma}\EE_{(s,a,s')\sim d^\pi}[\lambda(s') ( h(s') - \tilde{V}^{\hat{\pi}}(s')) | s, s'\in\Omega ] \nonumber. 
    \end{align}
     
\end{theorem}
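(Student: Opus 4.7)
The plan is to prove this by first decomposing the performance gap via the trivial telescoping identity
\begin{equation*}
V^*(s_0) - V^{\hat\pi}(s_0) = \bigl(V^*(s_0) - \tilde V^{\pi^*}(s_0)\bigr) + \bigl(\tilde V^{\pi^*}(s_0) - \tilde V^{\hat\pi}(s_0)\bigr) + \bigl(\tilde V^{\hat\pi}(s_0) - V^{\hat\pi}(s_0)\bigr),
\end{equation*}
and then reducing the claim to a single per-policy lemma: for every policy $\pi$,
\begin{equation*}
V^\pi(s_0) - \tilde V^\pi(s_0) \;=\; \frac{\gamma}{1-\gamma}\,\EE_{(s,a,s')\sim d^\pi}\!\bigl[\lambda(s,s')\bigl(\tilde V^\pi(s') - h(s')\bigr)\bigr].
\end{equation*}
Applying this lemma with $\pi=\pi^*$ yields the bias term (with a plus sign), and with $\pi=\hat\pi$ yields the third summand (with the sign flipped to produce $h - \tilde V^{\hat\pi}$). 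The middle telescoping term is already the in-MDP regret $\tilde V^{\pi^*}(s_0)-\tilde V^{\hat\pi}(s_0)$, so the three pieces match the statement directly.

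To establish the lemma, I would first write out the one-step Bellman equations for $V^\pi$ in $\mM$ and for $\tilde V^\pi$ in $\tmM$, plug in the definitions $\tir(s,a) = r(s,a) + \gamma\EE_{s'}[\lambda(s,s')h(s')]$ and $\tgamma(s,s') = \gamma(1-\lambda(s,s'))$, and subtract. Setting $\Delta^\pi(s) := V^\pi(s) - \tilde V^\pi(s)$, the $r(s,\pi(s))$ terms cancel and one obtains the recursion
\begin{equation*}
\Delta^\pi(s) \;=\; \gamma\,\EE_{s'\sim\mP(\cdot|s,\pi(s))}\!\bigl[\Delta^\pi(s')\bigr] \;+\; \gamma\,\EE_{s'\sim\mP(\cdot|s,\pi(s))}\!\bigl[\lambda(s,s')\bigl(\tilde V^\pi(s') - h(s')\bigr)\bigr].
\end{equation*}
Unrolling this recursion under $\pi$ starting from $s_0$ yields $\Delta^\pi(s_0) = \sum_{t=0}^\infty \gamma^{t+1}\EE[\lambda(s_t,s_{t+1})(\tilde V^\pi(s_{t+1}) - h(s_{t+1}))]$, which I then rewrite as a single expectation over the (discounted) state-action-successor distribution $d^\pi$, picking up the prefactor $\gamma/(1-\gamma)$.

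Finally, I would handle the restriction to the data support. By the extension \eqref{eq:extended h and lambda}, $\lambda(s,s')$ vanishes whenever either $s$ or $s'$ lies outside $\Omega$, and $h(s')$ vanishes outside $\Omega$. Hence the integrand is nonzero only on $\{s,s'\in\Omega\}$, which lets me replace the unrestricted expectation by the restricted one written in the theorem without changing its value, matching the notation used in the Bias and Regret formulas. The main obstacle (and the only nontrivial step) is the unrolling: one must be careful that the recursion for $\Delta^\pi$ is driven by a \emph{fixed} discount $\gamma$ rather than the state-dependent $\tgamma$, even though $\tilde V^\pi$ on the right-hand side solves a Bellman equation with $\tgamma$. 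The algebraic trick that makes this work is the identity $\gamma(1-\lambda) V + \gamma\lambda h - \gamma V = \gamma\lambda(h - V)$, which is precisely what causes the $\tilde V^\pi$ (not $V^\pi$) to appear inside the $\lambda(\cdot)$ term of the lemma — a minor but essential bookkeeping point to get right.
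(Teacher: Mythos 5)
Your proposal is correct and matches the paper's own proof: the same three-term telescoping decomposition through $\tilde V^{\pi^*}$ and $\tilde V^{\hat\pi}$, reduced to the same per-policy lemma $V^\pi(s_0)-\tilde V^\pi(s_0)=\frac{\gamma}{1-\gamma}\EE_{(s,a,s')\sim d^\pi}[\lambda(s,s')(\tilde V^\pi(s')-h(s'))]$, with the support restriction handled identically via the extension of $\lambda$ and $h$ to zero outside $\Omega$. Your derivation of the lemma by unrolling the recursion for $\Delta^\pi$ is just a rewriting of the paper's one-line telescoping identity over $d^\pi$, so the two arguments are essentially the same.
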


Specifically, the performance of $\pi$ depends on both bias and regret. 
The bias term describes the discrepancy caused by solving the reshaped MDP with $\lambda(\cdot)$.
The regret term describes the performance of the learned policy in the reshaped MDP. 

To prove Theorem~\ref{th:new regret decomposition}, we first conduct a regret decomposition as  
\begin{align*}
    V^*(s_0) - V^{\hat{\pi}}(s_0) 
    &=  \left( V^*(s_0) - \tilde{V}^{\pi^*}(s_0) \right) + \left( \tilde{V}^{\pi^*}(s_0) - \tilde{V}^{\hat{\pi}}(s_0) \right) + \left( \tilde{V}^{\hat{\pi}}(s_0) - V^{\hat{\pi}}(s_0) \right). 
\end{align*}
Then, we rewrite both $V^*(s_0) - \tilde{V}^{\pi^*}(s_0)$ and $\tilde{V}^{\hat{\pi}}(s_0) - V^{\hat{\pi}}(s_0)$ using the heuristics $h(s)$. 

\subsection{Technical Lemmas}

Before proceeding to the proof details of Theorem~\ref{th:new regret decomposition}, we first prove a lemma on $V^\pi(s_0) - \tilde{V}^{\pi}(s_0)$. 
For a policy $\pi$ under the original MDP $\mM$ and the reshaped MDP $\tilde{\mM}$, we aim to quantify the value difference using the heuristics $h(s)$. 

\begin{lemma} \label{lm:extended mdp}
For any policy $\pi$,
    $$V^\pi(s_0) - \tilde{V}^{\pi}(s_0) = \frac{\gamma}{1-\gamma} \EE_{(s,a,s')\sim d^{\pi}}[  \lambda(s')( \tilde{V}^{\pi}(s') - h(s') ) | s, s' \in \Omega ]. $$   
\end{lemma}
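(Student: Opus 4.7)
The plan is to derive the identity directly from the two Bellman equations for $V^\pi$ and $\tilde V^\pi$, producing a contraction-style recursion on the gap $V^\pi - \tilde V^\pi$ and then unrolling it along trajectories of $\pi$.

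First, I would expand $\tilde V^\pi$ using the definitions $\tilde r(s,a)=r(s,a)+\gamma\EE_{s'}[\lambda(s,s')h(s')]$ and $\tilde\gamma(s,s')=\gamma(1-\lambda(s,s'))$ in the reshaped Bellman equation \eqref{eq:shaped-dpe}, giving
$\tilde V^\pi(s)=\EE_{a\sim\pi}[r(s,a)]+\gamma\EE_{a\sim\pi,s'}[\lambda(s,s')h(s')+(1-\lambda(s,s'))\tilde V^\pi(s')].$
Subtracting this from the standard Bellman equation $V^\pi(s)=\EE_{a\sim\pi}[r(s,a)]+\gamma\EE_{a\sim\pi,s'}[V^\pi(s')]$, the reward terms cancel and the remaining $\lambda h$ and $\lambda\tilde V^\pi$ pieces combine into a single $\lambda(\tilde V^\pi-h)$ term. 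Writing $\Delta(s):=V^\pi(s)-\tilde V^\pi(s)$, this produces the clean recursion
$\Delta(s)=\gamma\,\EE_{a\sim\pi,s'}\!\left[\Delta(s')+\lambda(s,s')\bigl(\tilde V^\pi(s')-h(s')\bigr)\right].$

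Second, I would iterate this recursion along the trajectory of $\pi$ starting from $s_0$. Because both $V^\pi$ and $\tilde V^\pi$ are uniformly bounded (since $r\in[0,1]$ and $\tilde\gamma\le\gamma<1$), the $\gamma^t\EE[\Delta(s_t)]$ tail vanishes, yielding
$\Delta(s_0)=\sum_{t=0}^\infty \gamma^{t+1}\,\EE_{\pi,s_0}\!\left[\lambda(s_t,s_{t+1})\bigl(\tilde V^\pi(s_{t+1})-h(s_{t+1})\bigr)\right].$
Re-expressing this trajectory sum through the discounted occupancy $d^\pi$ converts $\sum_{t\ge 0}\gamma^t\EE_{\pi}[\,\cdot\,]$ into $\tfrac{1}{1-\gamma}\EE_{(s,a,s')\sim d^\pi}[\,\cdot\,]$, producing the announced prefactor $\tfrac{\gamma}{1-\gamma}$.

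Third, I would invoke the extension \eqref{eq:extended h and lambda}, which sets $\lambda(s,s')=\lambda(s')$ when both $s,s'\in\Omega$ and $\lambda(s,s')=0$ otherwise. Since $\lambda(s,s')$ vanishes off $\Omega\times\Omega$, the expectation collapses onto the on-support event $\{s,s'\in\Omega\}$, and $\lambda(s,s')$ may be replaced by $\lambda(s')$, matching the stated form exactly.

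The main obstacle is largely notational rather than substantive: the statement writes the on-support restriction as a conditional expectation ``$\,|\,s,s'\in\Omega$'', whereas the Bellman unrolling produces an unnormalized expectation against the indicator $\mathbbm{1}\{s,s'\in\Omega\}$. I expect this to be a convention of the paper (either $d^\pi$ is effectively supported on $\Omega$ under the sampled trajectories, or ``conditioning'' is used informally to mean ``restricted to''), and reconciling the two interpretations will only require a one-line bookkeeping remark; the core argument is the clean Bellman-subtraction-and-unroll above.
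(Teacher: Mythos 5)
Your proposal is correct and is essentially the paper's own argument: the paper applies the telescoping identity $V^\pi(s_0)-\tilde{V}^{\pi}(s_0)=\frac{1}{1-\gamma}\EE_{(s,a,s')\sim d^{\pi}}[r(s,a)+\gamma \tilde{V}^{\pi}(s')-\tilde{V}^{\pi}(s)]$ and then expands $\tilde{V}^{\pi}(s)$ via the reshaped Bellman equation so the rewards cancel and the $\gamma\lambda(s,s')(\tilde{V}^{\pi}(s')-h(s'))$ residual remains --- which is exactly your subtract-and-unroll computation packaged into one step. Your closing remark about the conditional-expectation notation is also apt: the paper likewise passes from the unconditional expectation to the ``$|\, s,s'\in\Omega$'' form solely because $\lambda(s,s')$ vanishes off $\Omega\times\Omega$, i.e.\ it treats the conditioning as a restriction.
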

\begin{proof}
By the definition of $\lambda(s,s')$:
    \begin{align*}
        &V^\pi(s_0) - \tilde{V}^{\pi}(s_0) \\
        &= \frac{1}{1-\gamma}\EE_{(s,a,s')\sim d^{\pi}}[ r(s,a) + \gamma \tilde{V}^{\pi}(s')- \tilde{V}^{\pi}(s) ] \\
        &= \frac{1}{1-\gamma}\EE_{(s,a,s')\sim d^{\pi}}[ r(s,a) + \gamma \tilde{V}^{\pi}(s')- r(s,a) - \gamma \lambda(s,s') h(s') +  \gamma(1-\lambda(s,s'))  \tilde{V}^{\pi}(s')  ] \\
        &= \frac{\gamma}{1-\gamma} \EE_{(s,a,s')\sim d^{\pi}}[  \lambda(s,s')( \tilde{V}^{\pi}(s') - h(s') )  ] \\
        &= \frac{\gamma}{1-\gamma} \EE_{(s,a,s')\sim d^{\pi}}[  \lambda(s,s')( \tilde{V}^{\pi}(s') - h(s') ) | s, s' \in \Omega ] \\
            &= \frac{\gamma}{1-\gamma} \EE_{(s,a,s')\sim d^{\pi}}[  \lambda(s')( \tilde{V}^{\pi}(s') - h(s') ) | s, s' \in \Omega ] 
        \end{align*}
\end{proof}

\subsection{Proof of Theorem~\ref{th:new regret decomposition} }

To prove the theorem, we decompose the regret into the following terms: 
\begin{align*}
    V^*(s_0) - V^{\hat{\pi}}(s_0) 
    &=  \left( V^*(s_0) - \tilde{V}^{\pi^*}(s_0) \right) + \left( \tilde{V}^{\pi^*}(s_0) - \tilde{V}^{\hat{\pi}}(s_0) \right) + \left( \tilde{V}^{\hat{\pi}}(s_0) - V^{\hat{\pi}}(s_0) \right) 
\end{align*}
We apply Lemma~\ref{lm:extended mdp} to rewrite the first and the last terms as 
\begin{align*}
    V^*(s_0) - \tilde{V}^{\pi^*}(s_0)    
    &= \frac{\gamma}{1-\gamma} \EE_{(s,a,s')\sim d^{\pi^*}}[  \lambda(s')( \tilde{V}^{\pi^*}(s') - h(s') ) | s, s' \in \Omega] \\
    \tilde{V}^{\hat{\pi}}(s_0) - V^{\hat{\pi}}(s_0)  &= \frac{\gamma}{1-\gamma}\EE_{(s,a,s')\sim d^\pi}[\lambda(s') (h(s') - \tilde{V}^{\hat{\pi}}(s'))  | s,s'\in\Omega].  
\end{align*}

Combining the two completes the proof.

\section{Extended Results for Theorem~\ref{thm:bound}}
\label{sec:ap-bound}
To prove Theorem~\ref{thm:bound}, we separately bound the bias term 
\alis{
\textnormal{Bias}(\hpi, \lambda):= \frac{\gamma}{1-\gamma} \EE_{(s,a,s')\sim d^{\pi^*}}[  \lambda(s')( \tilde{V}^{\pi^*}(s') - h(s') ) | s, s' \in \Omega ],
}
and the regret term 
\alis{\textnormal{Regret}(\pi,  h, \lambda)&:= 
         \tilde{V}^{\pi^*}(d_0) - \tilde{V}^{{\pi}}(d_0) + \frac{\gamma}{1-\gamma}\EE_{(s,a,s')\sim d^\pi}[\lambda(s') ( h(s') - \tilde{V}^{{\pi}}(s')) | s, s'\in\Omega ].
         }
Specifically the bias is bounded by Lemma~\ref{lem:bias-bound} and the regret is bounded by Lemma~\ref{lem:bound-regret}.

\subsection{Technical Lemmas}
Now, we provide some technical lemmas for the proof of Theorem~\ref{thm:bound}. 

\subsubsection{Bias Component}

We first prove the upper bound for the bias component.

\begin{lemma}
\label{lem:negative}
Assume $h(s) \leq V^*(s)$, $\forall s \in \mathcal{S}$. It holds that $\tV^{\pi^*}(s) \leq V^{*} (s)$ for all $s\in\mathcal{S}$.
\end{lemma}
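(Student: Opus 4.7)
The plan is to reduce the claim to a standard contraction / nonnegativity argument on the difference $f(s) := V^{\pi^*}(s) - \tilde{V}^{\pi^*}(s)$. Since $\pi^*$ is the optimal policy of the original MDP we have $V^* = V^{\pi^*}$, so it suffices to show $f \geq 0$ pointwise. First I would write down the two Bellman equations side by side: the original one, $V^{\pi^*}(s) = \EE_{a\sim\pi^*, s'\sim\mP}[r(s,a) + \gamma V^{\pi^*}(s')]$, and the reshaped one from \eqref{eq:shaped-dpe}, namely $\tilde{V}^{\pi^*}(s) = \EE_{a\sim\pi^*, s'\sim\mP}[r(s,a) + \gamma \lambda(s,s') h(s') + \gamma(1-\lambda(s,s'))\tilde{V}^{\pi^*}(s')]$.

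Subtracting the second from the first and rearranging gives the key recursion
\[
f(s) = \gamma\,\EE_{a\sim\pi^*, s'\sim\mP}\bigl[\lambda(s,s')\bigl(V^{\pi^*}(s') - h(s')\bigr)\bigr] + \gamma\,\EE_{a\sim\pi^*, s'\sim\mP}\bigl[(1-\lambda(s,s'))\,f(s')\bigr].
\]
This has the form $f = g + T f$, where $g(s) := \gamma\,\EE[\lambda(s,s')(V^{\pi^*}(s') - h(s'))]$ and $(Tf)(s) := \gamma\,\EE[(1-\lambda(s,s'))\,f(s')]$. By the standing hypothesis $h(s') \leq V^*(s') = V^{\pi^*}(s')$ for every $s'\in\mS$, so $g \geq 0$ pointwise. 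Moreover, because $0 \leq 1-\lambda(s,s') \leq 1$ and $\gamma < 1$, the operator $T$ is a $\gamma$-contraction in the sup-norm and preserves nonnegativity.

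Unrolling the fixed-point identity $f = g + Tf$ then yields $f = \sum_{k=0}^\infty T^k g$, and since every $T^k g$ is nonnegative this series is pointwise nonnegative. Hence $V^{\pi^*}(s) \geq \tilde{V}^{\pi^*}(s)$ for all $s \in \mS$, which is the claim.

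There is no real obstacle here; the only thing to be careful about is the use of the global hypothesis $h \leq V^*$ on all of $\mS$ (not just on $\Omega$), together with the extension of $\lambda$ in \eqref{eq:extended h and lambda}. On $s,s'\in\Omega$ the extension keeps $\lambda$ as given, while off $\Omega$ it sets $\lambda = 0$, so the recursion for $f$ simply reduces to $f(s) = \gamma\,\EE[f(s')]$ at those transitions — still a $\gamma$-contraction — and the pointwise nonnegativity of $g$ carries over to $f$ everywhere. No further assumption beyond $\gamma < 1$ and boundedness of rewards is needed.
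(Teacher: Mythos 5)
Your proposal is correct and follows essentially the same route as the paper: subtract the original and reshaped Bellman equations along $\pi^*$, use $h \leq V^* = V^{\pi^*}$ to drop the heuristic term, and conclude by a contraction argument (which you make explicit via the Neumann-series unrolling $f = \sum_{k\geq 0} T^k g$, where the paper just invokes contraction). The only differences are cosmetic — a flipped sign convention and your slightly more careful handling of the expectation over $a\sim\pi^*$ and of the off-$\Omega$ extension of $\lambda$.
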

\begin{proof}
\begin{align*}
    \tV^{\pi^*}(s) &= r(s,a) +  \gamma \mathbb{E}_{s'|s,a} [{\lambda}(s,s') {h}(s') + (1-{\lambda}(s,s')) \tV^{\pi^*}(s')] \\
    &=  V^{*}(s) + \gamma \mathbb{E}_{s'|s,a} [{\lambda}(s,s') {h}(s') + (1-{\lambda}(s,s')) \tV^{\pi^*}(s') - V^{*}(s')]\\
    &=  V^{*}(s) + \gamma \mathbb{E}_{s'|s,a} [{\lambda}(s,s') ({h}(s')- V^{*}(s')) + (1-{\lambda}(s,s')) (\tV^{\pi^*}(s') - V^{*}(s'))] \\
    &\leq V^{*}(s) + \gamma \mathbb{E}_{s'|s,a} [(1-{\lambda}(s,s')) (\tV^{\pi^*}(s') - V^{*}(s'))]
\end{align*}
where in the inequality we used $h(s)\leq V^{\pi^*}(s)$. 
Then by a contraction argument, we can show $\tV^{\pi^*}(s) - V^{\pi^*}(s) \leq 0 $
\end{proof}

\begin{lemma}[Bias Upperbound]
    \label{lem:bias-bound}
    Under the assumptions of Theorem~\ref{thm:bound}, the bias component can be bounded by:
    \alis{
\textnormal{Bias}(\hpi, \lambda) \leq  \frac{\lambda \gamma}{1-\gamma} \EE_{(s,a,s')\sim d^{\pi^*}}[ V^*(s')- V^{\mu}(s')  | s, s' \in \Omega ].
}
\end{lemma}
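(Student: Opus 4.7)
} The plan is to simply combine the bias expression from Theorem~\ref{thm:bias-regret} with the comparison inequality from Lemma~\ref{lem:negative}. First, I would instantiate the bias term under the simplifications of Section~\ref{sec:upperbound}, where $\lambda(s') = \alpha$ for $s' \in \Omega$ and the heuristic satisfies $h(s) = V^{\mu}(s)$ on $\Omega$. Substituting these into the formula
$$\textnormal{Bias}(\pi, h, \lambda) = \frac{\gamma}{1-\gamma}\, \EE_{(s,a,s')\sim d^{\pi^*}}\!\left[ \lambda(s')\big(\tilde{V}^{\pi^*}(s') - h(s')\big) \,\big|\, s, s' \in \Omega \right]$$
from Theorem~\ref{thm:bias-regret} immediately rewrites the bias as
$$\frac{\alpha \gamma}{1-\gamma}\, \EE_{(s,a,s')\sim d^{\pi^*}}\!\left[ \tilde{V}^{\pi^*}(s') - V^{\mu}(s') \,\big|\, s, s' \in \Omega \right].$$

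Next, the key step is to upper bound $\tilde{V}^{\pi^*}(s')$ by $V^{*}(s')$ inside the expectation. This is exactly the content of Lemma~\ref{lem:negative}, whose hypothesis requires $h(s) \leq V^{*}(s)$ for all $s$. I would verify this hypothesis by observing that for $s \in \Omega$ we have $h(s) = V^{\mu}(s) \leq V^{*}(s)$ by optimality of $V^{*}$, while for $s \notin \Omega$ we have $h(s) = 0 \leq V^{*}(s)$ since rewards are nonnegative (per the assumption $r : \mathcal{S} \times \mathcal{A} \to [0,1]$ in Section~\ref{sec:offline-rl}). Hence Lemma~\ref{lem:negative} applies and yields $\tilde{V}^{\pi^*}(s') \leq V^{*}(s')$ pointwise.

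Plugging this pointwise bound into the expectation and using monotonicity of conditional expectation gives
$$\textnormal{Bias}(\hat{\pi}, \lambda) \leq \frac{\alpha \gamma}{1-\gamma}\, \EE_{(s,a,s')\sim d^{\pi^*}}\!\left[ V^{*}(s') - V^{\mu}(s') \,\big|\, s, s' \in \Omega \right],$$
which matches the claimed bound (identifying $\alpha$ with $\lambda$ in the notation of the lemma statement). I do not anticipate any serious obstacle here: Theorem~\ref{thm:bias-regret} already delivers the correct expectation, and Lemma~\ref{lem:negative} delivers the correct pointwise comparison; the only subtlety is confirming that the heuristic-extension convention from \eqref{eq:extended h and lambda} is compatible with the hypothesis of Lemma~\ref{lem:negative} on out-of-$\Omega$ states, which follows from nonnegativity of $V^{*}$.
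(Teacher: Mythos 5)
Your proposal is correct and follows essentially the same route as the paper's proof: both instantiate the bias formula from Theorem~\ref{thm:bias-regret} with $h = V^{\mu}$ and constant $\lambda$ on $\Omega$, then invoke Lemma~\ref{lem:negative} to replace $\tilde{V}^{\pi^*}$ by $V^{*}$ (the paper writes this as adding and subtracting $V^{*}(s')$ and dropping the nonpositive term, which is the same step). Your explicit verification of the hypothesis $h(s) \leq V^{*}(s)$ on out-of-$\Omega$ states via nonnegativity of rewards is a detail the paper leaves implicit, but it does not change the argument.
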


\begin{proof}
Review the bias component
\alis{
\textnormal{Bias}(\hpi, \lambda):= \frac{\gamma}{1-\gamma} \EE_{(s,a,s')\sim d^{\pi^*}}[  \lambda(s')( \tilde{V}^{\pi^*}(s') - h(s') ) | s, s' \in \Omega ] .
}

Under the assumptions of Theorem~\ref{thm:bound}, we derive
\alis{
\textnormal{Bias}(\hpi, \lambda) = 
\frac{\lambda \gamma}{1-\gamma} \EE_{(s,a,s')\sim d^{\pi^*}}[   \tilde{V}^{\pi^*}(s')-V^*(s')+ V^*(s')- V^{\mu}(s')  | s, s' \in \Omega ].
}
Next by Lemma~\ref{lem:negative} we have $\frac{\lambda \gamma}{1-\gamma} \EE_{(s,a,s')\sim d^{\pi^*}}[   \tilde{V}^{\pi^*}(s')-V^*(s')  | s, s' \in \Omega ] \leq 0$, and thus
\alis{
\textnormal{Bias}(\hpi, \lambda) \leq 
\frac{\lambda \gamma}{1-\gamma} \EE_{(s,a,s')\sim d^{\pi^*}}[ V^*(s')- V^{\mu}(s')  | s, s' \in \Omega ].
}
\end{proof}

\subsubsection{Regret Component}

Next, we focus on the regret component.
\begin{lemma}
    \label{lem:exntend-thm1}
    Under the setup of Theorem~\ref{thm:bound}, $\tilde{V}^\mu(s) = V^\mu(s) $.

\end{lemma}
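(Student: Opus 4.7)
}
The plan is to show that $V^\mu$ itself satisfies the Bellman equation of $\mu$ in the reshaped MDP $\tmM$, and then invoke uniqueness of the fixed point of the Bellman operator to conclude that it must coincide with $\tilde{V}^\mu$.

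First, I would write the reshaped Bellman equation for $\mu$, which reads
\begin{align*}
\tilde{V}^\mu(s) = \EE_{a\sim\mu(\cdot\mid s)}\!\left[\tir(s,a) + \EE_{s'\sim\mP(\cdot\mid s,a)}\!\left[\tgamma(s,s')\,\tilde{V}^\mu(s')\right]\right].
\end{align*}
Substituting the definitions $\tir(s,a) = r(s,a) + \gamma\,\EE_{s'}[\lambda(s,s')h(s')]$ and $\tgamma(s,s') = \gamma(1-\lambda(s,s'))$, this becomes
\begin{align*}
\tilde{V}^\mu(s) = \EE_{a\sim\mu}\!\left[r(s,a) + \gamma\,\EE_{s'}\!\left[\lambda(s,s')h(s') + (1-\lambda(s,s'))\tilde{V}^\mu(s')\right]\right].
\end{align*}

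Next, I would substitute the candidate $V^\mu$ in place of $\tilde{V}^\mu$ on the right-hand side and compare the result to the ordinary Bellman identity $V^\mu(s) = \EE_{a\sim\mu}[r(s,a) + \gamma\,\EE_{s'}[V^\mu(s')]]$. The difference between the two right-hand sides equals $\gamma\,\EE_{a\sim\mu}\EE_{s'}[\lambda(s,s')(h(s')-V^\mu(s'))]$, so it suffices to verify that $\lambda(s,s')(h(s')-V^\mu(s'))=0$ pointwise. This is the key case analysis, using the extension in \eqref{eq:extended h and lambda}: if $s\in\Omega$ and $s'\in\Omega$, then $\lambda(s,s')=\alpha$ but the hypothesis $h(s')=V^\mu(s')$ on $\Omega$ makes the factor vanish; in the complementary case, the definition forces $\lambda(s,s')=0$ so the factor vanishes trivially.

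Having shown that $V^\mu$ satisfies the reshaped Bellman equation for $\mu$, I would close the argument by noting that the reshaped Bellman operator is a $\gamma$-contraction in sup norm (since $\tgamma(s,s')\le\gamma<1$), so its fixed point is unique; therefore $\tilde{V}^\mu = V^\mu$ on the entire state space, not only on $\Omega$. No step in this plan looks like a serious obstacle — the whole argument is essentially a verification. The only conceptually delicate point is recognizing that the extension of $h$ and $\lambda$ outside $\Omega$ in \eqref{eq:extended h and lambda} is designed precisely so that the product $\lambda(s,s')(h(s')-V^\mu(s'))$ vanishes on every transition, which is what makes the reshaping leave $V^\mu$ invariant.
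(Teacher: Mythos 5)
Your proposal is correct and matches the paper's proof in essence: both arguments reduce to observing that the perturbation term $\gamma\,\EE[\lambda(s,s')(h(s')-V^\mu(s'))]$ vanishes because $h=V^\mu$ on $\Omega$ and $\lambda(s,s')=0$ off $\Omega$, and then conclude via a $\gamma$-contraction / fixed-point-uniqueness argument. The only cosmetic difference is that the paper subtracts the two Bellman equations and conditions on $s'\in\Omega$ (justified by $\Omega$ being the support of $\mu$), whereas you verify directly that $V^\mu$ is a fixed point of the reshaped Bellman operator, handling the out-of-support transitions through the extension $\lambda(s,s')=0$.
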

\begin{proof}
Since $\Omega$ is the support of $\mu$, 
this can be shown by the following: for $s\in \Omega$, 
\begin{align*}
    &\tilde{V}^\mu(s) - V^\mu(s)  \\
    &= \EE_{a\sim\mu|s}\EE_{s'|s,a}[ r(s,a) + \gamma \lambda(s,s') h(s') +  \gamma (1-\lambda(s,s')) \tilde{V}^\mu(s') - r(s,a) - \gamma V^\mu(s')]\\
    &= \EE_{a\sim\mu|s}\EE_{s'|s,a}[ r(s,a) + \gamma \lambda(s,s') h(s') +  \gamma (1-\lambda(s,s')) \tilde{V}^\mu(s') - r(s,a) - \gamma V^\mu(s') | s' \in \Omega]\\
    &= \EE_{a\sim\mu|s}\EE_{s'|s,a}[ r(s,a) + \gamma \lambda(s,s') V^\mu(s') + \gamma (1-\lambda(s,s') )\tilde{V}^\mu(s') - r(s,a) - \gamma V^\mu(s') | s' \in \Omega]  \\
    &= \gamma \EE_{a\sim\mu|s}\EE_{s'|s,a}[  (1-\lambda(s,s') )(\tilde{V}^\mu(s')-V^\mu(s') ) | s' \in \Omega ] .       
\end{align*}
Since $\gamma <1$, then by an argument of contraction, we have $ \tilde{V}^\mu(s) - V^\mu(s) = 0 $ for $s\in \Omega$. 

\end{proof}

\begin{lemma}
\label{lem:v-star-tv}
The difference between $V^*(s)$ and $\tV^{\pi^*}(s)$ can be derived as
\eqs{
V^*(s) - \tV^{\pi^*}(s) =  \EE_{\rho^{\pi^*}(s)}[\sum_{t=1}^{\infty}[
 \lambda (1-\lambda)^{t-1} \gamma^{t} (V^*(s_t) - h(s_t))
]  ].
}
\end{lemma}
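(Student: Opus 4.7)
The plan is to derive a one-step recursion for the difference $f(s) \coloneqq V^*(s) - \tV^{\pi^*}(s)$ and then unroll it to obtain the claimed infinite-sum form. Since $\pi^*$ is the optimal policy of the original MDP, $V^*(s) = V^{\pi^*}(s)$ satisfies the original Bellman equation
\begin{equation*}
V^*(s) = r(s,\pi^*(s)) + \gamma \, \EE_{s'\sim \mP(\cdot|s,\pi^*(s))}[V^*(s')],
\end{equation*}
while $\tV^{\pi^*}$ satisfies the reshaped Bellman equation from \eqref{eq:shaped-dpe} evaluated at $\pi^*$,
\begin{equation*}
\tV^{\pi^*}(s) = r(s,\pi^*(s)) + \gamma \, \EE_{s'\sim \mP(\cdot|s,\pi^*(s))}\bigl[\lambda\, h(s') + (1-\lambda)\,\tV^{\pi^*}(s')\bigr],
\end{equation*}
where we use the simplification $\lambda(s,s') = \lambda$ in force throughout Section~\ref{sec:upperbound}.

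Subtracting the two identities gives
\begin{equation*}
f(s) = \gamma\lambda\, \EE_{s'\sim \mP(\cdot|s,\pi^*(s))}\bigl[V^*(s') - h(s')\bigr] + \gamma(1-\lambda)\, \EE_{s'\sim \mP(\cdot|s,\pi^*(s))}\bigl[f(s')\bigr],
\end{equation*}
after regrouping the $V^*(s')$ terms. This is a contraction with modulus $\gamma(1-\lambda) < 1$, so iterating $k$ times produces
\begin{equation*}
f(s) = \sum_{t=1}^{k} \gamma^{t}\lambda(1-\lambda)^{t-1}\, \EE_{\rho^{\pi^*}(s)}\bigl[V^*(s_t) - h(s_t)\bigr] + \gamma^{k}(1-\lambda)^{k}\, \EE_{\rho^{\pi^*}(s)}[f(s_k)],
\end{equation*}
where $\rho^{\pi^*}(s)$ denotes the trajectory distribution induced by rolling out $\pi^*$ from $s_0 = s$. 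Since $V^*$ and $\tV^{\pi^*}$ are both uniformly bounded (by $V_{max}$), the remainder term vanishes as $k\to\infty$, leaving exactly the claimed identity.

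The main obstacle, as I see it, is bookkeeping: one must correctly collapse the two $V^*(s')$ occurrences on the right-hand side of the subtraction (one appears in the original Bellman identity, and one inside the bracket $[V^*(s') - h(s')]$ after adding and subtracting $\lambda V^*(s')$), and then argue the tail of the iteration vanishes. Both are routine once the recursion is written down; the bounded-value argument for the tail is standard contraction reasoning. Strictly speaking, the simplification $\lambda(s,s') = \lambda$ everywhere ignores the zero-extension outside $\Omega$, but this aligns with how the lemma statement treats $\lambda$ as a scalar and is justified because only states reachable under $\pi^*$ from an in-support initial state matter for the application in the proof of Theorem~\ref{thm:bound}.
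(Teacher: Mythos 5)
Your proposal is correct and follows essentially the same route as the paper: subtract the original and reshaped Bellman equations at $\pi^*$ to obtain the one-step recursion $V^*(s)-\tV^{\pi^*}(s) = \gamma\lambda\,\EE_{s'}[V^*(s')-h(s')] + \gamma(1-\lambda)\,\EE_{s'}[V^*(s')-\tV^{\pi^*}(s')]$, then unroll it along trajectories of $\pi^*$. Your explicit treatment of the vanishing tail term via boundedness is a detail the paper leaves implicit but is the standard justification for the same step.
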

\begin{proof}
By the dynamic programming equation in both the original and shaped MDP. 
    \ali{temp-vstart-tv}{
    V^*(s) - \tV^{\pi^*}(s) =& \gamma (1-\lambda) \EE_{a\sim\pi^*(\cdot;s)}[\EE_{s'| s,a}
    [
    V^*(s') - \tV^{\pi^*}(s')]]
    \\&+\gamma \lambda\EE_{a\sim\pi^*(\cdot;s)}[\EE_{s'| s,a}
    [V^*(s') - h(s') ]].
    }
Then, we use \eqref{eq:temp-vstart-tv} recursively:
\eqs{
V^*(s) - \tV^{\pi^*}(s) =   \lambda \EE_{\rho^{\pi^*}(s)}[\sum_{t=1}^{\infty}[
 (1-\lambda)^{t-1} \gamma^{t} (V^*(s_t) - h(s_t))
]  ].
}
\end{proof}

\begin{lemma}[Regret Upperbound]
    \label{lem:bound-regret}
    The expected regret is bounded by 
    \alis{
\EE_{\mathbbm{D}}[\textnormal{Regret}(\hpi, \lambda(\cdot))] \lesssim& \min\bigg(V_{max}, V_{max}\sqrt{\frac{(1-\gamma)\abs{\mS}\max_{s,a}\frac{ d^{\pi^*}(s,a;d_{0})}{\mu(s,a)}}{N(1-\gamma(1-\lambda))^4}}\bigg) 
\\&+ \frac{\gamma\lambda}{1-\gamma} \min\bigg(V_{max}, V_{max}\sqrt{\frac{(1-\gamma)\abs{\mS}\frac{1}{\min_{(s,a)\in\Omega}\mu(s,a)}}{N(1-\gamma(1-\lambda))^4}}\bigg).
}
\end{lemma}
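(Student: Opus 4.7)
\textbf{Proof plan for Lemma~\ref{lem:bound-regret}.}
The plan is to view $\hat\pi$ as the output of VI--LCB run on the reshaped MDP $\tmM$, and to bound the two pieces of the regret decomposition separately. Concretely, I would start from
\begin{align*}
\textnormal{Regret}(\hpi,\lambda)
= \underbrace{\tV^{\pi^*}(d_0)-\tV^{\hpi}(d_0)}_{(A)}
+ \underbrace{\tfrac{\gamma}{1-\gamma}\,\EE_{(s,a,s')\sim d^{\hpi}}[\lambda(s')(h(s')-\tV^{\hpi}(s'))\,|\,s,s'\in\Omega]}_{(B)},
\end{align*}
and bound $(A)$ and $(B)$ with two different change-of-measure arguments, matching the two summands of the claim.

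For $(A)$, the key observation is that $\tmM$ is itself a valid (transition-dependent-discount) MDP whose effective discount on $\Omega\times\Omega$ is $\gamma(1-\lambda)$ while the reshaped rewards are bounded so that $\tV^\pi(s)\leq V_{\max}$ (the same bound used for $V^\pi$, as one checks by unrolling the reshaped Bellman equation and using $h\le V_{\max}$). I would then invoke the finite-sample guarantee of VI--LCB (Rashidinejad et al.) directly on $\tmM$, with $\gamma$ replaced by $\gamma(1-\lambda)$ and the same data distribution $\mu$; this yields a concentration bound of the desired shape with concentrability $\max_{s,a}d^{\pi^*}(s,a)/\mu(s,a)$, and I would intersect with the trivial $V_{\max}$ bound. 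The $(1-\gamma)$ factor in the numerator and $(1-\gamma(1-\lambda))^4$ in the denominator are obtained by tracking how the range of $\tV^\pi$ scales (using $V_{\max}=1/(1-\gamma)$) when plugging into the standard VI--LCB expression.

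For $(B)$, I would use the hypothesis $h(s)=V^\mu(s)$ on $\Omega$ together with Lemma~\ref{lem:exntend-thm1}, which gives $h(s)=\tV^\mu(s)$ on $\Omega$, so the integrand becomes $\tV^\mu(s')-\tV^{\hpi}(s')$. Since we are restricted to $s,s'\in\Omega$, I would change measure from $d^{\hpi}$ to $\mu$, paying the worst-case ratio $1/\min_{(s,a)\in\Omega}\mu(s,a)$. Under $\mu$ the remaining quantity is the difference of the values of two policies in the reshaped MDP evaluated on data-covered states, which I would control using exactly the same VI--LCB-type concentration inequality as in $(A)$, now with concentrability $1/\min_{(s,a)\in\Omega}\mu(s,a)$ (reflecting that both policies are being compared on the support of $\mu$). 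Intersecting once more with the trivial $V_{\max}$ bound and pulling the constant $\lambda$ out of $\lambda(s')$ gives the $\tfrac{\gamma\lambda}{1-\gamma}\min(\cdot)$ summand in the claim.

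The main obstacle is the second step of $(B)$: turning a difference of policy values $\tV^\mu-\tV^{\hpi}$ integrated against the \emph{unknown} occupancy $d^{\hpi}$ into a quantity that concentrates under the \emph{data} distribution $\mu$. The restriction of the expectation to in-$\Omega$ transitions, guaranteed by Theorem~\ref{thm:bias-regret} and by our extension of $\lambda$ in~\eqref{eq:extended h and lambda}, is essential; without it the change of measure would blow up on out-of-support states, and the bound would inherit dependence on $h$ and $\tV^{\hpi}$ at points the data cannot certify. A secondary bookkeeping difficulty is consistently using $\gamma(1-\lambda)$ (not $\gamma$) as the effective discount inside $\Omega$ while remembering that $V_{\max}$ is still measured on the $1/(1-\gamma)$ scale, which is what produces the somewhat unusual $(1-\gamma)/(1-\gamma(1-\lambda))^4$ factor in both summands.
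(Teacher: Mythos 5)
Your overall route is the same as the paper's: split the regret into the value gap $\tV^{\pi^*}(d_0)-\tV^{\hpi}(d_0)$ and the correction term, use $h=V^\mu=\tV^\mu$ on $\Omega$ (Lemma~\ref{lem:exntend-thm1}) to rewrite the correction as a value gap between $\mu$ and $\hpi$ in the reshaped MDP, and invoke the VI-LCB guarantee on $\tmM$ (Lemma~\ref{lem:regret}) for each piece, with the in-support property of $\hpi$ (Lemma~\ref{lem:in-support}) supplying the $1/\min_{(s,a)\in\Omega}\mu(s,a)$ coverage factor. The one step I would flag is your handling of term $(B)$: you propose to first change measure from $d^{\hpi}$ to $\mu$, \emph{paying} the ratio $1/\min_{(s,a)\in\Omega}\mu(s,a)$, and then to apply a VI-LCB concentration bound \emph{with concentrability} $1/\min_{(s,a)\in\Omega}\mu(s,a)$ again. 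Taken literally this either double-counts the coverage factor (giving a bound with an extra multiplicative $1/\min_{(s,a)\in\Omega}\mu(s,a)$ in front of the square root, weaker than the claim) or, if the multiplicative change of measure is meant to carry the whole argument, it is not valid as stated because the integrand $\tV^{\mu}(s')-\tV^{\hpi}(s')$ is not sign-definite, so a pointwise ratio bound on the measures does not yield the inequality. The paper sidesteps both issues by applying the regret lemma \emph{once}, with comparator $\mu$ and initial distribution $d^{\hpi}$: the lemma's concentrability coefficient is then $\max_{s,a} d^{\mu}(s,a;d^{\hpi})/\mu(s,a)$, which is bounded by $1/\min_{(s,a)\in\Omega}\mu(s,a)$ precisely because $\hpi$ stays in $\Omega$. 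Everything else in your plan --- the effective discount $\gamma(1-\lambda)$ on $\Omega\times\Omega$, the trivial $V_{max}$ intersection, and the $(1-\gamma)/(1-\gamma(1-\lambda))^4$ bookkeeping (which in the paper comes from Lemma~\ref{lem:average-state-distribution} relating $\td^{\pi}$ to $d^{\pi}$) --- matches the paper's proof.
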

\begin{proof}
Under the setups in Theorem~\ref{thm:bound}, we have
    \alis{
\textnormal{Regret}(\hpi, \lambda(\cdot))=
        \tilde{V}^{\pi^*}(d_0) - \tilde{V}^{\hat{\pi}}(d_0)
         + \frac{\gamma\lambda}{1-\gamma}\EE_{(s,a,s')\sim d^{\hpi}}[  V^{\mu}(s') - \tilde{V}^{\hat{\pi}}(s')| s, s'\in\Omega ]. 
}

Further by Lemma~\ref{lem:exntend-thm1}, we can replace $V^{\mu}$ by $\tV^{\mu}$:
\ali{regret-main}{
\textnormal{Regret}(\hpi, \lambda(\cdot))=
        \tilde{V}^{\pi^*}(d_0) - \tilde{V}^{\hat{\pi}}(d_0)
         + \frac{\gamma\lambda}{1-\gamma}\EE_{(s,a,s')\sim d^{\hpi}}[ \tV^{\mu}(s') - \tilde{V}^{\hat{\pi}}(s')| s, s'\in\Omega ]. 
}

Then, by Lemma~\ref{lem:regret}, 
\eq{regret-1}{
\EE_{\mathbbm{D}}[\tilde{V}^{\pi^*}(d_0) - \tilde{V}^{\hat{\pi}}(d_0)]
\lesssim
  \min\bigg(V_{max}, V_{max}\sqrt{\frac{(1-\gamma)\abs{\mS}\max_{s,a}\frac{ d^{\pi^*}(s,a;d_{0})}{\mu(s,a)}}{N(1-\gamma(1-\lambda))^4}}\bigg),
}
and 
\eqs{
\EE_{\mathbbm{D}}[\tilde{V}^{\mu}(d^{\hpi}) - \tilde{V}^{\hat{\pi}}(d^{\hpi})]
\lesssim
 \min\bigg(V_{max}, V_{max}\sqrt{\frac{(1-\gamma)\abs{\mS}\max_{s,a}\frac{ d^{\mu}(s,a;d^{\hpi})}{\mu(s,a)}}{N(1-\gamma(1-\lambda))^4}}\bigg).
}
Note that, by Lemma~\ref{lem:in-support}, $d^{\hpi}$ stays in $\Omega$. 
Therefore, we get $\max_{s,a}\frac{ d^{\mu}(s,a;d^{\hpi})}{\mu(s,a)} = \max_{s,a \in \Omega}\frac{ d^{\mu}(s,a;d^{\hpi})}{\mu(s,a)} \leq \frac{1}{\min_{(s,a)\in\Omega}\mu(s,a)}$, which leads to
\eq{regret-2}{
\EE_{\mathbbm{D}}[\tilde{V}^{\mu}(d^{\hpi}) - \tilde{V}^{\hat{\pi}}(d^{\hpi})]
\lesssim
 \min\bigg(V_{max}, V_{max}\sqrt{\frac{(1-\gamma)\abs{\mS}\frac{1}{\min_{(s,a)\in\Omega}\mu(s,a)}}{N(1-\gamma(1-\lambda))^4}}\bigg).
}
Take \eqref{eq:regret-1} and \eqref{eq:regret-2} into \eqref{eq:regret-main}, we derive
\alis{
\EE_{\mathbbm{D}}[\textnormal{Regret}(\hpi, \lambda(\cdot))] \lesssim& \min\bigg(V_{max}, V_{max}\sqrt{\frac{(1-\gamma)\abs{\mS}\max_{s,a}\frac{ d^{\pi^*}(s,a;d_{0})}{\mu(s,a)}}{N(1-\gamma(1-\lambda))^4}}\bigg) 
\\&+ \frac{\gamma\lambda}{1-\gamma} \min\bigg(V_{max}, V_{max}\sqrt{\frac{(1-\gamma)\abs{\mS}\frac{1}{\min_{(s,a)\in\Omega}\mu(s,a)}}{N(1-\gamma(1-\lambda))^4}}\bigg).
}

\end{proof}

\subsection{Proof of Theorem~\ref{thm:bound}}
The proof follows by combining Lemma~\ref{lem:bias-bound} and \ref{lem:bound-regret}.

\section{VI-LCB with \algo}
\label{sec:vi-lcb}
We use the offline value iteration with lower confidence bound (VI-LCB)~\citep{rashidinejad2021bridging} as the base algorithm to analyze concretely the effects of \algo with finite samples for the tabular case.

\subsection{Algorithm}
We detail the procedure of \algo when implemented with VI-LCB for the tabular setting. 
To start with, we introduce several definitions which will be used in the following algorithm and theoretical analysis.
Without loss of generality, we assume that the sates take values in $\curly{1, 2, \cdots, \abs{\mS}}$ and that the actions take values in $\curly{1, 2, \cdots, \abs{\mA}}$.
Then, let $h$ be a $\abs{\mS}\times 1$ vector which denotes the heuristic function. 
We assume each component satisfies $h_s = V^{\mu}(s)$ for $s\in\Omega$. 
Let $\Lambda$ be a $\abs{\mS}\times \abs{\mS}$ matrix with $\Lambda_{s,s'}=\lambda$ if $s,s'\in\Omega$ and $\Lambda_{s,s'}=0$ if $s$ or $s'\in\Omega$.  
We use $\odot$ to denote the component-wise multiplication, and $\Lambda_{s,:}$ to denote a row of $\Lambda$ as a $\abs{\mS}\times 1$ vector. 
With slight abuse of notation, we use $t$ to denote the index of iteration in this section, with $T$ as the total number of iterations.

To conduct VI-LCB with \algo, we follow Algorithm 3 in \citet{rashidinejad2021bridging} but with a modified updating rule.
The procedure is summarized  Algorithm~\ref{alg:vi-lcb}.
Compared with the original VI-LCB, we highlight the key modification in Line 14 with blue color. 
Specifically, at the $t^{\textnormal{th}}$ iteration, based on \eqref{eq:hubo rule}, we modify updating rule of Algorithm 3 in \citet{rashidinejad2021bridging} into
\eqs{
Q_{t}(s,a) \leftarrow r_t(s,a)-b_t(s,a) + \gamma P_{s,a}^t\odot(I -\Lambda_{s,:})\cdot V_{t-1} + \gamma P_{s,a}^t\odot\Lambda_{s,:} \cdot h.
}
Note that we introduce heuristics by $\gamma P_{s,a}^t\odot\Lambda_{s,:} \cdot h$, while reducing the bootstrapping by $\gamma P_{s,a}^t\odot(I -\Lambda_{s,:})\cdot V_{t-1}$. 

\begin{algorithm}[t]
\caption{\algo with VI-LCB} 
		\label{alg:vi-lcb}
        \centering
		\begin{algorithmic}[1]
			\STATE {\bfseries Input:} Batch dataset $\mathbbm{D}$ and discount factor $\gamma$.
			\STATE Set $T:=\frac{\log N}{1-\gamma}$.
            \STATE Randomly split $\mathbbm{D}$ into $T+1$ sets $\mathbbm{D}_t = \curly{(s_i, a_i, r_i, s_i')}$ for $t\in\curly{0,1,\cdots,T}$, where $D_0$ consists of $\frac{N}{2}$ observations and other datasets have $\frac{N}{2T}$ observations. 
            \STATE Set $m_0(s,a):=\sum_{i=1}^m\mathbbm{1}\{(s_i,a_i) = (s,a)\}$ based on dataset $\mathbbm{D}_0$. 
			\STATE For all $a\in\mA$ and $s\in \mS$, initialize $Q_0(s,a) = 0$, $V_0(s) = 0$ and set $\pi_0(s)=\argmax_a m_0(s,a)$.
            \FOR{$t=1,\cdots,T$}
            \STATE Initialize $r_t(s,a) = 0$ and set $P_{s,a}^t$ to be a random probability vector. 
            \STATE Set $m_t(s,a):=\sum_{i=1}^m\mathbbm{1}\curly{(s_i,a_i) = (s,a)}$ based on dataset $\mathbbm{D}_t$.
            \STATE Compute penalty $b_t(s,a)$ for $L = 2000\log(2(T+1)\abs{\mS}\abs{\mA}N)$
            \eqs{
            b_t(s,a):=V_{\max}\sqrt{\frac{L}{m_t(s,a)\vee 1}}.
            }
            \FOR{$(s,a)\in(\mS,\mA)$}
            \IF{$m_t(s,a)\geq 1$}
            \STATE Set $P_{s,a}^t$ to be empirical transitions and $r_t(s,a)$ be empirical average of rewards. 
            \ENDIF
            \STATE {\color{ballblue}Set $Q_{t}(s,a) \leftarrow r_t(s,a)-b_t(s,a) + \gamma P_{s,a}^t\odot(I -\Lambda_{s,:})\cdot V_{t-1} + \gamma P_{s,a}^t\odot\Lambda_{s,:} \cdot h$.}
            \ENDFOR
            \STATE Compute $V_t^{mid}(s)\leftarrow \max_{a}Q_t(s,a)$ and $\pi_t^{mid}(s)\in \argmax_a Q_t(s,a)$.
            \FOR{$s\in\mS$}
            \IF{$V_t^{mid}(s) \leq V_{t-1}(s)$}
            \STATE $V_t(s)\leftarrow V_{t-1}(s)$ and $\pi_t(s)\leftarrow \pi_{t-1}(s)$.
            \ELSE
            \STATE $V_t(s)\leftarrow V_t^{mid}(s)$ and $\pi_t(s)\leftarrow \pi_t^{mid}(s)$.
            \ENDIF
            \ENDFOR
            \ENDFOR
            \STATE{\bfseries Return}    $\hpi:=\pi_T$.
		\end{algorithmic}
	\end{algorithm}

\subsection{Regret Analysis}
In this section, we study the regret under the reshaped MDP constructed by \algo. 
Specifically, we bound the regret of Algorithm~\ref{alg:vi-lcb} in Lemma~\ref{lem:regret}. 

\begin{lemma}[Regret of VI-LCB with \algo]
    \label{lem:regret}
    Let the assumptions in Section~\ref{sec:assumption} be satisfied. 
    Then, for any initial distribution $d_{init}$ in $\Omega$, the regret of Algorithm~\ref{alg:vi-lcb} is bounded by
    \eqs{
        \EE_{\mathbbm{D}}[\tV^{\pi^*}(d_{init}) - \tV^{\hpi}(d_{init})] \lesssim \min\bigg(V_{max}, V_{max}\sqrt{\frac{(1-\gamma)\abs{\mS}\max_{s,a}\frac{ d^{\pi^*}(s,a;d_{init})}{\mu(s,a)}}{N(1-\gamma(1-\lambda))^4}}\bigg).
    }
    
\end{lemma}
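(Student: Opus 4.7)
The proof plan is to lift the finite-sample regret analysis of VI-LCB from \citet{rashidinejad2021bridging} to the \algo-reshaped MDP $\tmM$. The key structural observation is that Algorithm~\ref{alg:vi-lcb} is precisely pessimistic value iteration on $\tmM$: the update in Line 14 can be rewritten as $Q_t(s,a) = \tilde r_t(s,a) - b_t(s,a) + \tgamma(s,s') P^t_{s,a} V_{t-1}$ once the deterministic heuristic shift $\gamma \Lambda_{s,:}\cdot h$ is absorbed into the effective empirical reward $\tilde r_t$. The penalty $b_t$ enforces the lower-confidence bound, and the monotone replacement in Lines~17--22 maintains the usual monotone-iterate property. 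Consequently, the proof architecture of \citet{rashidinejad2021bridging} -- dataset splitting into $T$ independent folds, Bernstein-type concentration on $P^t_{s,a}$, a pessimism invariant $V_t(s) \le \tV^{\pi_t}(s)$, and a pointwise regret decomposition along $d^{\pi^*}$ -- can be transplanted onto $\tmM$.

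The first step is to establish pessimism: $V_t(s)\le \tV^{\pi_t}(s)$ for every $t$ and $s$, with high probability. Because $h$ and $\Lambda$ are deterministic and $\|h\|_\infty\le V_{max}$, the per-term noise variance against the reshaped Bellman backup is still controlled by $V_{max}$, so the original choice of $b_t(s,a) = V_{max}\sqrt{L/(m_t(s,a)\vee 1)}$ still dominates it. The second step is a performance-difference expansion in $\tmM$ along $d^{\pi^*}$, yielding $\tV^{\pi^*}(d_{\text{init}})-\tV^{\hpi}(d_{\text{init}}) \le \EE_{(s,a)\sim d^{\pi^*}}[b_T(s,a) + \text{higher-order terms}]/(1-\gamma(1-\lambda))$. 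The gain over the original VI-LCB analysis is that the horizon factor is now $1/(1-\gamma(1-\lambda))$, not $1/(1-\gamma)$, because $d^{\pi^*}$ (supported in $\Omega$ by finiteness of the concentrability coefficient) lives where $\tgamma=\gamma(1-\lambda)$.

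The third step is to substitute $m_t(s,a) \gtrsim N(1-\gamma)\mu(s,a)/\log N$ -- coming from the $T=\log N/(1-\gamma)$-fold split in Line~3 -- into $b_t(s,a)$, and then change measure from $\mu$ to $d^{\pi^*}$, picking up $\max_{s,a} d^{\pi^*}(s,a;d_{\text{init}})/\mu(s,a)$. Tracking the exponents through the error-propagation recursion of VI-LCB on the reshaped discount yields the claimed $\sqrt{V_{max}^2 (1-\gamma)|\mS|C/(N(1-\gamma(1-\lambda))^4)}$: the factor $(1-\gamma)$ in the numerator comes from the per-fold sample size, three of the four factors of $1/(1-\gamma(1-\lambda))$ come from the standard error-propagation recursion on the reshaped discount, and the remaining one from the horizon of the decomposition. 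Combining with the trivial bound $V_{max}$ on regret produces the $\min$ expression.

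The main obstacle is carrying the transition-dependent discount $\tgamma(s,s')$ through the error-propagation recursion, since $\tgamma=\gamma(1-\lambda)$ only on $\Omega\times\Omega$ while $\tgamma=\gamma$ elsewhere. The telescoping must respect this and use the smaller discount only along trajectories that never leave $\Omega$; here Lemma~\ref{lem:in-support} (invariance of $d^{\hpi}$ in $\Omega$) is essential to guarantee that backward-propagated errors under $\hpi$ are geometrically damped by $\gamma(1-\lambda)$ rather than $\gamma$. A secondary technical point is that $d^{\pi^*}$ must also be supported on $\Omega$, which follows because states outside $\Omega$ have $\mu(s,a)=0$ and the concentrability ratio would otherwise diverge, making the bound vacuously $V_{max}$; this should be stated when invoking the change-of-measure step.
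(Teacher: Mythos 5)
Your overall architecture matches the paper's: Algorithm~\ref{alg:vi-lcb} is treated as pessimistic value iteration on the reshaped MDP $\tmM$, the fold-splitting/concentration/pessimism machinery of \citet{rashidinejad2021bridging} is transplanted onto the reshaped Bellman backup, and the two support conditions you flag --- Lemma~\ref{lem:in-support} forcing $\hpi$ to stay in $\Omega$ so that the effective discount along its trajectories is $\gamma(1-\lambda)$, and the bound collapsing to $V_{max}$ when $\pi^*$ leaves $\Omega$ --- are exactly the ones the paper relies on.

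The gap is in your accounting of the horizon factors, and it is not merely cosmetic. You attribute the $(1-\gamma)$ in the numerator to the per-fold sample size with $T=\log N/(1-\gamma)$. But the fold count enters only through $b_t(s,a)\propto \sqrt{1/m_t(s,a)}$ with $m_t\gtrsim (N/T)\mu(s,a)$, so it places a horizon factor in the \emph{denominator} under the square root; it cannot produce a $(1-\gamma)$ in the numerator. Moreover, transplanting the error-propagation recursion onto $\tmM$ naturally terminates with the \emph{reshaped} occupancy measure: the telescoping sums $\EE_{v_k^{\pi^*}}[b]$ with $v_k^{\pi^*}=d_{init}(\gamma(1-\lambda)P^{\pi^*})^k$ (Lemma~\ref{lem:v}), which yields a bound in terms of $\max_{s,a}\td^{\pi^*}(s,a;d_{init})/\mu(s,a)$ and three powers of $1/(1-\gamma(1-\lambda))$. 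The stated lemma, however, is in terms of the \emph{original} occupancy $d^{\pi^*}$. The missing step is the comparison $\td^{\pi}(s,a;d_{init})\leq \frac{1-\gamma}{1-\gamma(1-\lambda)}\,d^{\pi}(s,a;d_{init})$ (Lemma~\ref{lem:average-state-distribution}), which is what simultaneously produces the $(1-\gamma)$ in the numerator and the fourth power of $1-\gamma(1-\lambda)$ in the denominator. Together with the in-support lemma, this conversion is the one genuinely new ingredient beyond the original VI-LCB proof, and it is also what makes the final bound dominate the original $(1-\gamma)^{-3}$ rate for $\lambda>0$; a plan that stops at $\td^{\pi^*}$, or mis-attributes the $(1-\gamma)$ to the fold split, does not reach the claimed inequality.
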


\subsection{Notations }
\label{sec:assumption}
We first provide some matrix notations for MDPs. 
We use $P^{\pi} \in \mathbbm{R}^{\abs{\mS}\abs{\mA} \times \abs{\mS} \abs{\mA}}$ to denote the transition matrix induced by policy $\pi$ whose $(s,a)\times(s,a')$ element is equal to $P(s'|s,a)\pi(a'|s')$,
and $d^{\pi} \in \mathbbm{R}^{\abs{\mS}\abs{\mA}}$ to denote a state-action distribution induced by policy $\pi$ whose $(s,a)$ element is equal to $d(s)\pi(a|s)$.
Similarly, we use $\Lambda_Q\in \mathbbm{R}^{\abs{\mS}\abs{\mA} \times \abs{\mS} \abs{\mA}}$ to denote a extended matrix version of $\Lambda$. 

Further, we focus on the policies which stay in the data distribution support $\Omega$. 
Such polices are formally defined as $\curly{\pi|d^{\pi}(s,a;s_0) = 0 \textnormal{ for any } (s,a)\notin\Omega \textnormal{ and } s_0\in\Omega}$. 
We also define a clean event:
\eqs{
\mathcal{E}_{MDP} := \curly{\forall s,a,t, \abs{r(s,a)-r_t(s,a)+\gamma(P_{s,a}-P_{s,a}^t)\odot(I-\Lambda)\cdot V_{t-1}  } \leq b_t(s,a) }.
}

\subsection{Technical Lemmas}

With the aforementioned definitions and assumptions, we provide the following lemmas. 

\begin{lemma}
    \label{lem:in-support}
    Let $\pi$ be a policy learned by Algorithm~\ref{alg:vi-lcb}.
    Under the event
    $\mathcal{E}_{cover}:=\curly{\Omega \subseteq \mathbbm{D}_0}$
    $\pi$ stays in $\Omega$ for any initial state in $\Omega$.
    The probability of $\mathcal{E}_{cover}$ is greater than 
    \eqs{
    1-\abs{\mS}(1-\min_{s\in\Omega}\mu(s))^\frac{N}{2}.
    } 
\end{lemma}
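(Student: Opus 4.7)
The plan is to treat the two conclusions separately: a deterministic structural property showing the learned policy only selects in-support actions, and a probabilistic coverage bound for $\mathcal{E}_{\mathrm{cover}}$.

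For the structural claim, I would induct on the iteration index $t$ to show that for every state $s$ in the state projection of $\Omega$, the action $\pi_t(s)$ satisfies $(s, \pi_t(s)) \in \Omega$. The base case is immediate under $\mathcal{E}_{\mathrm{cover}}$: every such $s$ must have at least one action $a^{\star}$ with $(s, a^{\star}) \in \Omega$ sampled in $\mathbbm{D}_0$, so $m_0(s, a^{\star}) \geq 1$, and $\pi_0(s) = \argmax_a m_0(s,a)$ must select an action with positive count, which lies in $\Omega$. For the inductive step, $\pi_t(s)$ either inherits $\pi_{t-1}(s)$ (handled directly by induction) or is reset to $\pi_t^{mid}(s) = \argmax_a Q_t(s,a)$; the latter branch is taken only when $V_t^{mid}(s) > V_{t-1}(s)$, and since the $V_t(s)$ sequence is monotone non-decreasing from $V_0 \equiv 0$, this implies $V_t^{mid}(s) > 0$.

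The key subclaim is that the pessimism penalty forces $Q_t(s,a) < 0$ whenever $(s,a) \notin \Omega$. Indeed, in that regime $m_t(s,a) = 0$, so $b_t(s,a) = V_{\max}\sqrt{L}$, while $r_t(s,a) = 0$ by the initialization in Algorithm~\ref{alg:vi-lcb}, and the two transition-dependent terms together are upper-bounded by $\gamma V_{\max}$ (using that the entries of $V_{t-1}$ and $h$ lie in $[0, V_{\max}]$ and the entries of $\Lambda_{s,:}$ lie in $[0,1]$). Since $L = 2000 \log(2(T+1)|\mathcal{S}||\mathcal{A}|N) \geq 2000$, we get $Q_t(s,a) \leq \gamma V_{\max} - V_{\max}\sqrt{L} < 0$ for any out-of-$\Omega$ pair. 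Combined with $V_t^{mid}(s) > 0$, the argmax must land on some $a^{\star}$ with $(s, a^{\star}) \in \Omega$, closing the induction. To lift this pointwise guarantee to the whole trajectory starting from $s_0 \in \Omega$, I note that if $(s,a) \in \Omega$ and $\mathcal{P}(s'|s,a) > 0$ then the behavior policy visits $s'$ with positive probability, so $s'$ also lies in the state projection of $\Omega$; iterating gives $d^{\pi}(\cdot,\cdot;s_0)$ supported on $\Omega$.

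For the probability bound, I would apply a union bound over the states in the projection of $\Omega$. For a fixed such state $s$, the probability that none of the $N/2$ i.i.d.\ samples in $\mathbbm{D}_0$ visits $s$ equals $(1 - \mu(s))^{N/2}$ with $\mu(s) := \sum_a \mu(s,a)$. Union-bounding over at most $|\mathcal{S}|$ states yields $\Pr(\mathcal{E}_{\mathrm{cover}}^{c}) \leq |\mathcal{S}|\,(1 - \min_{s \in \Omega} \mu(s))^{N/2}$, matching the stated bound. The main obstacle in the argument is the $Q_t(s,a) < 0$ sign claim for out-of-$\Omega$ pairs: it requires tracking uniform upper bounds on $V_{t-1}$ and $h$ across iterates (both bounded by $V_{\max}$ since the modified rewards stay in $[0, 1 + \gamma\lambda V_{\max}]$ and $\tilde{\gamma} \leq \gamma$, making the reshaped update a contraction into $[0, V_{\max}]$), and checking that the logarithmic factor $L$ in the bonus is chosen large enough to dominate the bootstrap-plus-heuristic term uniformly in $t$.
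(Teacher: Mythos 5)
Your proposal is correct and follows essentially the same route as the paper's proof: the pessimism bonus $b_t(s,a)=V_{\max}\sqrt{L}$ on unvisited pairs forces $Q_t(s,a)\leq \gamma V_{\max}-V_{\max}\sqrt{L}<0$ off-support, the monotone value iterates starting from $V_0\equiv 0$ ensure any policy update selects an action with positive $Q$-value (hence in-support), the default $\pi_0=\argmax_a m_0(s,a)$ is in-support under $\mathcal{E}_{cover}$, and the probability bound is the same per-state union bound. Your packaging as an induction on $t$ (versus the paper's two-case analysis on whether some in-support action ever attains $Q_t>0$) and your explicit closure argument for the trajectory staying in $\Omega$ are minor organizational refinements of the same argument.
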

\begin{proof}
    Under the event $\mathcal{E}_{cover}$, we first fix $s\in \Omega$ and show that given $s$, the learned policy $\pi$ does not take any action $a' \notin \Omega$. 
    By Algorithm~\ref{alg:vi-lcb}, for any $(s,a') \notin \Omega$ and $t=1,2,\cdots,T$, we can derive
    \eqs{
        Q_t(s,a')\leq -V_{\max}\sqrt{2000 \log(2T+1)\abs{\mS}\abs{\mA}N} +\gamma V_{\max}.
    }
    Since $N$ and $T$ are larger than 1, we can conclude that
    \eqs{
        Q_t(s,a')\leq 0 \textnormal{ for any } a' \textnormal{ such that } (s,a') \notin \Omega.
    }
    Next we consider two cases:
    
    \emph{Case I: } If there exists $a$ such that $(s,a)\in \Omega$ and that there exists $t=1,2,\cdots, T$ such that
    \eqs{
        Q_t(s,a)>0,
    }
    we can conclude that $Q_T(s,a)\geq Q_t(s,a)>Q(s,a')$, which suggests that the learned policy $\pi$ never conducts action $a'\notin \Omega$ given state $s$. 

    \emph{Case II:}
    If for any $a$ such that $(s,a)\in \Omega$ and any $t=1,2,\cdots, T$, we have
    \eqs{
        Q_t(s,a)=0,
    }
    according to Algorithm~\ref{alg:vi-lcb}, $\pi(s) =\argmax_a m_0(s,a)$.
    By the definition of $\mathcal{E}_{cover}$, the policy $\pi$ stays in $\Omega$.

    Next, we consider the probability of $\mathcal{E}_{cover}$. Given a state $s\in\Omega$, 
    we define the event $\mathcal{E}_s:=\curly{m_0(s,a) = 0 \textnormal{ for all } a \textnormal{ such that } (s,a) \in\Omega}$. 
    The probability of $\mathcal{E}_s$ can be derive as
    \eqs{
    \textnormal{P}(\mathcal{E}_s) \leq (1-\mu(s))^\frac{N}{2}.
    }

    By union bound, we can derive that
    \eqs{
    \textnormal{P}( \neg \mathcal{E}_{cover} ) \leq \abs{\mS}(1-\min_{s\in\Omega}\mu(s))^\frac{N}{2}.
    }
    
    As a result, $\pi$ stays in $\Omega$ with the probability greater than 
    \eqs{
    1- \abs{\mS}(1-\min_{s\in\Omega}\mu(s))^\frac{N}{2}.
    }
\end{proof}
\begin{lemma} 
    \label{lem:initial-data}
Let $\pi$ be a policy that stays in $\Omega$. 
Then, with $d_{init}$ as any initial distribution in $\Omega$, we can derive
\eqs{
\frac{d_{init}(s)}{\mu(s,\pi(s))} \leq \frac{\frac{ \td^{\pi}(s,\pi(s); d_{init})}{\mu(s,\pi(s))}}{1-\gamma(1-\lambda)}.
}
\end{lemma}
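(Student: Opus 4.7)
\textbf{Proof plan for Lemma~\ref{lem:initial-data}.} The plan is to unpack the definition of $\td^\pi(\cdot,\cdot; d_{init})$ as a normalized discounted occupancy in the reshaped MDP and then retain only the initial ($t=0$) term of that sum, which will yield exactly the factor $1-\gamma(1-\lambda)$.

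First I would observe that because $\pi$ is assumed to stay in $\Omega$ and $d_{init}$ is supported in $\Omega$, every transition $(s,s')$ that can occur along a rollout of $\pi$ from $d_{init}$ satisfies $s,s'\in\Omega$. By the extension in \eqref{eq:extended h and lambda} with the constant choice $\lambda(s)=\alpha$ used in Section~\ref{sec:upperbound}, this means $\lambda(s,s')=\lambda$ on every such transition, and hence the transition-dependent discount $\tgamma(s,s')=\gamma(1-\lambda(s,s'))$ collapses to the constant $\gamma(1-\lambda)$ along these rollouts. Consequently, the reshaped occupancy $\td^\pi$ coincides with the standard $\gamma(1-\lambda)$-discounted occupancy of $\pi$ in the original dynamics, and the paper's convention (used, e.g., in the proof of Lemma~\ref{lm:extended mdp}) gives the normalized form
\[
\td^\pi(s,a;d_{init}) \;=\; (1-\gamma(1-\lambda))\sum_{t=0}^{\infty}(\gamma(1-\lambda))^{t}\, d_t^\pi(s,a;d_{init}),
\]
where $d_t^\pi(s,a;d_{init})$ is the time-$t$ state-action distribution induced by $\pi$ started at $d_{init}$ in the original MDP.

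Next, since $\pi$ is deterministic (as produced by Algorithm~\ref{alg:vi-lcb}), the $t=0$ term satisfies $d_0^\pi(s,\pi(s);d_{init}) = d_{init}(s)$, and all remaining terms are non-negative. Keeping only the $t=0$ term therefore yields
\[
\td^\pi(s,\pi(s);d_{init}) \;\geq\; (1-\gamma(1-\lambda))\, d_{init}(s).
\]
Rearranging gives $d_{init}(s) \leq \td^\pi(s,\pi(s);d_{init})/(1-\gamma(1-\lambda))$, and dividing both sides by $\mu(s,\pi(s))$ (which is strictly positive for $s\in\Omega$ since $\Omega$ is the support of $\mu$) delivers the claimed inequality.

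The only real obstacle is the notational bookkeeping: one must confirm that the convention used for $\td^\pi$ in this paper is the normalized discounted occupancy (so that the prefactor $1-\gamma(1-\lambda)$ appears), and that the transition-dependent discount does indeed reduce to the constant $\gamma(1-\lambda)$ under the ``stays-in-$\Omega$'' hypothesis; once these two points are settled, the argument is essentially a one-line lower bound by the first summand.
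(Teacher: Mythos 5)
Your proposal matches the paper's own proof: the paper likewise unpacks the definition $\td^{\pi}(s,\pi(s);d_{init}) = \tfrac{1}{1-\gamma(1-\lambda)}\sum_{t=0}^\infty \gamma^t(1-\lambda)^t \mathrm{P}_t(S_t=s;\pi,d_{init})$ and lower-bounds the sum by its $t=0$ term, which is exactly your argument. The only discrepancy is the normalization convention for $\td^\pi$ (you use the prefactor $1-\gamma(1-\lambda)$, the appendix writes $\tfrac{1}{1-\gamma(1-\lambda)}$), but under either convention the one-line truncation argument delivers the stated inequality, so this is the same proof.
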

\begin{proof}
    By definition, we have
    \eqs{
    \td^{\pi}(s,\pi(s);d_{init}(s)) := \frac{1}{1-\gamma(1-\lambda)}\sum_{t=0}^\infty \gamma^t(1-\lambda)^t \textnormal{P}_t(S_t = s,\pi;d_{init})
    }
    Therefore,
    $d_{init}(s) \leq \frac{1}{1-\gamma(1-\lambda)}\td^{\pi}(s,\pi(s); d_{init})$, which finishes the proof. 
   
\end{proof}

\begin{lemma}
    \label{lem:v}
    Let $v_k^\pi = d_{init}^\pi (\gamma P^{\pi}\odot(I-\Lambda_Q))^k$. 
    For a policy $\pi$ that stays in $\Omega$, the following equality holds: $v_k^\pi = d_{init}^\pi (\gamma(1-\lambda) P^{\pi})^k$.  
\end{lemma}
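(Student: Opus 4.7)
The plan is to induct on $k$, using the fact that a policy staying in $\Omega$ forces the relevant matrix entries of $P^\pi$ to live on the $(s,s') \in \Omega \times \Omega$ block, where $(I-\Lambda_Q)$ acts as the scalar $(1-\lambda)$ by construction.

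The first step is to unpack the elementwise operation. By the definition of $\Lambda_Q$ (the $(s,a)\times(s',a')$-indexed extension of $\Lambda$), the entry $(\gamma P^\pi \odot (I - \Lambda_Q))_{(s,a),(s',a')}$ equals $\gamma(1-\lambda)\, P^\pi_{(s,a),(s',a')}$ whenever $s,s' \in \Omega$, and $\gamma\, P^\pi_{(s,a),(s',a')}$ otherwise. So on the "in-support block" the Hadamard multiplication collapses to the scalar $\gamma(1-\lambda)$.

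Next, I will prove by induction on $k$ that (i) $v_k^\pi$ is supported on $\Omega$, and (ii) $v_k^\pi = d_{init}^\pi (\gamma(1-\lambda)P^\pi)^k$. The base case $k=0$ is immediate since $d_{init}$ is supported on $\Omega$ by hypothesis and $\pi(s)$ is an in-support action at every $s \in \Omega$ (because $\pi$ stays in $\Omega$ in the sense of the definition $d^\pi(s,a;s_0)=0$ for $(s,a)\notin \Omega$ and $s_0\in\Omega$). For the inductive step, I expand componentwise:
\begin{equation*}
v_{k+1}^\pi(s',a') = \sum_{(s,a)} v_k^\pi(s,a)\,\gamma P^\pi_{(s,a),(s',a')}\,(I-\Lambda_Q)_{(s,a),(s',a')}.
\end{equation*}
The hypothesis $v_k^\pi(s,a) = 0$ for $(s,a)\notin\Omega$ restricts the sum to $(s,a)\in\Omega$. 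Moreover, since $\pi$ stays in $\Omega$, $P^\pi_{(s,a),(s',a')} = 0$ whenever $(s,a)\in\Omega$ and $(s',a')\notin\Omega$. Hence only terms with both $s,s'\in\Omega$ survive, and on these terms $(I-\Lambda_Q)_{(s,a),(s',a')} = 1-\lambda$. Pulling the scalar out yields $v_{k+1}^\pi = \gamma(1-\lambda)\, v_k^\pi P^\pi$, which combined with the inductive formula gives the claimed identity; the support statement also propagates because $v_{k+1}^\pi$ inherits $P^\pi$'s invariance of $\Omega$.

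The only subtle point, which I view as the main obstacle, is to be careful with the notation: "$I$" inside the Hadamard product is the all-ones matrix (not the identity), since it only makes sense if it matches the shape of $\Lambda_Q$, and the definition of $\Lambda_Q$ needs to be stated precisely on $\abs{\mS}\abs{\mA}\times\abs{\mS}\abs{\mA}$ index sets consistent with the $\Lambda$ on $\abs{\mS}\times\abs{\mS}$. Once this is fixed, the rest is a clean two-line induction using (a) the support-preserving property of $\pi$ and (b) the fact that $\Lambda_Q$ is constant $\lambda$ on the in-$\Omega$ block.
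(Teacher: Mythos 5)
Your proof is correct and follows essentially the same route as the paper: the paper's argument is the one-line observation that since $\pi$ stays in $\Omega$, $P^{\pi}$ only touches entries of $I-\Lambda_Q$ equal to $1-\lambda$, which is exactly the support-propagation fact you formalize by induction. Your added care about $I$ denoting the all-ones matrix and the precise indexing of $\Lambda_Q$ is a reasonable clarification but not a different method.
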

\begin{proof}
    Since $\pi$ stays in $\Omega$, $P^{\pi}$ only accesses the entries in $I-\Lambda_{Q}$ whose values equal to $(1-\lambda)$. 
    This observation finishes the proof. 
\end{proof}

\begin{lemma}
    \label{lem:updated-lem-2}
    Let $\pi$ be a policy that stays in $\Omega$.
    Under the event $\mathcal{E}_{MDP}$, for all $t=1,2,\cdots,T$, 
    \eqs{
    \tilde{V}(\pi) - \tV(\pi_t) \leq V_{max}\gamma^t(1-\lambda)^t + 2\sum_{i=1}^t\EE_{v_{t-i}^\pi}[b_i(s,a)]. 
    }
\end{lemma}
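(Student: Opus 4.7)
The plan is to prove the inequality by induction on $t$, leveraging Algorithm~\ref{alg:vi-lcb}'s pessimistic update together with the concentration event $\mathcal{E}_{MDP}$. Interpreting $\tV^\pi$ and $\tV^{\pi_t}$ as evaluated against a common initial distribution $d_{init}$ supported in $\Omega$ (the same $d_{init}$ implicit in $v_k^\pi$ via Lemma~\ref{lem:v}), I first derive a one-step pointwise recursion, then unroll it $t$ times, and finally collapse the propagation kernel using Lemma~\ref{lem:v} because $\pi$ stays in $\Omega$.

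For the one-step recursion, since $\pi$ stays in $\Omega$ the action $\pi(s)$ is a valid benchmark: the greedy step in Algorithm~\ref{alg:vi-lcb} gives $V_t(s) \geq Q_t(s,\pi(s))$, so the reshaped Bellman equation $\tQ^\pi(s,\pi(s)) = r(s,\pi(s)) + \gamma P_{s,\pi(s)}\odot(I-\Lambda_{s,:})\cdot\tV^\pi + \gamma P_{s,\pi(s)}\odot\Lambda_{s,:}\cdot h$ combined with Algorithm~\ref{alg:vi-lcb}'s update yields a decomposition of $\tV^\pi(s)-V_t(s)$ into a reward-plus-bootstrap residual and a heuristic residual. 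Adding and subtracting $\gamma P_{s,\pi(s)}\odot(I-\Lambda_{s,:})\cdot V_{t-1}$ isolates the empirical-versus-true backup of $V_{t-1}$, which by $\mathcal{E}_{MDP}$ is bounded by $b_t(s,\pi(s))$; this pairs with the subtracted $-b_t$ in the pessimistic update to produce a factor $2b_t(s,\pi(s))$. The result is
\begin{equation*}
\tV^\pi(s) - V_t(s) \leq 2\,b_t(s,\pi(s)) + \gamma\bigl[P_{s,\pi(s)}\odot(I-\Lambda_{s,:})\bigr]\cdot(\tV^\pi - V_{t-1}).
\end{equation*}

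Iterating this inequality $t$ times in matrix-vector form and left-multiplying by $d_{init}$ gives
\begin{equation*}
d_{init}^\top(\tV^\pi - V_t) \leq 2\sum_{i=1}^t d^\pi_{init}\bigl(\gamma P^\pi\odot(I-\Lambda_Q)\bigr)^{t-i} b_i + d^\pi_{init}\bigl(\gamma P^\pi\odot(I-\Lambda_Q)\bigr)^t(\tV^\pi - V_0),
\end{equation*}
where $d^\pi_{init}$ denotes the $(s,a)$-measure induced by $d_{init}$ under $\pi$. Because $\pi$ stays in $\Omega$, every step of $P^\pi$ only accesses entries of $(I-\Lambda_Q)$ equal to $(1-\lambda)$, so Lemma~\ref{lem:v} collapses $d^\pi_{init}(\gamma P^\pi\odot(I-\Lambda_Q))^k$ to $v_k^\pi = d^\pi_{init}(\gamma(1-\lambda)P^\pi)^k$, which has total mass exactly $\gamma^k(1-\lambda)^k$. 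Using $V_0 = 0$ and $\tV^\pi \leq V_{max}$ makes the trailing term $V_{max}\gamma^t(1-\lambda)^t$, giving the claimed bound with $V_t$ in place of $\tV^{\pi_t}$.

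The main obstacle is bridging from $V_t$ (the algorithm's iterate) to $\tV^{\pi_t}$ (the value of the extracted policy), since the lemma is stated in terms of the latter. I would close this with an auxiliary pessimism claim: under $\mathcal{E}_{MDP}$, Algorithm~\ref{alg:vi-lcb}'s monotone update (the max step in lines 17--22) preserves the invariant $V_t(s)\leq \tV^{\pi_t}(s)$ for all $s\in\Omega$, so that $\tV^\pi - \tV^{\pi_t} \leq \tV^\pi - V_t$ on the support and the main bound carries over. A secondary technical subtlety is that Algorithm~\ref{alg:vi-lcb}'s heuristic term uses the empirical transition $P^t_{s,a}$ rather than $P_{s,a}$; since $h$ is a fixed, data-independent vector with $\|h\|_\infty \leq V_{max}$, a Hoeffding bound identical in form to the one underpinning $\mathcal{E}_{MDP}$ absorbs this discrepancy into $b_t$ up to constants, either by slightly enlarging the definition of $\mathcal{E}_{MDP}$ to include the heuristic residual or by rescaling the penalty radius.
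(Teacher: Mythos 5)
Your proof is correct and follows essentially the same route as the paper, whose entire proof of this lemma is to invoke Lemma 2 of \citet{rashidinejad2021bridging} with the propagation kernel collapsed to $\gamma(1-\lambda)P^{\pi}$ via Lemma~\ref{lem:v}; your one-step recursion under the pessimistic update, the $t$-fold unrolling, and the auxiliary pessimism invariant $V_t \leq \tV^{\pi_t}$ are precisely the ingredients of that cited argument adapted to the modified backup. Your further observation that the heuristic sampling residual $\gamma(P_{s,a}-P^t_{s,a})\odot\Lambda_{s,:}\cdot h$ is not covered by $\mathcal{E}_{MDP}$ as defined and must be absorbed into the penalty is a legitimate detail the paper glosses over.
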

\begin{proof}
    The proof follows the Lemma 2 of \citet{rashidinejad2021bridging} combined with Lemma~\ref{lem:v}. 
\end{proof}

\begin{lemma}
    \label{lem:average-state-distribution}
    For any policy $\pi$ that stays in $\Omega$, the following inequality is true: 
\eqs{
\td^{\pi}(s,a;d_{init}) \leq \frac{1-\gamma}{1-\gamma(1-\lambda)}  d^{\pi}(s,a;d_{init}),
}
for any $(s,a) \in \Omega$. 
\end{lemma}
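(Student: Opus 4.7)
The plan is to write both average occupancies as discounted series weighted by the time-$t$ state-action marginals $\mathbb{P}_t^\pi(s,a;d_{init})$ under $\pi$, and then take advantage of the pointwise bound $\tgamma := \gamma(1-\lambda) \leq \gamma$. Since the transition kernel $\mP$ and the policy $\pi$ are identical in the original and in the \algo-reshaped MDP, the marginals $\mathbb{P}_t^\pi$ coincide in both MDPs; only the per-step discount that enters the occupancy definition changes.

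First, I would invoke the hypothesis that $\pi$ stays in $\Omega$ together with the fact that $d_{init}$ is supported in $\Omega$: by induction on $t$, every realized trajectory $(s_0,a_0,s_1,a_1,\ldots)$ generated by $\pi$ from $d_{init}$ has $s_t\in\Omega$ almost surely. In combination with the extension in \eqref{eq:extended h and lambda} and the constant-$\lambda$ assumption of Theorem~\ref{thm:bound}, this forces $\tgamma(s_t,s_{t+1}) = \gamma(1-\lambda) = \tgamma$ along every such trajectory, so the effective cumulative discount after $t$ steps in the reshaped MDP collapses from $\prod_{i<t}\tgamma(s_i,s_{i+1})$ to the simple $\tgamma^t$.

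Second, using this simplification I would expand the two occupancies in the form $d^\pi(s,a;d_{init}) = \tfrac{1}{1-\gamma}\sum_{t\geq 0}\gamma^t\mathbb{P}_t^\pi(s,a;d_{init})$ and $\td^\pi(s,a;d_{init}) = \tfrac{1}{1-\tgamma}\sum_{t\geq 0}\tgamma^t\mathbb{P}_t^\pi(s,a;d_{init})$, matching the representation already used in the proof of Lemma~\ref{lem:initial-data}. Because $\tgamma\leq\gamma$ gives $\tgamma^t\leq\gamma^t$ and the marginals are non-negative, a termwise comparison produces $\sum_t\tgamma^t\mathbb{P}_t^\pi(s,a;d_{init}) \leq \sum_t\gamma^t\mathbb{P}_t^\pi(s,a;d_{init})$. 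Multiplying through by $\tfrac{1}{1-\tgamma}$ and regrouping the prefactor as $\tfrac{1}{1-\tgamma} = \tfrac{1-\gamma}{1-\tgamma}\cdot\tfrac{1}{1-\gamma}$ then yields exactly $\td^\pi(s,a;d_{init}) \leq \tfrac{1-\gamma}{1-\tgamma}\,d^\pi(s,a;d_{init})$, which is the claim.

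The step I expect to be the main obstacle is handling the transition-dependent reshaped discount $\tgamma(s,s')$ rigorously: for any transition leaving $\Omega$, the extension \eqref{eq:extended h and lambda} sets $\lambda(s,s')=0$, so the cumulative discount along such trajectories would fail to collapse to $\tgamma^t$. The entire argument therefore hinges on the stays-in-$\Omega$ hypothesis ruling out such escapes with probability one, which in turn requires that $\pi$ only selects actions whose $(s,a)$ pairs lie in $\Omega$ at every visited state. Once this reduction is carefully established via the definition of ``stays in $\Omega$'' from Section~\ref{sec:assumption}, the remaining termwise comparison is immediate and the normalization factors combine cleanly to produce the exact constant $(1-\gamma)/(1-\tgamma)$ in the bound.
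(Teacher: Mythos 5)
Your proposal is correct and follows essentially the same route as the paper's proof: expand both occupancies as discounted series over the time-$t$ marginals with normalizations $\tfrac{1}{1-\gamma(1-\lambda)}$ and $\tfrac{1}{1-\gamma}$, compare termwise using $\gamma^t(1-\lambda)^t \leq \gamma^t$, and regroup the prefactor. Your additional care about why the transition-dependent discount collapses to $(\gamma(1-\lambda))^t$ under the stays-in-$\Omega$ hypothesis is a point the paper leaves implicit in its definition of $\td^\pi$, but it is the same argument.
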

\begin{proof}
By definition, we have
    \alis{
    \td^{\pi}(s,a;d_{init}) &:= \frac{1}{1-\gamma(1-\lambda)}\sum_{t=0}^\infty \gamma^t(1-\lambda)^t \textnormal{P}_t(S_t = s,A_t=a,;\pi, d_{init})
    \\ d^{\pi}(s,a;d_{init}) &:= \frac{1}{1-\gamma}\sum_{t=0}^\infty \gamma^t \textnormal{P}_t(S_t = s,A_t=a ;\pi, d_{init}).
    }
    Therefore,
    \eqs{
        \td^{\pi}(s,a;d_{init}) \leq \frac{1}{1-\gamma(1-\lambda)} \sum_{t=0}^\infty \gamma^t \textnormal{P}_t(S_t = s,A_t =a; \pi, d_{init}) = \frac{1-\gamma}{1-\gamma(1-\lambda)}d^{\pi}(s,a;d_{init})
    }
\end{proof}

\subsection{Proof of Lemma~\ref{lem:regret}}
By the event $\mathcal{E}_{cover}$, we can decompose the regret into
\ali{in-support-regret}{
    & \EE_{\mathbbm{D}}[\tV^{\pi^*}(d_{init}) - \tV^{\hpi}(d_{init})] 
    \\&\leq \EE_{\mathbbm{D}}[\tV^{\pi^*}(d_{init}) - \tV^{\hpi}(d_{init})\mathbbm{1}\curly{\mathcal{E}_{cover}}] + \EE_{\mathbbm{D}}[\tV^{\pi^*}(d_{init}) - \tV^{\hpi}(d_{init})\mathbbm{1}\curly{\mathcal{E}_{cover}^c}]
    \\ &\leq \EE_{\mathbbm{D}}[\tV^{\pi^*}(d_{init}) - \tV^{\hpi}(d_{init})\mathbbm{1}\curly{\mathcal{E}_{cover}}] + V_{max} \abs{\mS}\bigg(\frac{\abs{\mS}-1}{\abs{\mS}}\bigg)^\frac{N}{2},
}
where the second inequality is by 
Lemma~\ref{lem:in-support}.
Next, we focus on $\EE_{\mathbbm{D}}[\tV^{\pi^*}(d_{init}) - \tV^{\hpi}(d_{init})\mathbbm{1}\curly{\mathcal{E}_{cover}}]$.
Following the analysis in C.5 of \citet{rashidinejad2021bridging}, we can derive 
\alis{
    &\EE_{\mathbbm{D}}[(\tV^{\pi^*}(d_{init}) - \tV^{\hpi}(d_{init}))\mathbbm{1}\curly{\mathcal{E}_{cover}}] 
    \\ \leq &\EE_{\mathbbm{D}}[(\tV^{\pi^*}(d_{init}) - \tV^{\hpi}(d_0))\mathbbm{1}\curly{\mathcal{E}_{cover}}]
    \\ \leq& \EE_{\mathbbm{D}}[\EE_{s\sim d_{init}}[(\tV^{\pi^*}(d_{init}) - \tV^{\hpi}(d_{init})) 
    \mathbbm{1}\curly{\exists t\leq T,m_t(s,\pi^*(s))=0 }\mathbbm{1}\curly{\mathcal{E}_{cover}} ]]:=T_1
    \\&+ \EE_{\mathbbm{D}}[\EE_{s\sim d_{init}}[(\tV^{\pi^*}(d_{init}) - \tV^{\hpi}(d_{init})) \\&\mathbbm{1}\curly{\forall t\leq T,m_t(s,\pi^*(s))\geq 1 }\mathbbm{1}\curly{\mathcal{E}_{MDP}}\mathbbm{1}\curly{\mathcal{E}_{cover}} ]]:=T_2
    \\&+ \EE_{\mathbbm{D}}[\EE_{s\sim d_{init}}[(\tV^{\pi^*}(d_{init}) - \tV^{\hpi}(d_{init})) 
    \\&\mathbbm{1}\curly{\forall t\leq T,m_t(s,\pi^*(s))\geq 1 }\mathbbm{1}\curly{\mathcal{E}^c_{MDP}}\mathbbm{1}\curly{\mathcal{E}_{cover}} ]]:=T_3. 
}
By Lemma~\ref{lem:in-support}, under the event $\mathcal{E}_{cover}$, $\hpi$ stays in $\Omega$.
In other words, Lemma~\ref{lem:initial-data}, \ref{lem:v} and \ref{lem:updated-lem-2} are all applicable to $\hpi$. 

Next, for the following analysis, we consider the case that $\pi^*$ stays in $\Omega$. 
We will discuss the case that $\Omega$ does not cover $\pi^*$ at the end of the proof.

By Lemma~\ref{lem:initial-data} and C.5.1 of \citet{rashidinejad2021bridging}, 
\eqs{
T_1 \leq \frac{8V_{max}\abs{\mS}T^2 \max_{s,a}\frac{ \td^{\pi^*}(s,a; d_{init})}{\mu(s,a)}}{9(1-(1-\lambda)\gamma)N}.
}
By Lemma~\ref{lem:updated-lem-2} and C.5.2 of \citet{rashidinejad2021bridging}, 
\eqs{
T_2\leq V_{max}\gamma^T (1-\lambda)^T + 32\frac{V_{max}}{1-\gamma(1-\lambda)}\sqrt{\frac{2L\abs{\mS}T\max_{s,a}\frac{ \td^{\pi^*}(s,a; d_{init})}{\mu(s,a)}}{N}}.
}
By C.5.2 of \citet{rashidinejad2021bridging},
\eqs{
T_3 \leq \frac{V_{max}}{N}.
}
Combining $T_1$, $T_2$ $T_3$, and \eqref{eq:in-support-regret}, with $T=\log N/(1-\gamma(1-\lambda))$,
\eqs{
\EE_{\mathbbm{D}}[\tV^{\pi^*}(d_{init}) - \tV^{\hpi}(d_{init})] \lesssim 
\min\bigg(V_{max}, V_{max}\sqrt{\frac{\abs{\mS}\max_{s,a}\frac{ \td^{\pi^*}(s,a; d_{init})}{\mu(s,a)}}{N(1-\gamma(1-\lambda))^3}}\bigg). 
}
Finally, by Lemma~\ref{lem:average-state-distribution}, we finish the proof:
    \eq{regret-final}{
        \EE_{\mathbbm{D}}[\tV^{\pi^*}(d_{init}) - \tV^{\hpi}(d_{init})] \lesssim \min\bigg(V_{max}, V_{max}\sqrt{\frac{(1-\gamma)\abs{\mS}\max_{s,a}\frac{ d^{\pi^*}(s,a;d_{init})}{\mu(s,a)}}{N(1-\gamma(1-\lambda))^4}}\bigg).
    }
Note that \eqref{eq:regret-final} is derived under the assumption that $\pi^*$ stays in $\Omega$. 
However, it also holds when $\pi^*$ is not covered by $\Omega$. 
In that case,  $\max_{s,a}\frac{ d^{\pi^*}(s,a;d_{init})}{\mu(s,a)} = \infty$ and $\EE_{\mathbbm{D}}[\tV^{\pi^*}(d_{init}) - \tV^{\hpi}(d_{init})] \leq V_{max}$.

\section{Extended Results for Experiments}
We conduct \algo with four offline RL base methods on 27 datasets in D4RL and Meta-World. 
By blending heuristics with bootstrapping, \algo reduces the complexity of decision-making and provides smaller regret while generating limited bias. 
We demonstrate that \algo is able to improve the performance of offline RL methods. 

\subsection{Implementation Details}
\label{sec:ap-implementation}

We implement base offline RL methods with code sources provided in Table~\ref{tab:code}.
\begin{table}[H]
    \centering
     \caption{Code source for base methods.}
    \label{tab:code}
    \vskip 0.15in
    \resizebox{\textwidth}{!}{
    \begin{tabular}{|c|c|}
      \hline
       Base Methods  & Code Source  \\
       \hline
       ATAC  & \url{https://github.com/chinganc/lightATAC}\\
       \hline CQL & \url{https://github.com/young-geng/CQL/tree/master/SimpleSAC}
       \\ \hline
       IQL & \url{https://github.com/gwthomas/IQL-PyTorch/blob/main/README.md}\\
        \hline
    TD3+BC  & \url{https://github.com/sfujim/TD3_BC}
    \\ \hline
       
    \end{tabular}}
\end{table}
Experiments with ATAC, IQL, and TD3+BC are ran on Standard\_F4S\_V2 nodes of Azure, and experiments with CQL are ran on NC6S\_V2 nodes of Azure.
As suggested by the original implementation of the considered base offline RL methods, we use 3-layer fully connected neural networks for policy critics and value networks, where each hidden layer has 256 neurons and ReLU activation and the output layer is linear.
The first-order optimization is implemented by ADAM~\citep{kingma2014adam} with a minibatch size as 256.
The learning rates are selected following the original implementation and are reported in Table~\ref{tab:learning_rates}.
\begin{table}[H]
    \centering
     \caption{Learning rates}
    \label{tab:learning_rates}
    \vskip 0.15in
    \begin{tabular}{|c|c|c|}
      \hline
       Base Methods  & Policy Network& Q Network\\
       \hline
       ATAC & $5 \times 10^{-7}$& $5 \times 10^{-4}$\\
       \hline
       CQL & $3 \times 10^{-4}$& $3 \times 10^{-4}$\\
       \hline
       IQL & $3 \times 10^{-4}$& $3 \times 10^{-4}$\\
       \hline
       TD3\_BC & $3 \times 10^{-4}$& $3 \times 10^{-4}$\\
       \hline
    \end{tabular}
\end{table}
For each dataset, the hyperparameters of base methods are tuned from six different configurations suggested by the original papers. 
Such configurations are summarized in Table~\ref{tab:base-tune}.
\begin{table}[H]
    \centering
     \caption{Learning rates}
    \label{tab:base-tune}
    \vskip 0.15in
    \begin{tabular}{|c|c|c|}
      \hline
       Base Methods  & Hyperparameter& Values\\
       \hline
       ATAC& degree of pessimism $\beta$&$\curly{1.0,4.0,16.0,0.25,0.625,10}$\\\hline
       CQL & min Q weight & $\curly{5,20,80,1.25,0.3125,10.0}$\\
       \hline
       \multicolumn{1}{|c|}{\multirow{2}{*}{IQL}} & inverse temperature $\beta$& $\curly{6.5,3.0,10.0}$\\
       \cline{2-3}
       & expectile parameter $\tau$&$\curly{0.7,0.9}$\\
       \hline
       TD3+BC & strength of the regularizer $\lambda$& $\curly{2.5,0.625,10.0,40.0,0.15625,1}$\\
       \hline
    \end{tabular}
\end{table}

Meanwhile, \algo has one extra hyperparameter, $\alpha$, which is \emph{fixed} for all the datasets but different for each base method.
Specifically, $\alpha$ is selected from $\curly{0.001, 0.01, 0.1}$ according to the relative improvement averaged over all the datasets in one task. 
Later in the robustness analysis, we show that the performance of \algo is insensitive to the selection of $\alpha$. 
For each configuration and each base method, we repeat experiments three times with seeds in $\curly{0, 1, 10}$.   

\subsection{Base Performance and Standard Deviations for D4RL Datasets}
\label{sec:ap-d4rl-std}
We provide the performance of base offline RL methods and standarde deviations in Table~\ref{tab:d4rl}.

\begin{table}[H] 
\centering
\caption{Relative normalized score improvement for \algo and normalized scores of base methods on D4RL datasets.}
\label{tab:d4rl}
\vskip 0.15in
\resizebox{\textwidth}{!}{
\begin{tabular}{|c|ccc|c|}
\hline
{} & Sigmoid & Rank & Constant &             ATAC \\
\hline
halfcheetah-medium-expert-v2 &               $\bm{0.02\pm0.0}$ &              $\bm{0.02\pm0.01}$ &                $\bm{0.02\pm0.0}$ &    $92.5\pm0.73$ \\
halfcheetah-medium-replay-v2 &              $0.01\pm0.01$ &              $\bm{0.02\pm0.01}$ &               $\bm{0.02\pm0.02}$ &   $46.89\pm0.25$ \\
halfcheetah-medium-v2        &              $-0.0\pm0.01$ &                $0.0\pm0.0$ &               $\bm{0.02\pm0.01}$ &   $52.54\pm0.63$ \\
hopper-medium-expert-v2      &                $0.0\pm0.0$ &                $0.0\pm0.0$ &                 $0.0\pm0.0$ &  $111.04\pm0.13$ \\
hopper-medium-replay-v2      &                $0.0\pm0.0$ &               $0.0\pm0.01$ &                 $0.0\pm0.0$ &  $101.23\pm0.59$ \\
hopper-medium-v2             &              $0.01\pm0.04$ &              $\bm{0.02\pm0.07}$ &              $-0.03\pm0.15$ &   $85.59\pm4.76$ \\
walker2d-medium-expert-v2    &                $0.0\pm0.0$ &               $\bm{0.02\pm0.0}$ &                $0.0\pm0.01$ &   $111.99\pm0.8$ \\
walker2d-medium-replay-v2    &              $0.03\pm0.03$ &               $\bm{0.08\pm0.0}$ &               $0.05\pm0.08$ &    $84.22\pm2.7$ \\
walker2d-medium-v2           &              $0.01\pm0.01$ &             $-0.02\pm0.05$ &                $\bm{0.02\pm0.0}$ &    $87.2\pm0.67$ \\
\hline
Average Relative Improvement                         &              $0.01$ &              $0.02$ &               $0.01$ &  - \\
\hline
\hline
{} & Sigmoid & Rank & Constant &              CQL \\
\hline
halfcheetah-medium-expert-v2 &              $0.02\pm0.02$ &              $\bm{0.03\pm0.06}$ &              $-0.02\pm0.17$ &   $81.96\pm9.93$ \\
halfcheetah-medium-replay-v2 &               $0.1\pm0.01$ &              $\bm{0.18\pm0.02}$ &               $0.13\pm0.01$ &   $40.98\pm0.81$ \\
halfcheetah-medium-v2        &              $-0.1\pm0.01$ &             $\bm{-0.02\pm0.01}$ &              $-0.03\pm0.01$ &   $51.05\pm0.83$ \\
hopper-medium-expert-v2      &              $\bm{0.42\pm0.01}$ &              $0.39\pm0.07$ &               $0.38\pm0.05$ &  $78.39\pm14.51$ \\
hopper-medium-replay-v2      &              $\bm{0.03\pm0.01}$ &              $0.01\pm0.01$ &               $0.02\pm0.01$ &   $97.21\pm3.15$ \\
hopper-medium-v2             &              $\bm{0.18\pm0.14}$ &              $0.15\pm0.17$ &               $0.17\pm0.06$ &   $70.27\pm8.77$ \\
walker2d-medium-expert-v2    &               $0.21\pm0.0$ &              $0.22\pm0.01$ &                $\bm{0.22\pm0.0}$ &  $89.38\pm17.54$ \\
walker2d-medium-replay-v2    &              $0.08\pm0.27$ &              $\bm{0.38\pm0.06}$ &               $0.37\pm0.04$ &   $63.35\pm6.22$ \\
walker2d-medium-v2           &              $0.05\pm0.05$ &              $\bm{0.05\pm0.03}$ &              $-0.03\pm0.15$ &   $79.41\pm1.82$ \\
\hline
Average Relative Improvement                         &              $0.11$ &              $0.16$ &               $0.13$ &  - \\
\hline
\hline
{} & Sigmoid & Rank & Constant &              IQL \\
\hline
halfcheetah-medium-expert-v2 &               $0.0\pm0.01$ &               $\bm{0.0\pm0.0}$ &              $-0.05\pm0.03$ &    $92.92\pm0.5$ \\
halfcheetah-medium-replay-v2 &             $-0.03\pm0.02$ &              $\bm{0.0\pm0.02}$ &              $-0.04\pm0.03$ &   $44.35\pm0.21$ \\
halfcheetah-medium-v2        &              $-0.01\pm0.0$ &               $\bm{0.0\pm0.0}$ &              $-0.01\pm0.01$ &   $49.61\pm0.16$ \\
hopper-medium-expert-v2      &             $-0.04\pm0.03$ &             $\bm{-0.03\pm0.02}$ &              $-0.09\pm0.12$ &  $108.13\pm2.69$ \\
hopper-medium-replay-v2      &              $\bm{0.72\pm0.09}$ &              $0.58\pm0.25$ &               $0.62\pm0.27$ &   $53.98\pm4.18$ \\
hopper-medium-v2             &              $\bm{0.03\pm0.03}$ &              $0.02\pm0.03$ &              $-0.01\pm0.15$ &   $61.03\pm3.27$ \\
walker2d-medium-expert-v2    &               $\bm{0.01\pm0.0}$ &              $0.01\pm0.01$ &                $\bm{0.01\pm0.0}$ &   $109.59\pm0.2$ \\
walker2d-medium-replay-v2    &              $0.15\pm0.12$ &              $0.21\pm0.02$ &               $\bm{0.13\pm0.04}$ &   $73.57\pm2.64$ \\
walker2d-medium-v2           &              $0.03\pm0.02$ &              $\bm{0.03\pm0.01}$ &               $0.04\pm0.01$ &   $81.68\pm2.48$ \\
    \hline
 Average Relative Improvement                        &              $0.09$ &              $0.09$ &               $0.06$ &   - \\
\hline
\hline
{} & Sigmoid & Rank & Constant &           TD3+BC \\
\hline
halfcheetah-medium-expert-v2 &             $-0.02\pm0.03$ &             $-0.02\pm0.03$ &              $\bm{-0.02\pm0.02}$ &   $94.82\pm0.33$ \\
halfcheetah-medium-replay-v2 &              $-0.1\pm0.03$ &             $-0.08\pm0.16$ &               $\bm{-0.0\pm0.02}$ &   $54.38\pm1.24$ \\
halfcheetah-medium-v2        &             $-0.07\pm0.02$ &             $-0.04\pm0.01$ &              $\bm{-0.01\pm0.01}$ &   $64.63\pm2.15$ \\
hopper-medium-expert-v2      &             $-0.06\pm0.13$ &              $-0.16\pm0.2$ &               $\bm{0.03\pm0.05}$ &  $98.79\pm12.57$ \\
hopper-medium-replay-v2      &              $0.16\pm0.08$ &              $0.21\pm0.03$ &               $\bm{0.24\pm0.03}$ &  $81.43\pm27.77$ \\
hopper-medium-v2             &              $0.51\pm0.17$ &               $\bm{0.7\pm0.02}$ &                $0.6\pm0.19$ &   $59.12\pm3.39$ \\
walker2d-medium-expert-v2    &               $0.0\pm0.01$ &              $\bm{0.01\pm0.01}$ &                 $0.0\pm0.0$ &  $111.87\pm0.17$ \\
walker2d-medium-replay-v2    &              $\bm{0.12\pm0.04}$ &              $0.09\pm0.07$ &               $0.05\pm0.02$ &   $84.95\pm1.85$ \\
walker2d-medium-v2           &             $-0.01\pm0.03$ &              $\bm{0.08\pm0.01}$ &                $0.0\pm0.07$ &   $84.09\pm1.94$ \\
\hline
Average Relative Improvement                         &              $0.06$ &              $0.09$ &                $0.1$ &   - \\
\hline
\end{tabular}}
\centering
\end{table}

Next, we study how the performance of HUBL vary over different epochs. 
Specifically, we provide a plot of the achieved normalized average returns over different epochs for HUBL with TD3+BC on hopper-medium-v2 in Figure~\ref{fig:return}. 

\begin{figure}[H]
\centering
  \begin{subfigure}{0.32\linewidth} 
    \centering
    \includegraphics[width=\linewidth]{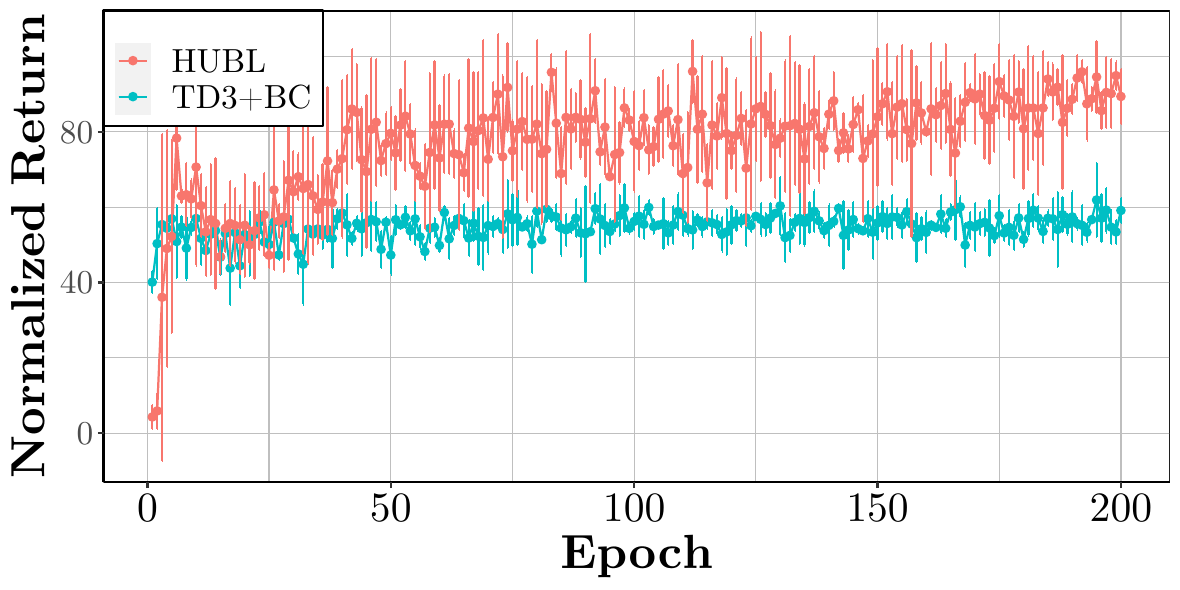}
    \caption{HUBL with constant labeling} 
  \end{subfigure}
  \begin{subfigure}{0.32\linewidth} 
    \centering
    \includegraphics[width=\linewidth]{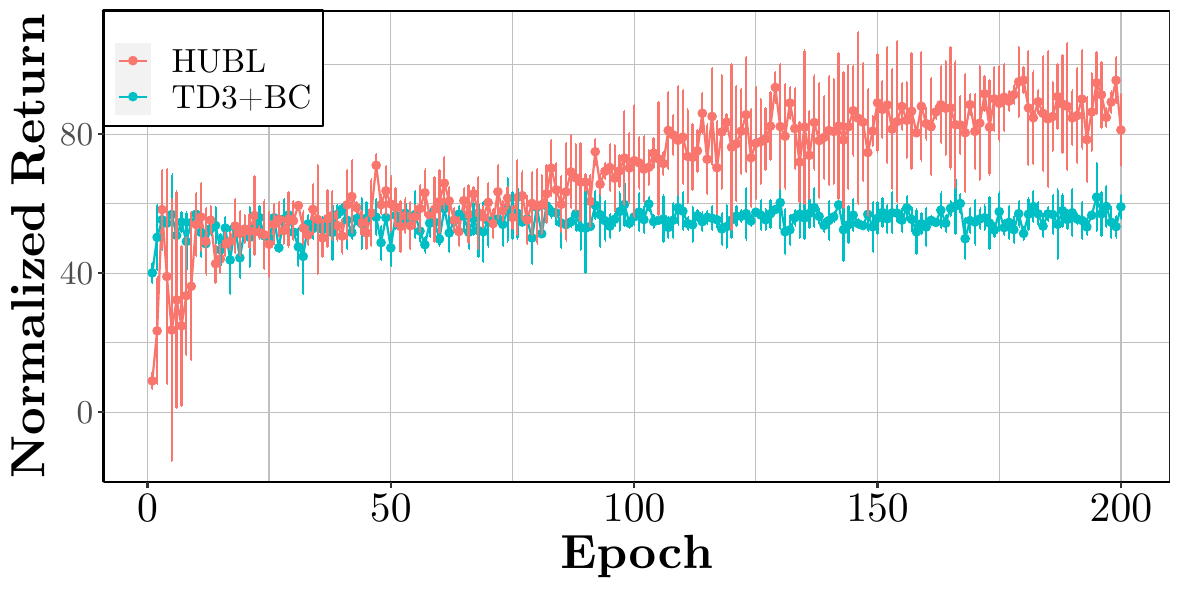}
    \caption{HUBL with Sigmoid labeling}
  \end{subfigure}
  \begin{subfigure}{0.32\linewidth} 
    \centering
    \includegraphics[width=\linewidth]{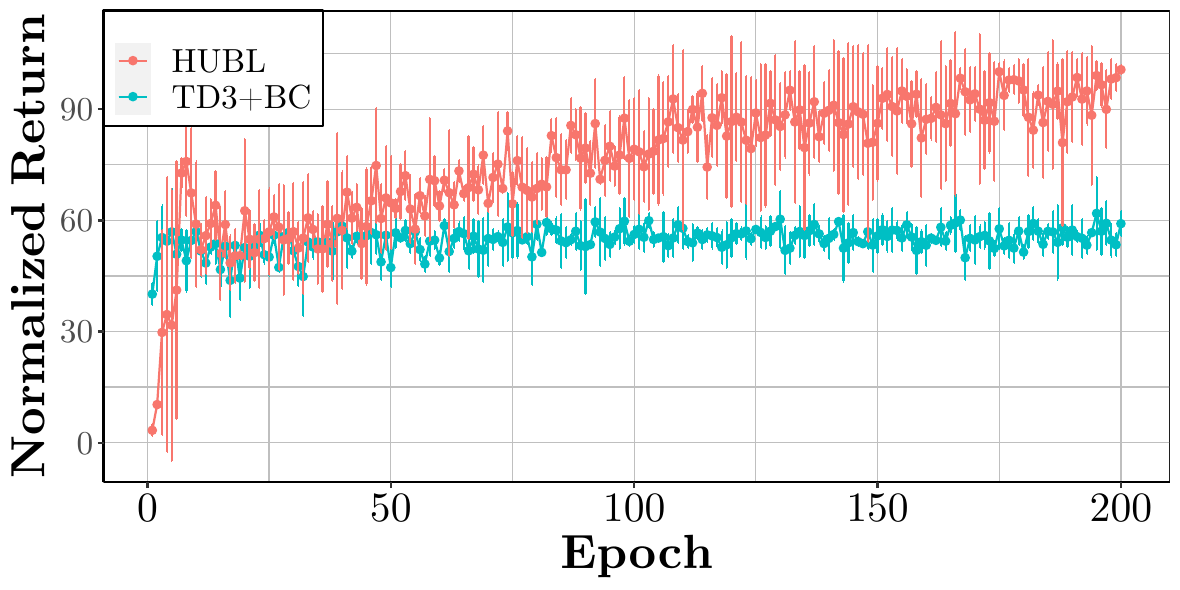}
    \caption{HUBL with Rank labeling} 
  \end{subfigure}
  \caption{Average normalized return of HUBL with TD3+BC on hopper-medium-v2}
  \label{fig:return}
\end{figure}

{
\subsection{Robustness of \algo to $\alpha$}
\label{ap:robustness}

In this section, we demonstrate the robustness of \algo to $\alpha$. 
Specifically, we provide the average relative improvements of \algo on D4RL datasets in Table~\ref{tab:robustness}.
Notice that most average relative improvements are positive, which shows that \algo can improve the performance with different 
 values.
}

 \begin{table}[H]
\centering
\caption{Average relative improvements of \algo under different $\alpha$'s}
\label{tab:robustness}
\vskip 0.15in
\begin{subtable}{\linewidth}\centering
{\begin{tabular}{|c|ccc|}
\hline
&Sigmoid&Rank&Constant\\
\hline$\alpha = 0.001$&0.01&0.01&0\\ 
\hline $\alpha = 0.01$&0.01&0.01&0.01\\
\hline $\alpha = 0.1$&0&0.02&0.01\\ 
\hline
\end{tabular}}
\caption{ATAC}
\end{subtable}
\begin{subtable}{\linewidth}\centering
{\begin{tabular}{|c|ccc|}
\hline
&Sigmoid&Rank&Constant\\
\hline$\alpha = 0.001$&0.05&0.05&0.03\\ 
\hline $\alpha = 0.01$&0.08&0.05&0.10\\
\hline $\alpha = 0.1$&0.11&0.16&0.13\\ 
\hline
\end{tabular}}
\caption{CQL}
\end{subtable}
\begin{subtable}{\linewidth}\centering
{\begin{tabular}{|c|ccc|}
\hline
&Sigmoid&Rank&Constant\\
\hline$\alpha = 0.001$&0.02&0&-0.04\\ 
\hline $\alpha = 0.01$&0&-0.01&-0.02\\
\hline $\alpha = 0.1$&0.09&0.09&0.06\\ 
\hline
\end{tabular}}
\caption{IQL}
\end{subtable}
\begin{subtable}{\linewidth}\centering
{\begin{tabular}{|c|ccc|}
\hline
&Sigmoid&Rank&Constant\\
\hline$\alpha = 0.001$&0.01&0.01&0\\ 
\hline $\alpha = 0.01$&0.01&0.01&0.01\\
\hline $\alpha = 0.1$&0.06&0.09&0.06\\ 
\hline
\end{tabular}}
\caption{TD3\_BC}
\end{subtable}
 \end{table}
{
The selected $\alpha$'s are reported in 
Table~\ref{tab:selected-alpha}:}
\begin{table}[H]
    \centering
    \caption{Selected $\alpha$'s}
    \label{tab:selected-alpha}
    \vskip 0.15in
    \begin{tabular}{cccc}
         &Sigmoid&Rank&Constant  \\ \hline
         ATAC& 0.01&0.01&0.1 \\ \hline
         CQL& 0.1&0.1&0.1 \\ \hline
         IQL& 0.1&0.1&0.1 \\ \hline
         TD3\_BC& 0.01&0.1&0.1 \\ \hline
    \end{tabular}
\end{table}

\subsection{Data Collection for Meta-World}
\label{sec:ap-mw-collet}
We collect data for Meta-World tasks using normalized rewards with goal-oriented stopping.  
Specifically, given that the original rewards of Meta-World are in $[0,10]$, we shift them by $-10$ and divide by $10$, so that the normalized rewards take values in $[-1,0]$. 
Then, we use the hand scripted policy given in Meta-World with different Gaussian noise levels in $\curly{0.1, 0.5, 1}$ to collect 100 trajectory for each task. 
In this process, a trajectory ends if \begin{inlineenum}
    \item it reaches the max length of a trajectory ($150$);
    \item it finishes the goal. 
\end{inlineenum}
Note that we follow the same rule when testing the performance of a learned policy.

\subsection{Base Performance and Standard Deviations for Meta-World Datasets}
\label{sec:ap-mw-std}
We provide the performance of base offline RL methods and standard deviations in Table~\ref{tab:mw-simple} and \ref{tab:mw-hard}.
We also consider behavior cloning (BC) as as a baseline, and also a representative for imitation learning and inverse reinforcement learning methods~\citep{geng2020deep,geng2023data,fu2017learning}.
Since the behavior policy is the scripted policy with Gaussian noise, BC can effective recover the scripted policy and thus is especially competitive.

\begin{table}[H] 
\centering 
\caption{Relative normalized score improvement for \algo and normalized scores of base methods on MW datasets (part I)}
\label{tab:mw-simple}
\vskip 0.15in
 \resizebox{\textwidth}{!}{
\begin{tabular}{|c|ccc|c|c|}
\hline
{} &         Sigmoid &         Rank &        Constant &               ATAC &          BC \\
\hline
button-press-v2--noise0.1    &   $-0.0\pm0.03$ &   $0.01\pm0.01$ &   $\bm{0.01\pm0.01}$ &    $-58.12\pm0.45$ &  -58.79 \\
button-press-v2--noise0.5    &   $-0.0\pm0.03$ &  $-0.03\pm0.05$ &   $\bm{0.01\pm0.04}$ &    $-64.16\pm0.93$ &  -59.58 \\
button-press-v2--noise1      &  $-0.05\pm0.07$ &   $\bm{-0.0\pm0.04}$ &  $-0.01\pm0.04$ &    $-62.72\pm3.84$ &  -61.24 \\
push-back-v2--noise0.1       &    $0.1\pm0.03$ &   $0.04\pm0.07$ &   $\bm{0.15\pm0.06}$ &   $-87.12\pm13.89$ & -104.85 \\
push-back-v2--noise0.5       &   $0.11\pm0.07$ &   $\bm{0.19\pm0.05}$ &   $0.17\pm0.02$ &  $-133.28\pm14.24$ & -134.63 \\
push-back-v2--noise1         &   $0.04\pm0.05$ &   $0.02\pm0.09$ &   $\bm{0.06\pm0.07}$ &  $-141.29\pm13.87$ & -143.63 \\
reach-v2--noise0.1           &   $-0.0\pm0.06$ &   $-0.0\pm0.05$ &    $\bm{0.0\pm0.07}$ &    $-17.19\pm0.76$ &  -18.31 \\
reach-v2--noise0.5           &   $0.37\pm0.07$ &   $\bm{0.46\pm0.06}$ &   $0.43\pm0.08$ &    $-33.91\pm5.62$ &  -24.57 \\
reach-v2--noise1             &   $0.39\pm0.05$ &   $0.35\pm0.24$ &    $\bm{0.5\pm0.05}$ &    $-46.58\pm3.98$ &  -43.79 \\
\hline
Average Relative Improvement &   $0.11$ &   $0.11$ &   $0.14$ &      - &  -     \\
\hline
\hline
{} &         Sigmoid &         Rank &        Constant &                CQL &      BC \\
\hline
button-press-v2--noise0.1    &    $0.0\pm0.03$ &   $0.02\pm0.05$ &   $\bm{0.03\pm0.03}$ &    $-59.67\pm1.69$ &  -58.79 \\
button-press-v2--noise0.5    &  $-0.04\pm0.05$ &  $-0.02\pm0.01$ &   $\bm{-0.02\pm0.0}$ &    $-58.48\pm0.85$ &  -59.58 \\
button-press-v2--noise1      &  $-0.08\pm0.12$ &   $\bm{0.08\pm0.02}$ &   $0.04\pm0.11$ &   $-67.18\pm14.25$ &  -61.24 \\
push-back-v2--noise0.1       &  $-0.07\pm0.07$ &   $0.02\pm0.05$ &   $\bm{0.02\pm0.02}$ &    $-81.78\pm2.88$ & -104.85 \\
push-back-v2--noise0.5       &   $\bm{0.25\pm0.12}$ &    $0.22\pm0.1$ &   $0.19\pm0.12$ &   $-124.64\pm2.45$ & -134.63 \\
push-back-v2--noise1         &   $0.06\pm0.13$ &   $\bm{0.11\pm0.14}$ &    $0.1\pm0.05$ &   $-133.66\pm8.17$ & -143.63 \\
reach-v2--noise0.1           &  $-0.04\pm0.17$ &  $\bm{-0.01\pm0.05}$ &  $-0.04\pm0.02$ &     $-18.5\pm2.52$ &  -18.31 \\
reach-v2--noise0.5           &   $\bm{0.51\pm0.05}$ &   $0.18\pm0.53$ &    $0.4\pm0.27$ &   $-41.02\pm18.79$ &  -24.57 \\
reach-v2--noise1             &    $0.8\pm0.02$ &   $0.35\pm0.15$ &   $\bm{0.81\pm0.03}$ &  $-107.14\pm32.75$ &  -43.79 \\
\hline
Average Relative Improvement &   $0.16$ &    $0.1$ &   $0.17$ &     - &  - \\
\hline
\hline
{} &         Sigmoid &         Rank &        Constant &               IQL &      BC \\
\hline
button-press-v2--noise0.1    &   $0.02\pm0.02$ &   $0.03\pm0.01$ &   $\bm{0.03\pm0.01}$ &   $-59.87\pm1.32$ &  -58.79 \\
button-press-v2--noise0.5    &  $-0.01\pm0.05$ &  $\bm{-0.01\pm0.02}$ &  $-0.14\pm0.13$ &   $-60.68\pm0.29$ &  -59.58 \\
button-press-v2--noise1      &  $-0.09\pm0.15$ &   $\bm{0.07\pm0.01}$ &   $0.04\pm0.03$ &   $-73.96\pm5.09$ &  -61.24 \\
push-back-v2--noise0.1       &   $0.01\pm0.04$ &   $\bm{0.04\pm0.02}$ &   $0.03\pm0.01$ &   $-83.79\pm5.45$ & -104.85 \\
push-back-v2--noise0.5       &   $0.02\pm0.06$ &   $\bm{0.04\pm0.05}$ &  $-0.03\pm0.05$ &  $-124.86\pm7.45$ & -134.63 \\
push-back-v2--noise1         &    $0.0\pm0.01$ &   $\bm{0.03\pm0.04}$ &   $-0.0\pm0.02$ &  $-143.26\pm6.31$ & -143.63 \\
reach-v2--noise0.1           &  $\bm{-0.03\pm0.03}$ &  $-0.06\pm0.03$ &  $-0.04\pm0.07$ &   $-17.37\pm0.87$ &  -18.31 \\
reach-v2--noise0.5           &   $\bm{0.05\pm0.13}$ &  $-0.02\pm0.03$ &  $-0.02\pm0.24$ &   $-31.51\pm9.39$ &  -24.57 \\
reach-v2--noise1             &  $-0.12\pm0.42$ &   $\bm{0.05\pm0.09}$ &   $0.01\pm0.37$ &    $-42.23\pm7.4$ &  -43.79 \\
\hline
Average Relative Improvement &   $-0.02$ &   $0.02$ &   $-0.01$ &   - &  - \\
\hline
\hline
{} &         Sigmoid &         Rank &        Constant &            TD3+BC &      BC \\
\hline
button-press-v2--noise0.1    &    $\bm{0.25\pm0.0}$ &    $0.23\pm0.0$ &   $0.23\pm0.02$ &  $-77.68\pm22.14$ &  -58.79 \\
button-press-v2--noise0.5    &  $\bm{-0.04\pm0.05}$ &  $-0.04\pm0.06$ &  $-0.09\pm0.07$ &   $-59.54\pm1.68$ &  -59.58 \\
button-press-v2--noise1      &   $0.02\pm0.07$ &   $\bm{0.12\pm0.23}$ &    $0.1\pm0.12$ &   $-81.03\pm5.79$ &  -61.24 \\
push-back-v2--noise0.1       &   $0.08\pm0.01$ &  $-0.01\pm0.02$ &   $\bm{0.05\pm0.04}$ &   $-85.44\pm5.05$ & -104.85 \\
push-back-v2--noise0.5       &   $\bm{0.07\pm0.07}$ &   $0.05\pm0.14$ &    $0.05\pm0.1$ &  $-127.95\pm9.82$ & -134.63 \\
push-back-v2--noise1         &   $\bm{0.01\pm0.04}$ &   $0.01\pm0.06$ &  $-0.04\pm0.02$ &  $-139.72\pm15.6$ & -143.63 \\
reach-v2--noise0.1           &  $-0.06\pm0.04$ &  $-0.02\pm0.02$ &   $\bm{-0.0\pm0.02}$ &   $-17.11\pm1.28$ &  -18.31 \\
reach-v2--noise0.5           &   $0.12\pm0.19$ &   $\bm{0.25\pm0.14}$ &   $0.15\pm0.18$ &   $-40.69\pm9.66$ &  -24.57 \\
reach-v2--noise1             &   $\bm{0.13\pm0.18}$ &   $0.04\pm0.12$ &    $0.1\pm0.11$ &   $-65.48\pm2.85$ &  -43.79 \\
\hline
Average Relative Improvement &   $0.07$ &   $0.07$ &   $0.06$ &  - &  -\\
\hline
\end{tabular}}
\end{table}

\begin{table}[H] 
\centering
\caption{Relative normalized score improvement for \algo and normalized scores of base methods on MW datasets (part II)}
\label{tab:mw-hard}
\vskip 0.15in
  \resizebox{\textwidth}{!}{
\begin{tabular}{|c|ccc|c|c|}
\hline
{} &         Sigmoid &         Rank &        Constant &              ATAC &      BC \\
\hline
assembly-v2--noise0.1              &  $-0.01\pm0.03$ &   $\bm{0.01\pm0.04}$ &   $-0.0\pm0.08$ &   $-70.22\pm4.61$ &  -71.76 \\
assembly-v2--noise0.5              &   $0.01\pm0.02$ &   $0.03\pm0.02$ &   $\bm{0.08\pm0.0}$ &  $-135.77\pm1.58$ & -129.53 \\
assembly-v2--noise1                &  $-0.01\pm0.01$ &   $0.02\pm0.01$ &   $\bm{0.04\pm0.01}$ &  $-140.12\pm2.45$ & -138.63 \\
handle-press-side-v2--noise0.1     &  $-0.03\pm0.01$ &  $\bm{-0.02\pm0.02}$ &  $-0.03\pm0.03$ &   $-32.38\pm0.64$ &  -36.28 \\
handle-press-side-v2--noise0.5     &    $0.0\pm0.03$ &   $\bm{0.02\pm0.03}$ &  $-0.01\pm0.03$ &   $-33.04\pm0.96$ &   -34.3 \\
handle-press-side-v2--noise1       &   $\bm{0.04\pm0.12}$ &   $-0.02\pm0.1$ &  $-0.09\pm0.06$ &    $-35.33\pm3.7$ &  -35.72 \\
plate-slide-back-side-v2--noise0.1 &   $-0.0\pm0.04$ &  $-0.03\pm0.02$ &   $\bm{0.02\pm0.05}$ &   $-37.07\pm0.87$ &  -37.75 \\
plate-slide-back-side-v2--noise0.5 &   $0.07\pm0.41$ &   $\bm{0.17\pm0.29}$ &   $0.15\pm0.02$ &   $-68.9\pm10.92$ &  -65.47 \\
plate-slide-back-side-v2--noise1   &    $0.17\pm0.3$ &   $0.43\pm0.02$ &   $\bm{0.44\pm0.11}$ &   $-97.9\pm25.46$ & -128.28 \\
\hline
Average Relative Improvement       &   $0.03$ &   $0.07$ &   $0.07$ &    - &  - \\
\hline
\hline
{} &        Sigmoid &         Rank &        Constant &               CQL &      BC \\
\hline
assembly-v2--noise0.1              &  $0.03\pm0.06$ &   $0.06\pm0.17$ &   $\bm{0.06\pm0.07}$ &   $-76.29\pm9.41$ &  -71.76 \\
assembly-v2--noise0.5              &  $-0.0\pm0.02$ &   $\bm{0.01\pm0.04}$ &  $-0.01\pm0.02$ &  $-131.35\pm2.62$ & -129.53 \\
assembly-v2--noise1                &  $-0.0\pm0.03$ &   $-0.0\pm0.01$ &    $\bm{0.0\pm0.02}$ &  $-138.94\pm0.97$ & -138.63 \\
handle-press-side-v2--noise0.1     &  $\bm{0.11\pm0.07}$ &  $-0.02\pm0.07$ &   $0.02\pm0.19$ &   $-34.47\pm4.54$ &  -36.28 \\
handle-press-side-v2--noise0.5     &  $0.32\pm0.11$ &   $0.38\pm0.11$ &   $\bm{0.46\pm0.01}$ &  $-70.96\pm29.39$ &   -34.3 \\
handle-press-side-v2--noise1       &  $0.15\pm0.12$ &    $0.13\pm0.2$ &    $\bm{0.4\pm0.05}$ &   $-55.3\pm33.42$ &  -35.72 \\
plate-slide-back-side-v2--noise0.1 &  $\bm{0.05\pm0.03}$ &   $-0.0\pm0.11$ &   $0.03\pm0.11$ &   $-36.31\pm2.33$ &  -37.75 \\
plate-slide-back-side-v2--noise0.5 &  $\bm{0.03\pm0.09}$ &  $-0.09\pm0.47$ &   $-0.13\pm0.5$ &   $-36.72\pm2.56$ &  -65.47 \\
plate-slide-back-side-v2--noise1   &  $0.36\pm0.31$ &   $\bm{0.37\pm0.12}$ &   $0.34\pm0.18$ &  $-67.63\pm20.56$ & -128.28 \\
\hline
Average Relative Improvement       &  $0.12$ &   $0.09$ &   $0.13$ &   - &  - \\
\hline
\hline
{} &         Sigmoid &         Rank &        Constant &               IQL &      BC \\
\hline
assembly-v2--noise0.1              &   $\bm{0.02\pm0.07}$ &  $-0.01\pm0.04$ &   $-0.0\pm0.04$ &   $-72.06\pm2.34$ &  -71.76 \\
assembly-v2--noise0.5              &  $-0.01\pm0.01$ &  $-0.01\pm0.02$ &   $\bm{0.01\pm0.04}$ &  $-122.97\pm6.12$ & -129.53 \\
ssembly-v2--noise1                &   $-0.01\pm0.0$ &  $-0.01\pm0.03$ &    $\bm{0.0\pm0.03}$ &  $-129.01\pm1.62$ & -138.63  \\
handle-press-side-v2--noise0.1     &  $\bm{-0.05\pm0.01}$ &  $-0.05\pm0.01$ &  $-0.05\pm0.02$ &    $-34.1\pm1.55$ &  -36.28 \\
handle-press-side-v2--noise0.5     &   $0.06\pm0.06$ &   $0.05\pm0.07$ &   $\bm{0.07\pm0.04}$ &   $-37.28\pm0.25$ &   -34.3 \\
handle-press-side-v2--noise1       &   $-0.0\pm0.11$ &   $\bm{0.03\pm0.03}$ &   $0.02\pm0.07$ &   $-46.31\pm3.11$ &  -35.72 \\
plate-slide-back-side-v2--noise0.1 &   $\bm{0.01\pm0.02}$ &  $-0.01\pm0.01$ &   $0.01\pm0.03$ &    $-34.4\pm1.23$ &  -37.75 \\
plate-slide-back-side-v2--noise0.5 &    $\bm{0.07\pm0.1}$ &   $0.04\pm0.08$ &   $0.03\pm0.08$ &   $-34.03\pm1.83$ &  -65.47 \\
plate-slide-back-side-v2--noise1   &   $\bm{0.05\pm0.18}$ &   $0.01\pm0.29$ &  $-0.04\pm0.35$ &  $-43.95\pm10.29$ & -128.28 \\
\hline
Average Relative Improvement       &   $0.01$ &   $0.01$ &   $0.01$ &   - &  - \\
\hline
\hline
{} &         Sigmoid &         Rank &        Constant &            TD3+BC &      BC \\
\hline
assembly-v2--noise0.1              &   $0.01\pm0.02$ &   $\bm{0.02\pm0.03}$ &  $-0.01\pm0.03$ &   $-72.44\pm6.03$ &  -71.76 \\
assembly-v2--noise0.5              &    $0.0\pm0.02$ &   $\bm{0.02\pm0.02}$ &   $0.02\pm0.04$ &  $-131.62\pm5.25$ & -129.53 \\
assembly-v2--noise1                &     $0.0\pm0.0$ &   $\bm{0.01\pm0.01}$ &    $0.0\pm0.01$ &    $-138.0\pm1.0$ & -138.63 \\
handle-press-side-v2--noise0.1     &  $-0.07\pm0.02$ &  $-0.05\pm0.02$ &  $\bm{-0.05\pm0.02}$ &    $-35.0\pm2.14$ &  -36.28 \\
handle-press-side-v2--noise0.5     &   $0.08\pm0.04$ &   $\bm{0.08\pm0.03}$ &   $0.07\pm0.05$ &   $-40.24\pm1.06$ &   -34.3 \\
handle-press-side-v2--noise1       &   $0.08\pm0.08$ &   $0.06\pm0.08$ &   $\bm{0.13\pm0.17}$ &   $-52.22\pm5.29$ &  -35.72 \\
plate-slide-back-side-v2--noise0.1 &   $\bm{0.16\pm0.06}$ &   $0.09\pm0.05$ &   $0.03\pm0.05$ &   $-39.31\pm1.52$ &  -37.75 \\
plate-slide-back-side-v2--noise0.5 &   $\bm{0.17\pm0.24}$ &   $-0.1\pm0.33$ &  $-0.07\pm0.32$ &   $-40.17\pm1.64$ &  -65.47 \\
plate-slide-back-side-v2--noise1   &   $0.19\pm0.35$ &   $\bm{0.35\pm0.28}$ &   $0.07\pm0.24$ &  $-91.74\pm31.24$ & -128.28 \\
\hline
Average Relative Improvement       &   $0.07$ &   $0.05$ &    $0.02$ &   - &  - \\
\hline
\end{tabular}}

\end{table}

\subsection{Extended Experiments}
In this section, we study the performance of \algo in various of scenarios. 
\subsubsection{\algo on Expert Data}
\label{sec:expert}
We study the performance of \algo on expert collected data. 
Specifically, we conduct experiments of HUBL with TD3+BC on expert datasets in D4RL.
The results are reported in Table~\ref{tab:expert}. Note that HUBL is able to provide performance improvements in most of the cases.
But the room for improvement on such datasets is small – for any offline learning technique. Therefore, due to finite sample error, HUBL may occasionally hurt performance empirically
\begin{table}[H]
\centering
{\begin{tabular}{|c|cccc|}
\hline
&Constant&Rank&Sigmoid&Base Performance\\
\hline halfcheetah-expert-v2	&1\%&0\%&1\%&97.09
\\ 
\hline hopper-expert-v2	&3\%&3\%&3\%&108.46
\\
\hline walker2d-expert-v2&0\%&1\%&0\%&110.14\\ 
\hline
\end{tabular}}
\caption{Relative improvement of \algo for TD3+BC on expert data}
\label{tab:expert}
\end{table}

\subsubsection{\algo with Online RL Methods}
\label{sec:ddpg}
We provide experiments where we combine HUBL with DDPG on D4RL datasets, without any additional designs for the offline RL setting.
The results are provided in Table~\ref{tab:ddpg}.

\begin{table}[H]
\centering
{\begin{tabular}{c c}
\hline
Performance of DDPG	&13.43\\
\hline
Relative improvement of HUBL with sigmoid		&53\%
\\ 
\hline Relative improvement of HUBL with constant	&74\%
\\
\hline Relative improvement of HUBL with portion	&90\%\\ 
\hline
\end{tabular}}
\caption{Relative improvement of \algo for DDPG on D4RL datasets}
\label{tab:ddpg}
\end{table}

Note that HUBL is able to significantly improve DDPG’s performance in relative terms. However, the absolute performance of DDPG+HuBL is much worse than other methods we considered in the main paper. 
This is expected, as HUBL is designed to augment offline RL algorithms, as opposed to making a general online RL algorithm (like DDPG) work also offline.

To see this more clearly, we consider a toy example with three actions $a_1$, $a_2$, $a_3$ 
 at state $s_0$. Suppose that 
 and $a_1$ and $a_2$
 are all out-of-support. As a result, without additional designs (such as pessimism) for offline RL, the policy may take $a_1$  and $a_2$  
 regardless how we set the heuristic, since the heuristic here would only affect the value for in-support actions and not the value for out-of-support actions. Thus, HUBL cannot turn an online RL algorithm to an offline RL algorithm; it can only enhance an algorithm that is designed for offline RL already.

\subsubsection{\algo in Stochastic Environments}
\label{sec:stochastic}
To test the performance of \algo in stochastic environments, we construct a random version of walker2d-v2 by adding $\beta \epsilon$
 to the action passed to the environment, where $\epsilon$
 is a uniform random variable taking values in $[-1,1]$, and $\beta$
 is a scalar controlling the amount of randomness. Note that we keep the scale of noise in $[-1,1]$, and the action variable itself is normalized to $[-1,1]$ in D4RL. With the new environment, we regenerate the data using the original walker2d-meidum-v2 policy and apply TD3+BC with HUBL. The results are presented in Table~\ref{tab:stochastic}.

\begin{table}[H]
\centering
{\begin{tabular}{|c|cccc|}
\hline
&Sigmoid&Rank&Constant&Base Performance\\
\hline $\beta=0.0001$	&6\%&6\%&4\%&66.99
\\ 
\hline $\beta=0.001$	&6\%&8\%&7\%&66.73
\\
\hline $\beta=0.01$&4\%&4\%&6\%&64.23\\ 
\hline $\beta=1$&4\%&-4\%&5\%&28.09
\\ 
\hline
\end{tabular}}
\caption{Relative improvement of \algo for TD3+BC on stochastic walker2d-meidum-v2}
\label{tab:stochastic}
\end{table}

The base performance decreases as the noise increases, as expected. However, HUBL is still able to provide some performance improvement under this stochastic setting. We also notice that the constant labeling function is more robust to the noise. This makes sense as the constant function is not sensitive to the heuristic estimation. But if there is more noise, the performance of HUBL will further degenerate, which is a limitation of HUBL. 

\subsection{Extended Results for Absolute Improvement }
\label{sec:ap-absolute-results}
We report the experiment results in absolute improvement of normalized score  for D4RL and Meta-World. 

\begin{table}[h] 
\centering
\caption{Normalized score improvement for \algo and normalized scores of base methods on D4RL datasets. }
\label{tab:d4rl-absolute}
    \vskip 0.15in
 \resizebox{\textwidth}{!}{
\begin{tabular}{|c|ccc|c|c|}
\hline
{} &         Sigmoid &         Rank &         Constant &             ATAC &      BC \\
\hline
halfcheetah-medium-expert-v2 &   $\bm{2.13\pm0.35}$ &   $1.55\pm0.96$ &    $1.76\pm0.14$ &    $92.5\pm0.73$ &   45.83 \\
halfcheetah-medium-replay-v2 &   $0.43\pm0.38$ &   $0.96\pm0.27$ &    $\bm{1.04\pm1.01}$ &   $46.89\pm0.25$ &   37.73 \\
halfcheetah-medium-v2        &  $-0.05\pm0.71$ &   $0.16\pm0.25$ &    $\bm{0.85\pm0.35}$ &   $52.54\pm0.63$ &   42.92 \\
hopper-medium-expert-v2      &   $0.36\pm0.03$ &   $\bm{0.44\pm0.26}$ &     $0.27\pm0.1$ &  $111.04\pm0.13$ &   57.51 \\
hopper-medium-replay-v2      &   $0.01\pm0.48$ &   $0.21\pm0.77$ &    $\bm{0.26\pm0.32}$ &  $101.23\pm0.59$ &   32.22 \\
hopper-medium-v2             &   $0.44\pm3.28$ &   $\bm{1.57\pm6.09}$ &  $-2.35\pm12.85$ &   $85.59\pm4.76$ &   53.81 \\
walker2d-medium-expert-v2    &    $0.08\pm0.4$ &    $\bm{1.72\pm0.5}$ &     $0.26\pm0.6$ &   $111.99\pm0.8$ &   105.3 \\
walker2d-medium-replay-v2    &   $2.76\pm2.23$ &   $\bm{7.15\pm0.31}$ &    $4.38\pm6.53$ &    $84.22\pm2.7$ &   22.78 \\
walker2d-medium-v2           &     $0.6\pm0.5$ &  $-1.95\pm4.62$ &    $\bm{1.35\pm0.17}$ &    $87.2\pm0.67$ &   64.18 \\
\hline
Average Improvement &   $0.75$ &   $1.31$ &    $0.87$ &   - &  - \\
\hline
\hline
{} &          Sigmoid &          Rank &         Constant &              CQL &      BC \\
\hline
halfcheetah-medium-expert-v2 &    $1.55\pm1.75$ &    $\bm{2.66\pm4.77}$ &  $-1.49\pm13.59$ &   $81.96\pm9.93$ &   45.83 \\
halfcheetah-medium-replay-v2 &    $4.14\pm0.22$ &    $\bm{7.19\pm0.73}$ &     $5.26\pm0.4$ &   $40.98\pm0.81$ &   37.73 \\
halfcheetah-medium-v2        &   $-5.12\pm0.43$ &   $\bm{-0.86\pm0.72}$ &   $-1.35\pm0.68$ &   $51.05\pm0.83$ &   42.92 \\
hopper-medium-expert-v2      &   $\bm{32.62\pm1.08}$ &   $30.72\pm5.21$ &    $29.5\pm3.76$ &  $78.39\pm14.51$ &   57.51 \\
hopper-medium-replay-v2      &    $\bm{3.29\pm0.79}$ &    $0.69\pm1.31$ &    $2.09\pm0.75$ &   $97.21\pm3.15$ &   32.22 \\
hopper-medium-v2             &  $\bm{12.81\pm10.16}$ &  $10.87\pm11.68$ &   $12.01\pm4.51$ &   $70.27\pm8.77$ &   53.81 \\
walker2d-medium-expert-v2    &   $19.14\pm0.42$ &   $\bm{19.74\pm0.82}$ &   $19.36\pm0.14$ &  $89.38\pm17.54$ &   105.3 \\
walker2d-medium-replay-v2    &   $4.84\pm17.24$ &   $\bm{24.07\pm3.58}$ &    $23.4\pm2.24$ &   $63.35\pm6.22$ &   22.78 \\
walker2d-medium-v2           &    $3.62\pm3.86$ &    $\bm{4.14\pm2.23}$ &   $-2.01\pm11.8$ &   $79.41\pm1.82$ &   64.18 \\
\hline
Average Improvement &     $8.54$ &   $11.02$ &    $9.64$ &   - &  - \\
\hline
\hline
{} &         Sigmoid &          Rank &          Constant &              IQL &      BC \\
\hline
halfcheetah-medium-expert-v2 &     $\bm{0.2\pm0.9}$ &   $-0.28\pm0.19$ &     $-4.59\pm2.6$ &    $92.92\pm0.5$ &   45.83 \\
halfcheetah-medium-replay-v2 &  $-1.51\pm0.67$ &   $\bm{-0.08\pm0.83}$ &    $-1.86\pm1.21$ &   $44.35\pm0.21$ &   37.73 \\
halfcheetah-medium-v2        &  $-0.28\pm0.13$ &   $\bm{-0.06\pm0.23}$ &    $-0.68\pm0.36$ &   $49.61\pm0.16$ &   42.92 \\
hopper-medium-expert-v2      &  $-4.52\pm3.29$ &   $\bm{-2.88\pm2.08}$ &  $-10.22\pm12.89$ &  $108.13\pm2.69$ &   57.51 \\
hopper-medium-replay-v2      &  $38.66\pm4.79$ &  $31.57\pm13.41$ &   $33.41\pm14.48$ &   $\bm{53.98\pm4.18}$ &   32.22 \\
hopper-medium-v2             &   $\bm{1.78\pm2.05}$ &    $1.27\pm1.69$ &    $-0.68\pm8.99$ &   $61.03\pm3.27$ &   53.81 \\
walker2d-medium-expert-v2    &   $\bm{1.53\pm0.31}$ &    $1.06\pm1.01$ &     $0.99\pm0.28$ &   $109.59\pm0.2$ &   105.3 \\
walker2d-medium-replay-v2    &  $10.78\pm9.11$ &   $\bm{15.49\pm1.61}$ &      $9.2\pm3.04$ &   $73.57\pm2.64$ &   22.78 \\
walker2d-medium-v2           &   $2.17\pm1.46$ &     $\bm{2.54\pm0.9}$ &      $3.0\pm0.71$ &   $81.68\pm2.48$ &   64.18 \\
\hline
Average Improvement &   $5.42$ &     $5.4$ &     $3.17$ & -&  - \\
\hline
\hline
{} &          Sigmoid &           Rank &        Constant &           TD3+BC &      BC \\
\hline
halfcheetah-medium-expert-v2 &   $\bm{-1.52\pm2.92}$ &    $-2.01\pm2.51$ &  $-1.74\pm2.29$ &   $94.82\pm0.33$ &   45.83 \\
halfcheetah-medium-replay-v2 &   $-5.48\pm1.47$ &    $-4.41\pm8.83$ &  $\bm{-0.24\pm0.82}$ &   $54.38\pm1.24$ &   37.73 \\
halfcheetah-medium-v2        &    $-4.28\pm1.5$ &    $-2.85\pm0.84$ &  $\bm{-0.76\pm0.94}$ &   $64.63\pm2.15$ &   42.92 \\
hopper-medium-expert-v2      &   $-5.8\pm13.25$ &  $-15.45\pm19.55$ &   $\bm{2.61\pm4.79}$ &  $98.79\pm12.57$ &   57.51 \\
hopper-medium-replay-v2      &    $13.36\pm6.4$ &    $17.23\pm2.15$ &   $\bm{19.7\pm2.13}$ &  $81.43\pm27.77$ &   32.22 \\
hopper-medium-v2             &  $30.06\pm10.23$ &    $\bm{41.58\pm1.11}$ &  $35.2\pm11.11$ &   $59.12\pm3.39$ &   53.81 \\
walker2d-medium-expert-v2    &    $0.36\pm0.84$ &     $\bm{0.92\pm0.95}$ &   $0.53\pm0.35$ &  $111.87\pm0.17$ &   105.3 \\
walker2d-medium-replay-v2    &     $\bm{9.85\pm3.1}$ &     $7.83\pm5.66$ &   $3.86\pm1.91$ &   $84.95\pm1.85$ &   22.78 \\
walker2d-medium-v2           &   $-0.75\pm2.86$ &     $\bm{6.81\pm1.13}$ &   $0.37\pm5.76$ &   $84.09\pm1.94$ &   64.18 \\
\hline
Average Improvement &    $4.23$ &     $5.76$ &   $6.86$ &   - &  - \\
\hline
\end{tabular}}
\centering
\end{table}

\begin{table}[H] 
\centering
\caption{Normalized score improvement for \algo and normalized scores of base methods on MW datasets (part I) }
\label{tab:mw-simple-absolute}
    \vskip 0.15in
 \resizebox{\textwidth}{!}{
\begin{tabular}{|c|ccc|c|c|}
\hline
{} &         Sigmoid &          Rank &        Constant &               ATAC &      BC \\
\hline
button-press-v2--noise0.1    &  $-0.09\pm1.76$ &    $\bm{0.43\pm0.62}$ &   $0.41\pm0.82$ &    $-58.12\pm0.45$ &  -58.79 \\
button-press-v2--noise0.5    &   $-0.21\pm1.8$ &    $-2.22\pm3.5$ &   $\bm{0.86\pm2.75}$ &    $-64.16\pm0.93$ &  -59.58 \\
button-press-v2--noise1      &  $-3.42\pm4.41$ &   $\bm{-0.23\pm2.71}$ &  $-0.92\pm2.28$ &    $-62.72\pm3.84$ &  -61.24 \\
push-back-v2--noise0.1       &   $8.81\pm2.64$ &    $3.59\pm6.53$ &  $\bm{12.75\pm5.12}$ &   $-87.12\pm13.89$ & -104.85 \\
push-back-v2--noise0.5       &  $14.47\pm8.88$ &   $\bm{25.88\pm6.55}$ &   $22.0\pm3.23$ &  $-133.28\pm14.24$ & -134.63 \\
push-back-v2--noise1         &   $5.38\pm7.51$ &   $3.28\pm12.83$ &   $\bm{7.94\pm9.56}$ &  $-141.29\pm13.87$ & -143.63 \\
reach-v2--noise0.1           &  $-0.03\pm1.08$ &    $-0.04\pm0.9$ &    $\bm{0.0\pm1.14}$ &    $-17.19\pm0.76$ &  -18.31 \\
reach-v2--noise0.5           &  $12.53\pm2.27$ &     $\bm{15.6\pm1.9}$ &  $14.49\pm2.68$ &    $-33.91\pm5.62$ &  -24.57 \\
reach-v2--noise1             &  $\bm{18.22\pm2.13}$ &  $16.14\pm11.32$ &  $23.09\pm2.54$ &    $-46.58\pm3.98$ &  -43.79 \\
\hline
Average Improvement &   $6.18$ &    $6.94$ &   $8.96$ &      - &  - \\\hline
\hline
{} &         Sigmoid &          Rank &         Constant &                CQL &      BC \\
\hline
button-press-v2--noise0.1    &   $0.23\pm1.96$ &    $1.32\pm2.71$ &    $\bm{1.51\pm1.87}$ &    $-59.67\pm1.69$ &  -58.79 \\
button-press-v2--noise0.5    &  $-2.29\pm2.67$ &   $\bm{-1.18\pm0.55}$ &   $-1.37\pm0.19$ &    $-58.48\pm0.85$ &  -59.58 \\
button-press-v2--noise1      &  $-5.37\pm7.82$ &    $\bm{5.46\pm1.68}$ &     $2.71\pm7.2$ &   $-67.18\pm14.25$ &  -61.24 \\
push-back-v2--noise0.1       &  $-5.54\pm5.79$ &    $\bm{1.54\pm3.86}$ &    $1.33\pm1.89$ &    $-81.78\pm2.88$ & -104.85 \\
push-back-v2--noise0.5       &  $\bm{30.82\pm14.6}$ &  $27.34\pm12.97$ &  $23.15\pm14.97$ &   $-124.64\pm2.45$ & -134.63 \\
push-back-v2--noise1         &  $7.69\pm16.81$ &  $\bm{14.41\pm19.38}$ &   $13.58\pm7.12$ &   $-133.66\pm8.17$ & -143.63 \\
reach-v2--noise0.1           &   $-0.77\pm3.1$ &   $\bm{-0.22\pm0.98}$ &   $-0.67\pm0.41$ &     $-18.5\pm2.52$ &  -18.31 \\
reach-v2--noise0.5           &  $\bm{21.05\pm1.89}$ &   $7.28\pm21.76$ &  $16.49\pm11.16$ &   $-41.02\pm18.79$ &  -24.57 \\
reach-v2--noise1             &  $86.19\pm2.49$ &  $37.15\pm15.77$ &   $\bm{86.65\pm3.53}$ &  $-107.14\pm32.75$ &  -43.79 \\
\hline
Average Improvement &  $14.67$ &   $10.34$ &   $15.93$ &-&  - \\
\hline
\hline
{} &         Sigmoid &         Rank &        Constant &               IQL &      bc \\
\hline
button-press-v2--noise0.1    &    $0.53\pm0.6$ &   $\bm{1.87\pm0.75}$ &   $1.83\pm0.58$ &   $-59.87\pm1.32$ &  -58.79 \\
button-press-v2--noise0.5    &  $-1.09\pm1.25$ &  $\bm{-0.63\pm1.16}$ &  $-8.45\pm7.68$ &   $-60.68\pm0.29$ &  -59.58 \\
button-press-v2--noise1      &  $-1.61\pm8.16$ &   $\bm{5.17\pm0.72}$ &   $3.19\pm2.58$ &   $-73.96\pm5.09$ &  -61.24 \\
push-back-v2--noise0.1       &   $2.93\pm3.28$ &   $\bm{3.11\pm1.89}$ &   $2.47\pm0.59$ &   $-83.79\pm5.45$ & -104.85 \\
push-back-v2--noise0.5       &   $0.39\pm11.3$ &   $\bm{4.62\pm6.24}$ &  $-4.16\pm6.09$ &  $-124.86\pm7.45$ & -134.63 \\
push-back-v2--noise1         &   $1.82\pm5.35$ &   $\bm{3.71\pm5.69}$ &   $-0.35\pm3.1$ &  $-143.26\pm6.31$ & -143.63 \\
reach-v2--noise0.1           &   $\bm{-0.55\pm0.2}$ &   $-1.0\pm0.45$ &  $-0.68\pm1.23$ &   $-17.37\pm0.87$ &  -18.31 \\
reach-v2--noise0.5           &  $-2.48\pm4.95$ &  $\bm{-0.58\pm0.95}$ &  $-0.58\pm7.66$ &   $-31.51\pm9.39$ &  -24.57 \\
reach-v2--noise1             &  $-3.69\pm5.46$ &    $\bm{2.12\pm4.0}$ &  $0.33\pm15.48$ &    $-42.23\pm7.4$ &  -43.79 \\
\hline
Average Relative Improvement &  $-0.41$ &   $2.04$ &   $-0.71$ &   - &  - \\
\hline
\hline
{} &         Sigmoid &         Rank &        Constant &            TD3+BC &      BC \\
\hline
button-press-v2--noise0.1    &  $\bm{19.12\pm0.22}$ &  $18.19\pm0.17$ &  $18.01\pm1.73$ &  $-77.68\pm22.14$ &  -58.79 \\
button-press-v2--noise0.5    &  $-2.47\pm2.73$ &  $\bm{-2.11\pm3.75}$ &  $-5.36\pm4.18$ &   $-59.54\pm1.68$ &  -59.58 \\
button-press-v2--noise1      &   $1.71\pm5.97$ &  $\bm{9.71\pm18.76}$ &    $8.4\pm9.47$ &   $-81.03\pm5.79$ &  -61.24 \\
push-back-v2--noise0.1       &   $\bm{7.18\pm0.66}$ &  $-1.06\pm1.65$ &   $4.11\pm3.08$ &   $-85.44\pm5.05$ & -104.85 \\
push-back-v2--noise0.5       &    $\bm{9.43\pm8.7}$ &    $6.1\pm17.6$ &  $6.43\pm12.98$ &  $-127.95\pm9.82$ & -134.63 \\
push-back-v2--noise1         &    $\bm{1.26\pm6.0}$ &    $1.4\pm8.05$ &  $-5.31\pm2.35$ &  $-139.72\pm15.6$ & -143.63 \\
reach-v2--noise0.1           &  $-0.99\pm0.66$ &  $-0.28\pm0.31$ &  $\bm{-0.07\pm0.38}$ &   $-17.11\pm1.28$ &  -18.31 \\
reach-v2--noise0.5           &   $5.04\pm7.92$ &   $\bm{10.2\pm5.67}$ &   $6.07\pm7.23$ &   $-40.69\pm9.66$ &  -24.57 \\
reach-v2--noise1             &   $\bm{8.4\pm12.06}$ &   $2.34\pm7.98$ &   $6.74\pm7.36$ &   $-65.48\pm2.85$ &  -43.79 \\
\hline
Average Improvement &   $5.41$ &    $4.94$ &   $4.34$ &   - &  - \\
\hline
\end{tabular}}
\centering
\end{table}

\begin{table}[H] 
\centering
\caption{Normalized score improvement for \algo and normalized scores of base methods on MW datasets (part II) }
\label{tab:mw-hard-absolute}
    \vskip 0.15in

 \resizebox{\textwidth}{!}{
\begin{tabular}{|c|ccc|c|c|}
\hline
{} &         Sigmoid &         Rank &        Constant &              ATAC &      BC \\
\hline
assembly-v2--noise0.1              &  $-0.99\pm2.28$ &     $\bm{1.0\pm2.73}$ &   $-0.23\pm5.65$ &   $-70.22\pm4.61$ &  -71.76 \\
assembly-v2--noise0.5              &   $1.05\pm2.43$ &    $3.59\pm3.02$ &   $\bm{11.13\pm6.31}$ &  $-135.77\pm1.58$ & -129.53 \\
assembly-v2--noise1                &  $-1.29\pm1.99$ &    $2.47\pm1.95$ &    $\bm{5.75\pm1.07}$ &  $-140.12\pm2.45$ & -138.63 \\
handle-press-side-v2--noise0.1     &  $-0.93\pm0.41$ &   $\bm{-0.49\pm0.64}$ &   $-0.88\pm1.06$ &   $-32.38\pm0.64$ &  -36.28 \\
handle-press-side-v2--noise0.5     &   $0.08\pm0.86$ &    $\bm{0.74\pm0.94}$ &   $-0.35\pm1.06$ &   $-33.04\pm0.96$ &   -34.3 \\
handle-press-side-v2--noise1       &    $\bm{1.25\pm4.2}$ &   $-0.73\pm3.55$ &   $-3.07\pm2.05$ &    $-35.33\pm3.7$ &  -35.72 \\
plate-slide-back-side-v2--noise0.1 &  $-0.07\pm1.44$ &   $-0.98\pm0.82$ &    $\bm{0.77\pm1.96}$ &   $-37.07\pm0.87$ &  -37.75 \\
plate-slide-back-side-v2--noise0.5 &  $5.15\pm28.23$ &  $\bm{11.72\pm20.14}$ &   $10.47\pm1.41$ &   $-68.9\pm10.92$ &  -65.47 \\
plate-slide-back-side-v2--noise1   &  $16.4\pm29.75$ &   $42.45\pm2.22$ &  $\bm{43.25\pm10.94}$ &   $-97.9\pm25.46$ & -128.28 \\
\hline
Average Relative Improvement       &   $2.29$ &     $6.64$ &     $7.43$ &    - &  - \\
\hline
\hline
{} &          Sigmoid &         Rank &         Constant &               CQL &      BC \\
\hline
assembly-v2--noise0.1              &    $2.37\pm4.62$ &    $4.2\pm12.62$ &    $\bm{4.36\pm5.55}$ &   $-76.29\pm9.41$ &  -71.76 \\
assembly-v2--noise0.5              &   $-0.57\pm2.09$ &    $\bm{1.44\pm5.55}$ &   $-1.21\pm2.87$ &  $-131.35\pm2.62$ & -129.53 \\
assembly-v2--noise1                &   $-0.01\pm3.97$ &   $-0.35\pm2.07$ &    $\bm{0.56\pm2.56}$ &  $-138.94\pm0.97$ & -138.63 \\
handle-press-side-v2--noise0.1     &    $\bm{3.79\pm2.52}$ &   $-0.53\pm2.53$ &    $0.73\pm6.56$ &   $-34.47\pm4.54$ &  -36.28 \\
handle-press-side-v2--noise0.5     &   $23.05\pm7.99$ &    $26.8\pm8.13$ &   $\bm{32.31\pm0.94}$ &  $-70.96\pm29.39$ &   -34.3 \\
handle-press-side-v2--noise1       &    $8.33\pm6.37$ &   $6.94\pm10.92$ &   $\bm{22.01\pm2.89}$ &   $-55.3\pm33.42$ &  -35.72 \\
plate-slide-back-side-v2--noise0.1 &    $\bm{1.83\pm1.21}$ &   $-0.02\pm3.82$ &    $0.95\pm3.95$ &   $-36.31\pm2.33$ &  -37.75 \\
plate-slide-back-side-v2--noise0.5 &    $\bm{0.96\pm3.25}$ &  $-3.16\pm17.37$ &  $-4.94\pm18.35$ &   $-36.72\pm2.56$ &  -65.47 \\
plate-slide-back-side-v2--noise1   &  $24.46\pm21.13$ &   $\bm{24.72\pm8.42}$ &  $23.04\pm12.37$ &  $-67.63\pm20.56$ & -128.28 \\
\hline
Average Relative Improvement       &    $7.14$ &    $6.67$ &    $8.65$ &   - &  - \\
\hline
\hline
{} &         Sigmoid &         Rank &        Constant &               IQL &      BC \\
\hline
assembly-v2--noise0.1              &   $\bm{1.18\pm4.78}$ &  $-0.49\pm3.07$ &   $-0.02\pm3.2$ &   $-72.06\pm2.34$ &  -71.76 \\
assembly-v2--noise0.5              &  $-1.49\pm1.27$ &   $-0.9\pm1.89$ &   $\bm{1.13\pm4.46}$ &  $-122.97\pm6.12$ & -129.53 \\
assembly-v2--noise1                &  $-1.31\pm0.33$ &  $-0.76\pm3.55$ &   $\bm{0.39\pm3.85}$ &  $-129.01\pm1.62$ & -138.63 \\
handle-press-side-v2--noise0.1     &   $\bm{-1.56\pm0.5}$ &  $-1.67\pm0.39$ &  $-1.66\pm0.53$ &    $-34.1\pm1.55$ &  -36.28 \\
handle-press-side-v2--noise0.5     &   $2.16\pm2.07$ &   $2.04\pm2.46$ &   $\bm{2.56\pm1.54}$ &   $-37.28\pm0.25$ &   -34.3 \\
handle-press-side-v2--noise1       &   $-0.2\pm5.06$ &   $\bm{1.33\pm1.52}$ &   $1.12\pm3.27$ &   $-46.31\pm3.11$ &  -35.72 \\
plate-slide-back-side-v2--noise0.1 &   $0.24\pm0.76$ &  $-0.18\pm0.48$ &   $\bm{0.42\pm1.12}$ &    $-34.4\pm1.23$ &  -37.75 \\
plate-slide-back-side-v2--noise0.5 &    $\bm{2.3\pm3.47}$ &    $1.3\pm2.69$ &    $0.92\pm2.7$ &   $-34.03\pm1.83$ &  -65.47 \\
plate-slide-back-side-v2--noise1   &    $\bm{2.33\pm7.7}$ &  $0.63\pm12.54$ &  $-1.8\pm15.22$ &  $-43.95\pm10.29$ & -128.28 \\
\hline
Average Relative Improvement       &   $0.41$ &   $0.14$ &   $0.34$ &   - & - \\
\hline
\hline
{} &         Sigmoid &          Rank &        Constant &            TD3+BC  &      BC \\
\hline
assembly-v2--noise0.1              &   $1.08\pm1.69$ &    $\bm{1.16\pm2.35}$ &   $-0.71\pm1.95$ &   $-72.44\pm6.03$ &  -71.76 \\
assembly-v2--noise0.5              &   $0.56\pm3.01$ &    $2.16\pm3.05$ &      $\bm{2.6\pm5.8}$ &  $-131.62\pm5.25$ & -129.53 \\
assembly-v2--noise1                &    $0.48\pm0.6$ &    $\bm{0.71\pm1.39}$ &    $0.03\pm1.92$ &    $-138.0\pm1.0$ & -138.63 \\
handle-press-side-v2--noise0.1     &  $-2.34\pm0.77$ &   $-1.91\pm0.87$ &   $\bm{-1.67\pm0.69}$ &    $-35.0\pm2.14$ &  -36.28 \\
handle-press-side-v2--noise0.5     &   $\bm{3.25\pm1.66}$ &    $3.17\pm1.04$ &    $2.97\pm1.82$ &   $-40.24\pm1.06$ &   -34.3 \\
handle-press-side-v2--noise1       &   $4.31\pm4.29$ &    $3.39\pm3.97$ &    $\bm{6.93\pm8.89}$ &   $-52.22\pm5.29$ &  -35.72 \\
plate-slide-back-side-v2--noise0.1 &    $\bm{6.11\pm2.3}$ &    $3.67\pm1.77$ &    $1.09\pm2.11$ &   $-39.31\pm1.52$ &  -37.75 \\
plate-slide-back-side-v2--noise0.5 &    $\bm{6.77\pm9.5}$ &  $-4.01\pm13.35$ &  $-2.91\pm13.02$ &   $-40.17\pm1.64$ &  -65.47 \\
plate-slide-back-side-v2--noise1   &  $17.06\pm32.1$ &  $\bm{31.78\pm25.94}$ &   $6.48\pm22.36$ &  $-91.74\pm31.24$ & -128.28 \\
\hline
Average Relative Improvement       &   $4.14$ &    $4.46$ &    $1.65$ &   - &  - \\
\hline
\end{tabular}}
\centering
\end{table}

\section{Limitations and Future Work}
In our current framework, \algo 
calculates heuristic values as Monte-Carlo returns. 
This is feasible in practical scenarios where data comes in the form of trajectories, but much of offline RL literature experiments with batch datasets that consist of disconnected transition tuples. For such datasets, \algo requires other heuristic calculation strategies. 


Empirically, we have showed \algo's utility on benchmarks where observations are full-information low-dimensional state vectors and state transitions are deterministic. While these benchmarks are standard in offline RL, evaluation in more stochastic domains where observations are partial and high-dimensional, such as robotics, will give a more complete picture of \algo's utility. So would evaluating \algo with model-based offline RL such as MOReL~\citep{kidambi2020morel}. 
Also, HUBL is not applicable to bootstrapping-free offline RL methods~\citep{schaefer2007recurrent,deisenroth2011pilco,depeweg2016learning,swazinna2021overcoming,swazinna2022comparing}.


On the theory side, our analysis focuses on state-dependent heuristics. In practice, though, heuristic estimates commonly depend on full trajectories, and our theory needs adjustments to account for this. Same goes for applying our theory to model-based offline RL such as MOReL.

For future direction, we aim to develop better heuristics and blending strategies for batch datasets and sparse-reward problems.
We also plan to apply \algo to other tasks in healthcare and finance~\citep{alaluf2022reinforcement,geng2018temporal,geng2019parathyroid,geng2019partially} where collecting data is especially expensive. 
\newpage
\appendix
\onecolumn


\end{document}